\newtheorem{theorem}{Theorem}
\newtheorem{lemma}[theorem]{Lemma}
\newtheorem{corollary}[theorem]{Corollary}
\newtheorem{proposition}[theorem]{Proposition}
\newtheorem{definition}[theorem]{Definition}
\newcommand{\cR}{\mathcal{R}}
\newcommand{\cC}{\mathcal{C}}
\renewcommand{\v}{\bm{v}}
\newcommand{\cU}{\mathcal{U}}
\newcommand{\cW}{\mathcal{W}}
\newcommand{\cT}{\mathscr{T}}
\newcommand{\cG}{\mathcal{G}}
\newcommand{\wtilde}{\widetilde}
\newcommand{\loss}{\ell}
\DeclareBoldMathCommand{\vloss}{\loss}
\DeclareBoldMathCommand{\grad}{g}
\DeclareBoldMathCommand{\fakegrad}{\mathring{\bm{g}}}
\DeclareBoldMathCommand{\e}{e}
\DeclareBoldMathCommand{\p}{p}
\DeclareBoldMathCommand{\u}{u}
\DeclareBoldMathCommand{\w}{w}
\DeclareBoldMathCommand{\x}{x}
\DeclareBoldMathCommand{\l}{l}
\DeclareBoldMathCommand{\vzero}{0}
\let\top\intercal
\newcommand{\reals}{\mathbb{R}}
\DeclareMathOperator{\E}{\mathbb{E}}
\renewcommand{\x}{\bm{x}}   
\newcommand{\y}{\bm{y}} 
\newcommand{\g}{\bm{g}}
\renewcommand{\r}{\bm{r}}
\DeclareMathOperator*{\argmin}{argmin}
\newcommand{\adabar}{$\mathsf{Ada}$-$\mathsf{BARRONS}$}
\newcommand{\G}{\mathbf{G}}
\newcommand{\cK}{\mathcal{K}}
\newcommand{\bnabla}{\bm{\nabla}}
\newcommand{\inte}{\operatorname{int}}
\newcommand{\dom}{\operatorname{dom}}
\newcommand*{\ceil}[1]{\lceil{#1}\rceil}
\newcommand{\cI}{\mathcal{I}}
\newcommand{\cA}{\mathcal{A}}
\renewcommand{\cT}{\mathcal{T}}
\newcommand{\cF}{\mathcal{F}}
\newcommand{\cL}{\mathcal{L}}
\newcommand{\cJ}{\mathcal{J}}
\newcommand{\A}{\mathsf{A}}
\newcommand{\nn}{\nonumber}
\newcommand{\inner}[2]{\langle#1,#2\rangle}
\newcommand{\norm}[1]{\left\lVert#1\right\rVert}
\newmdtheoremenv{condition}{Condition}
\newcommand{\vertiii}[1]{{\left\vert\kern-0.25ex\left\vert\kern-0.25ex\left\vert #1 
    \right\vert\kern-0.25ex\right\vert\kern-0.25ex\right\vert}}
\newcommand*\bcdot{\mathpalette\bigcdot@{.5}}
\newcommand*\bigcdot@[2]{\mathbin{\vcenter{\hbox{\scalebox{#2}{$\m@th#1\bullet$}}}}}
\newlength\myindent
\newcommand{\algcomment}[1]{\textcolor{blue!70!black}{\footnotesize{\texttt{\text{//
					#1}}}}}
\let\norm\undefined
\let\ceil\undefined
\newcommand{\cS}{\mathcal{S}}
\DeclareRobustCommand{\VAN}[3]{#2} % proper Dutch 'van/de' capitalisation
\g@addto@macro\bfseries{\boldmath}
\newcommand*\accentfontxheight[1]{%
	\fontdimen5\ifx#1\displaystyle
	\textfont
	\else\ifx#1\textstyle
	\textfont
	\else\ifx#1\scriptstyle
	\scriptfont
	\else
	\scriptscriptfont
	\fi\fi\fi3
}
\let\wtilde\undefined
\newcommand*\wtilde[1]{\mathpalette\wthelpers{#1}}
\newcommand*\wthelpers[2]{%
	\hbox{\dimen@\accentfontxheight#1%
		\accentfontxheight#11.2\dimen@
		$\m@th#1\widetilde{#2}$%
		\accentfontxheight#1\dimen@
	}%
}
\begin{document}
	
	%% format title
%	\title[Damped Online Newton Step for Portfolio Selection]{Damped Online Newton Step for Portfolio Selection}
%	
%	\coltauthor{\Name{Zakaria Mhammedi} \Email{mhammedi@mit.edu} \\
%		\Name{Sasha Rakhlin} \Email{rakhlin@mit.edu}  \\
%		\addr	Massachusetts Institute of Technology
%	}
%		\editor{?}
	
	\title{Damped Online Newton Step for Portfolio Selection}
	\author{{\bf Zakaria Mhammedi}    \\
	{\bf Alexander Rakhlin}  \\ Massachusetts Institute of Technology \\ \texttt{\{mhammedi, rakhlin\}@mit.edu}}

	\maketitle
	
	\begin{abstract}
		We revisit the classic online portfolio selection problem, where at each round a learner selects a distribution over a set of portfolios to allocate its wealth. It is known that for this problem a logarithmic regret with respect to Cover's loss is achievable using the Universal Portfolio Selection algorithm, for example. However, all existing algorithms that achieve a logarithmic regret for this problem have per-round time and space complexities that scale polynomially with the total number of rounds, making them impractical. In this paper, we build on the recent work by Haipeng et al. 2018 and present the first practical online portfolio selection algorithm with a logarithmic regret and whose per-round time and space complexities depend only logarithmically on the horizon. Behind our approach are two key technical novelties of independent interest. We first show that the Damped Online Newton steps can approximate mirror descent iterates well, even when dealing with time-varying regularizers. Second, we present a new meta-algorithm that achieves an adaptive logarithmic regret (i.e.~a logarithmic regret on any sub-interval) for mixable losses.
	\end{abstract}

	\section{Introduction}
	In this paper, we consider the problem of online portfolio selection where, at each round $t$, a learner chooses a distribution $\p_t\in \Delta_d$ over a fixed set of $d$ portfolios. Then, the environment reveals a return vector $\r_t \in \reals_{\geq 0}^d$, and the learner suffers a loss $ \ell_t(\p_t)\coloneqq - \ln \inner{\p_t}{\r_t}$. The goal of the learner is to minimize the regret $\cR_T(\u)\coloneqq\sum_{t=1}^T( \ell_t(\p_t)-\ell_t(\u))$ after $T\ge 1$ rounds, which is the difference between the cumulative loss of the learner minus that of any distribution $\u$ over portfolios. For this problem, it is known that Cover's Universal Portfolio Algorithm (UPA) \cite{cover1991universal} guarantees the optimal $O(d\ln T)$ regret bound. One implication of this is that if a distribution $\u$ has an exponential return growth rate with constant $\lambda>0$, i.e.~$\prod_{t\in[T]} \inner{\u}{\r_t} \propto e^{\lambda T}$, then the total return of UPA also has an exponential growth rate with constant at least $\Omega(\lambda/d)$. 
	
	The main shortcoming of the UPA is that the expression of its outputs involves multi-variate integrals that make its implementation impractical. One way of approximating these integrals is via log-concave sampling as done by \cite{kalai2002efficient}. The algorithm of the latter has a computational complexity of order $O(d^4 T^{15} )$, measured after $T$ rounds. Even though this computational complexity can be reduced using more modern log-concave sampling methods (see e.g.~\cite{narayanan2017efficient,bubeck2018sampling}), it remains a large polynomial of $T$, making these approaches impractical.
	
	It is possible to use other more efficient online learning algorithms for the portfolio selection problem. Algorithms such as Online Gradient Descent \cite{zinkevich2003online}, Online Newton Step \cite{hazan2007logarithmic}, and Exponentiated Gradients \cite{helmbold1998line} all have regret bounds that scale with the largest observed gradient norm $G$. One way to ensure that the gradients are bounded in the online portfolio setting is to mix the outputs of such algorithms with a small amount of uniform distribution. This approach leads to a regret bound of order $d\sqrt{T\ln d}$ in the best case (which is not logarithmic in $T$), even after optimizing for the amount of uniform distribution used. The Soft-Bayes algorithm \cite{orseau2017soft} provides a $\sqrt{d}$ improvement over this regret bound. Finally, \cite{agarwal2005efficient} showed that the Follow-the-Regularized-Leader (FTRL) algorithm achieves a regret bound of order $O(G^2 d \ln (dT))$, albeit it was conjectured by \cite{van2020open} that the dependence in $G$ (the largest gradient norm) may be merely an artifact of the analysis. Tab.~\ref{tab:rates} compares the regret bounds and computational complexities of the different algorithms mentioned here. 
	
	Among known algorithms that achieve a logarithmic regret in the online portfolio setting, \adabar{} \cite{luo2018efficient} is the best in terms of computational cost (see Tab.~\ref{tab:rates}). \adabar{} consists of I) a base algorithm that is essentially mirror descent with a log-barrier plus a quadratic regularizer with a parameter $\beta$; and II) a meta-algorithm that implements a clever restart scheme to learn the parameter $\beta$ and achieve a logarithmic regret. The algorithmic idea behind \adabar{} can be traced back to the problem of combining bandit algorithms \cite{agarwal2017corralling,wei2018more}, where the use of a non-decreasing learning rate schedule is used to extract crucial negative terms in the regret analysis of mirror descent (see \S\ref{sec:sketchit}). 
	
	The main drawback of \adabar{} is that its time [resp.~space] complexity is quadratic [resp.~linear] in the total number of rounds (see Table \ref{tab:rates}). Though \adabar{} has a substantially better computational complexity compared to approaches based on log-concave sampling, it is still not a practical algorithm when the horizon is large. The main reason for the quadratic time complexity is the restarts of \adabar, which require computing the regularized leader at each round. \cite{luo2018efficient} posed the question of whether there exists an algorithm that improves on either the regret or the computational complexity of Ada-BARRONS without hurting the other.
	
	\begin{table}
		\fontsize{10}{10}\selectfont
		\centering
		\begin{tabular}{ccccc}
			\hline
			Algorithm & Regret & Run-Time  & Space Comp. & References  \\ 
			\hline 
			Universal Portfolio & $d\ln T$ & $d^4T^{15}$ &  $dT$ & \cite{cover1991universal,kalai2002efficient} \\  
			\hline
			ONS & $Gd\ln T$ & $d^{3.5}T$ & $d$ & \cite{hazan2007logarithmic} \\
			\hline
			FTRL~ & $G^2 d\ln (dT)$ & $d^{2.5}T^2$ & $dT$ & \cite{agarwal2005efficient} \\
			\hline
			EG~ & $G\sqrt{T\ln d}$ & $dT$ & $d$ & \cite{helmbold1998line} \\
			\hline
			Soft-Bayes & $ \sqrt{dT\ln d}$ & $dT$ & $d$ & \cite{orseau2017soft}  \\
			\hline
			\adabar  & $ d^2\ln^4 T $ & $d^{2.5}T^2+d^{3.5}T $ & $d T$ & \cite{luo2018efficient} \\
			\hline 
			$\mathsf{AdaMix}+\mathsf{DONS}$	&   $ d^2\ln^5 T $ & $d^3 T\ln^2 T$ & $d \ln^2 T $  & ({\bf this work}---Thm.~\ref{thm:mainthm})\\  
			\hline 
			\hline
		\end{tabular}
		\label{tab:rates}
		\caption{Result comparison.}
	\end{table}
	
	\paragraph{Contributions.} 
	We answer the above question in the positive by presenting an online algorithm for portfolio selection with a logarithmic regret and that has near constant per-round time and space complexities. Behind our solution are two techniques of independent interest in online learning. We first show that one can use the \emph{damped Newton steps} \cite{nesterov2018lectures} to approximate the mirror descent iterates in Ada-BARRONS without sacrificing the logarithmic regret. Here, existing results due to \cite{abernethy2012interior} do not apply (due to time-varying regularizers in the mirror descent objective---see \S\ref{sec:barrons}). Even if they did, they would lead a suboptimal $O(\sqrt{T})$ regret, and so a new analysis is needed, which we provide. Using online damped Newton steps confers a $O(\sqrt{d})$ improvement in the computational cost.
	
	The second and crucial tool we use is a new meta-algorithm that achieves an adaptive \cite{hazan2007adaptive} logarithmic regret for mixable losses; that is, an algorithm that achieves a logarithmic regret on any interval $I\subseteq [T]$, whenever the losses are mixable. Thanks to a novel analysis, we show that using such a meta-algorithm removes the need for computing the regularized leader, which is required by \adabar. This further improves the time and space complexities by $\wtilde O(T)$, leading to our final algorithm that has $O(d^3 T \ln^2 T)$ and $O(d\ln^2 T)$ total time and space complexities, respectively.
	
	The techniques we develop are transferable to another prominent online learning problem; that of learning linear models with the log-loss \cite[Section 6]{rakhlin2015sequential}. 
	
	\paragraph{Outline.}
	In \S\ref{sec:prelim}, we introduce the notation and definitions we need. We also include some results on self-concordance that we require in our analysis. In \S\ref{sec:sketchit}, we describe the \adabar{} algorithm in more detail and highlight the challenges involved in the design of an efficient alternative. There, we also outline our solution and give a sketch of why it works. Finally, in \S\ref{sec:fulldetails}, we present the full details of our algorithm and its guarantee. The proofs are differed to the appendix.

	\section{Preliminaries}
	\label{sec:prelim}
	We define the set $
	\cC_{d-1} \coloneqq \{ \u \in \reals^{d-1}_{\geq 0}\colon \ \ \inner{\mathbf{1}}{\u} \leq 1 \}.$ Throughout, for any $\v \in \cC_{d-1}$, we denote
	\begin{align}
		\v' \coloneqq (1-1/T) \v + \mathbf{1}/(dT),\quad \text{and}\quad
		\bar \v \coloneqq \e_d + J^\top \v, \ \ \text{where}  \ \   J \coloneqq \begin{bmatrix}I & - \mathbf{1}\end{bmatrix}.
	\end{align}
	We may combine the notation and write $\bar \v'\coloneqq (1-1/T) \bar \v + \mathbf{1}/(dT)$ and $\v''\coloneqq (1-1/T)  \v' + \mathbf{1}/(dT)$. We will be working with Cover's loss $\ell_t$, which for a return vector $\r_t\in\reals_{\geq 0}^d$, is given by
	\begin{gather}
		\forall \u\in \cC_{d-1}, \quad	\ell_t(\u) \coloneqq - \log \inner{\r_t}{\bar \u}. \label{eq:cover} 
	\end{gather}	
	Our goal is to design an efficient algorithm whose outputs $(\u_t)$ are such that the regret
	\begin{align}
		\text{Regret}_T(\u) \coloneqq \sum_{t=1}^T (\ell_t(\u_t) - \ell_t(\u)) =  \ln \frac{\prod_{t=1}^T\inner{\r_t}{\bar \u}}{\prod_{t=1}^T\inner{\r_t}{\bar\u_t}}
	\end{align}
	against any comparator $\u\in \cC_{d-1}$ is bounded by a poly-logarithmic factor in $T$. Since the regret is invariant to the scale of $(\r_t)$, we may assume without loss of generality that $(\r_t)\subset [0,1]^d$. The next lemma (taken from \cite[Lemma 10]{luo2018efficient}), implies that a regret against a comparator $\u\in \cC_{d-1}$ is bounded by the regret against $\u'$ up to an additive factor---this will be useful throughout:
	\begin{lemma}
		\label{lem:additive}
		For any $\u \in \cC_{d-1}$ and $\u' =(1-\frac{1}{T}) \u + \frac{1}{dT}\mathbf{1}$, we have $\sum_{t=1}^T \ell_t(\u)\leq \sum_{t=1}^T \ell_t(\u')+2.$
	\end{lemma}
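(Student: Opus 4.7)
\textbf{Proof proposal for Lemma \ref{lem:additive}.}
My plan is to establish a simple pointwise inequality between $\bar{\u}'$ and $\bar{\u}$ on the simplex, promote it to an inner-product inequality using nonnegativity of $\r_t$, and sum the resulting per-round bound. The setup step is to verify the affine identity
\[
\bar{\u}' = (1 - 1/T)\,\bar{\u} + \mathbf{1}_d/(dT),
\]
working directly from $\bar{\v} = \e_d + J^\top \v$, $\u' = (1-1/T)\u + \mathbf{1}/(dT)$, and $J = [I\ -\mathbf{1}]$; the one non-trivial piece is the coordinate check $\e_d + J^\top \mathbf{1}/(dT) = (1-1/T)\,\e_d + \mathbf{1}_d/(dT)$, which follows from $J^\top \mathbf{1} = \mathbf{1}_d - d\,\e_d$.

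From this identity, $\bar{\u}' \ge (1-1/T)\,\bar{\u}$ holds coordinate-wise. Pairing both sides with $\r_t \ge 0$ preserves the inequality and yields $\langle \r_t, \bar{\u}'\rangle \ge (1-1/T)\langle \r_t, \bar{\u}\rangle$ for each $t$. Applying $-\log$ gives the per-round estimate $\ell_t(\u') - \ell_t(\u) \le -\log(1-1/T)$. Summing over $t \in [T]$ and using $-T\log(1-1/T) \le T/(T-1) \le 2$ for $T \ge 2$ (which is $-\log(1-x) \le x/(1-x)$ at $x = 1/T$) supplies the additive constant.

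I do not see a substantive technical obstacle; the proof reduces to two manipulations once the affine identity for $\bar{\u}'$ is recorded. One caveat worth flagging, however: the argument above produces $\sum_t \ell_t(\u') \le \sum_t \ell_t(\u) + 2$, i.e., the displayed inequality with the roles of $\u$ and $\u'$ swapped on the two sides. The swapped direction is precisely what is needed downstream in the paper (to transfer a regret bound obtained against the strictly interior comparator $\u'$ back to a general $\u \in \cC_{d-1}$), and it is the only direction accessible from a per-round argument: if $\bar{\u}$ sits at a vertex of $\Delta_d$ on which $\r_t$ vanishes, then $\ell_t(\u) = +\infty$ while $\ell_t(\u')$ remains finite, so the literal direction as printed cannot hold in general. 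I would therefore read the lemma with the $\u$ and $\u'$ labels interchanged on the inequality, and the main ``obstacle'' is simply confirming that this interpretation is what the downstream analysis relies on.
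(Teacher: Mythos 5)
Your proof is correct and is essentially the standard argument behind the cited Lemma 10 of \cite{luo2018efficient} (this paper does not reprove the lemma): the identity $\bar\u'=(1-1/T)\bar\u+\mathbf{1}/(dT)$ gives the coordinate-wise bound $\bar\u'\ge(1-1/T)\bar\u$, hence $\ell_t(\u')\le\ell_t(\u)+\ln\tfrac{T}{T-1}$ per round, and $T\ln\tfrac{T}{T-1}\le 2$ for $T\ge 2$ (the rest of the paper assumes $T>1$ anyway). Your reading of the direction is also right: as printed the inequality cannot hold (take $\bar\u$ supported where $\r_t$ vanishes, so $\ell_t(\u)=+\infty$ while $\ell_t(\u')\le\ln(dT)$), and the direction your argument produces, $\sum_t\ell_t(\u')\le\sum_t\ell_t(\u)+2$, is precisely what the downstream use requires, since the proof of Lemma \ref{lem:decomp} transfers the regret via $\sum_t(\ell_t(\u_t)-\ell_t(\u))=\sum_t(\ell_t(\u_t)-\ell_t(\u''))+\sum_t(\ell_t(\u'')-\ell_t(\u))$ and needs the second sum bounded above by a constant.
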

	\paragraph{Self-Concordant Functions.}
	We now present some results on self-concordant functions that we will make heavy use of in the proofs of our results. We start by the definition of a self-concordant function. For the rest of this section, we let $\cK$ be convex compact set with non-empty interior $\inte \cK$. For a twice [resp.~thrice] differentiable function, we let $\nabla^2 f(\u)$ [resp.~$\nabla^3 f(\u)$] be the Hessian [resp.~third derivative tensor] of $f$ at $\u$. 
	\begin{definition}
		A convex function $f\colon \inte \cK \rightarrow \reals$ is called \emph{self-concordant} with constant $M_f\geq 0$, if $f$ is $C^3$ and satisfies {\bf I)} $f(\x_k)\to +\infty$ for $\x_k \to \x\in \partial \cK${\em ;} and {\bf II)}
		\begin{align}
			\forall \x\in  \inte\cK, \forall \u \in \reals^d, \quad |\nabla^3f(\x)[\u,\u,\u]| \leq 2 M_f \|\u\|^3_{\nabla^2 f(\x)}.
		\end{align}
	\end{definition} 
	Note that by definition, if $f$ is self-concordant with constant $M_f\geq 0$ it is also self-concordant with any constant $M\geq M_f$. Another property that we will use is that if $f_1$ and $f_2$ are self-concordant functions with constants $M_1$ and $M_2$, respectively, then for any $\alpha,\beta>0$, the function $\alpha f_1 + \beta f_2$ is self-concordant with constant $\frac{M_1}{\sqrt{\alpha}} \wedge \frac{M_2}{\sqrt{\beta}}$ \cite[Theorem~5.1.1]{nesterov2018lectures}.
	
	For a self-concordant function $f$ and $\x\in \dom f$, the quantity $\lambda(\x, f)\coloneqq \|\nabla f(\x)\|_{\nabla^{-2}f(\x)}$, known as the \emph{Newton decrement}, will be instrumental in our proofs. The following two lemmas contain properties of the Newton decrement and Hessians of self-concordant functions, which we will use repeatedly throughout (see e.g. \cite{nemirovski2008interior,nesterov2018lectures}).
	\begin{lemma}
		\label{lem:properties}
		Let $f\colon \inte \cK\rightarrow \reals$ be a self-concordant function with constant $M_f\geq 1$. Further, let $\x\in \inte \cK$ and $\x_f\in \argmin_{\x\in \cK} f(\x)$. Then, {\bf I)} whenever $\lambda(\x,f)<1/M_f$, we have 
		\begin{align}
			\|\x -\x_f\|_{\nabla^2 f(\x_f)} 	\vee 	\|\x -\x_f\|_{\nabla^2 f(\x)} \leq {\lambda(\x,f)}/({1-M_f \lambda (\x,f)});  \label{eq:first1}
		\end{align}
		and {\bf II)} for any $M\geq M_f$, the \emph{damped Newton step} $\x_+\coloneqq \x - \frac{1}{1+M \lambda(\x,f)} \nabla^{-2}f(\x)\nabla f(\x)$ satisfies $\x_+\in \cK$ and 
		$ \lambda(\x_+,f)\leq M \lambda(\x,f)^2 ( 1 + (1+M \lambda(\x,f))^{-1}).$
	\end{lemma}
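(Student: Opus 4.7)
Both claims are classical facts about self-concordant functions (see, e.g., \cite{nesterov2018lectures,nemirovski2008interior}); the only twist here is that the excerpt parametrizes self-concordance by a general constant $M_f \geq 1$ rather than the standard $M_f = 1$ convention. A clean route is to pass to $g \coloneqq M_f^2 f$, which is self-concordant in the standard sense: $|\nabla^3 g[\u,\u,\u]| = M_f^2 |\nabla^3 f[\u,\u,\u]| \leq 2 M_f^3 \|\u\|_{\nabla^2 f}^3 = 2 \|\u\|_{\nabla^2 g}^3$. Since $\nabla^2 g = M_f^2 \nabla^2 f$, one has $\lambda(\x, g) = M_f \lambda(\x, f)$ while $\|\cdot\|_{\nabla^2 g} = M_f \|\cdot\|_{\nabla^2 f}$, so classical statements for $g$ translate into the stated bounds for $f$ after scaling. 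The common workhorse underlying both parts is the Hessian comparison
\begin{align}
	(1 - M_f r)^2 \nabla^2 f(\x) \preceq \nabla^2 f(\y) \preceq (1 - M_f r)^{-2} \nabla^2 f(\x), \qquad r \coloneqq \|\y - \x\|_{\nabla^2 f(\x)} < 1/M_f,
\end{align}
which follows by integrating the univariate self-concordance inequality along the segment from $\x$ to $\y$.

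For Part~I, Condition~I of self-concordance forces $\x_f \in \inte \cK$ and hence $\nabla f(\x_f) = 0$. Writing $\nabla f(\x) = \int_0^1 \nabla^2 f(\x_f + s(\x - \x_f))(\x - \x_f)\,ds$ and taking the Mahalanobis inner product with $\x - \x_f$, I would invoke the Hessian comparison along the segment to lower-bound the resulting quadratic form by $(1 - M_f \|\x - \x_f\|_{\nabla^2 f(\x)})^2 \|\x - \x_f\|_{\nabla^2 f(\x)}^2$, while Cauchy--Schwarz gives the upper estimate $|\inner{\nabla f(\x)}{\x - \x_f}| \leq \lambda(\x,f) \cdot \|\x - \x_f\|_{\nabla^2 f(\x)}$. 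Solving the resulting quadratic inequality in $\|\x - \x_f\|_{\nabla^2 f(\x)}$ produces the stated bound, and the symmetric computation with $\nabla^2 f(\x_f)$ playing the role of the local metric yields the other term of the maximum.

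For Part~II, write $\bm{h} \coloneqq -\nabla^{-2} f(\x) \nabla f(\x)$ for the Newton direction and $\alpha \coloneqq 1/(1 + M\lambda(\x,f))$, so that $\x_+ = \x + \alpha \bm{h}$ and $\|\alpha \bm{h}\|_{\nabla^2 f(\x)} = \alpha \lambda(\x,f) < 1/M \leq 1/M_f$. The Hessian comparison then certifies $\x_+ \in \inte \cK$ and gives $\nabla^{-2} f(\x_+) \preceq (1 - M_f \alpha \lambda(\x,f))^{-2} \nabla^{-2} f(\x)$. Next, Taylor-expand $\nabla f(\x_+) = \nabla f(\x) + \int_0^1 \nabla^2 f(\x + t\alpha \bm{h})(\alpha \bm{h})\,dt$, rewrite as $(1 - \alpha)\nabla f(\x) + \int_0^1 \big(\nabla^2 f(\x + t\alpha \bm{h}) - \nabla^2 f(\x)\big)(\alpha \bm{h})\,dt$, bound the first piece by $(1-\alpha)\lambda(\x,f) = M\lambda(\x,f)^2/(1+M\lambda(\x,f))$ in $\nabla^{-2} f(\x)$-norm, and bound the second via the Hessian comparison (the integrated difference of Hessians). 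Finally, use the upper bound on $\nabla^{-2} f(\x_+)$ to convert to $\nabla^{-2} f(\x_+)$-norm. The main obstacle will be the bookkeeping in this last step: one must show that the factors $(1 - M_f \alpha \lambda(\x,f))^{-1}$ from the norm conversion and the error integral combine with $\alpha$ to give exactly the symmetric expression $M \lambda(\x,f)^2 \big(1 + (1 + M\lambda(\x,f))^{-1}\big)$ rather than a looser constant. The specific choice $\alpha = 1/(1 + M\lambda(\x,f))$ is precisely what balances the two contributions.
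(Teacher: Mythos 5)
The paper does not actually prove this lemma: it is stated as a pair of classical facts with a pointer to \cite{nemirovski2008interior,nesterov2018lectures}, so you are supplying an argument where the authors supply a citation. Your rescaling device $g=M_f^2 f$ is the right way to reduce to the standard normalization, and your Part~II skeleton (Dikin-ellipsoid membership from $\alpha\lambda(\x,f)<1/M\leq 1/M_f$, the decomposition $\nabla f(\x_+)=(1-\alpha)\nabla f(\x)+\int_0^1(\nabla^2 f(\x+t\alpha\bm{h})-\nabla^2 f(\x))\alpha\bm{h}\,dt$, and the final norm conversion) is exactly the textbook proof. As you anticipate, the straightforward bookkeeping there yields $2M\lambda(\x,f)^2$ rather than the stated $M\lambda(\x,f)^2(1+(1+M\lambda(\x,f))^{-1})$; recovering the sharper constant requires the refined integral estimates in Nesterov's Theorem~5.2.2. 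Note, however, that every downstream use in the paper (e.g.\ $\lambda(\w_{t+1},F_{t+1})\leq 8\sqrt{e\eta}\,\lambda(\w_t,F_{t+1})^2$ in the proof of Lemma~\ref{lem:close}) only invokes the weakened form $\leq 2M\lambda^2$, so the looser constant would in fact suffice.

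Part~I, as you describe it, has a genuine gap. You propose to lower-bound $\int_0^1 (\x-\x_f)^\top\nabla^2 f(\x_f+s(\x-\x_f))(\x-\x_f)\,ds$ via the \emph{local} comparison $\nabla^2 f(\y)\succeq(1-M_f\|\y-\x\|_{\nabla^2 f(\x)})^2\nabla^2 f(\x)$. This fails twice. First, that comparison is only valid when $\|\y-\x\|_{\nabla^2 f(\x)}<1/M_f$, i.e.\ when $r\coloneqq\|\x-\x_f\|_{\nabla^2 f(\x)}<1/M_f$ --- which is essentially what Part~I is asserting, so the argument is circular. Second, even granting it, the resulting inequality is (up to the exact form of the integral) $(1-M_f r)^2 r\leq\lambda(\x,f)$, and the map $r\mapsto(1-M_f r)^2 r$ is \emph{not} monotone on $(0,1/M_f)$ --- it vanishes again at $r=1/M_f$ --- so the inequality does not pin $r$ down and cannot be "solved" to give $r\leq\lambda/(1-M_f\lambda)$. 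The classical proof instead uses the one-sided bound $\nabla^2 f(\y)\succeq(1+M_f\|\y-\x\|_{\nabla^2 f(\x)})^{-2}\nabla^2 f(\x)$, which holds for \emph{all} $\y\in\inte\cK$ with no a priori restriction; integrating $\int_0^1(1+M_f sr)^{-2}ds=(1+M_f r)^{-1}$ yields $r^2/(1+M_f r)\leq\lambda(\x,f)\,r$ by Cauchy--Schwarz, and $r\mapsto r/(1+M_f r)$ \emph{is} increasing, so $r\leq\lambda/(1-M_f\lambda)$ follows cleanly whenever $\lambda(\x,f)<1/M_f$. Once $r<1/M_f$ is established, the bound in the $\nabla^2 f(\x_f)$-norm follows from the two-sided comparison of Lemma~\ref{lem:hessians}. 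Replace the comparison direction in Part~I and the proof goes through.
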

	\begin{lemma}
		\label{lem:hessians}
		Let $f\colon \inte \cK\rightarrow \reals$ be a self-concordant function with constant $M_f$ and $\x \in \inte \cK$. Then, for any $\y$ such that $r\coloneqq \|\y - \x\|_{\nabla^2 f(\x)} < 1/M_f$, we have 
		\begin{align}
			(1-M_f r)^{2} \nabla^2 f(\y) \preceq \nabla^2 f(\x) \preceq (1-M_f r)^{-2}  \nabla^2 f(\x).
		\end{align}
	\end{lemma}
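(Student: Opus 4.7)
The plan is to track the Hessian $\nabla^2 f$ along the segment from $\x$ to $\y$ via the self-concordance inequality. I would parameterize $\x(t)\coloneqq \x + t(\y-\x)$ for $t\in[0,1]$ and first observe that the whole segment lies in $\inte\cK$: the hypothesis $r<1/M_f$ places $\y$ inside the Dikin ellipsoid of $\x$, which by a standard self-concordance fact is a convex subset of $\inte\cK$. For an arbitrary test direction $\v\in\reals^d$ I would then define $\phi(t)\coloneqq \nabla^2 f(\x(t))[\v,\v]$, so that $\phi(0)=\|\v\|^2_{\nabla^2 f(\x)}$ and $\phi(1)=\|\v\|^2_{\nabla^2 f(\y)}$. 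Establishing $(1-M_f r)^2\leq \phi(1)/\phi(0)\leq (1-M_f r)^{-2}$ uniformly in $\v$ will immediately yield the desired Loewner inequality.

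To bound this ratio, I would differentiate: $\phi'(t)=\nabla^3 f(\x(t))[\y-\x,\v,\v]$. The standard polarization of the self-concordance inequality (which follows from the symmetric trilinear form $\nabla^3 f(\z)$ satisfying $|\nabla^3 f(\z)[\u,\u,\u]|\leq 2M_f\|\u\|_{\nabla^2 f(\z)}^3$) yields $|\nabla^3 f(\z)[\u_1,\u_2,\u_3]|\leq 2M_f\prod_i \|\u_i\|_{\nabla^2 f(\z)}$, whence $|\phi'(t)|\leq 2M_f\,\psi(t)\,\phi(t)$ with $\psi(t)\coloneqq \|\y-\x\|_{\nabla^2 f(\x(t))}$. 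Equivalently, $|(\ln\phi)'(t)|\leq 2 M_f\,\psi(t)$, so the problem reduces to integrating $\psi$ over $[0,1]$.

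To control $\psi$, I would apply the same argument taking $\v=\y-\x$: then $\phi(t)=\psi(t)^2$, and the displayed bound becomes $|\psi'(t)|\leq M_f\,\psi(t)^2$, i.e.\ $|(1/\psi)'(t)|\leq M_f$. Integrating from $0$ to $t$ using $\psi(0)=r$ gives $\psi(t)\leq r/(1-M_f r t)$, valid on all of $[0,1]$ since $M_f r<1$. Substituting back and integrating $|(\ln\phi)'(t)|\leq 2M_f r/(1-M_f r t)$ from $0$ to $1$ produces $|\ln(\phi(1)/\phi(0))|\leq -2\ln(1-M_f r)$, which is the target bound and, passed through the arbitrary $\v$, the lemma.

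The main technical care lies in justifying the estimate on $\psi$: one must verify that $\psi$ neither vanishes nor blows up on $[0,1]$ before the integration of $(1/\psi)'$ is legitimate. This follows from a standard continuation argument based on the continuity of $\psi$ on $[0,1]$ (a consequence of $\x(t)\in\inte\cK$) together with the derivative bound $|(1/\psi)'|\leq M_f$ and the initial condition $1/\psi(0)=1/r>M_f$, which together force $1/\psi(t)\geq 1/r - M_f t > 0$ throughout $[0,1]$.
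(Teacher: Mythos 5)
The paper offers no proof of this lemma at all---it is quoted as a standard fact with a pointer to \cite{nemirovski2008interior,nesterov2018lectures}---and your argument is precisely the standard textbook one: differentiate $\phi(t)=\|\v\|^2_{\nabla^2 f(\x+t(\y-\x))}$, polarize the self-concordance inequality to get $|(\ln\phi)'|\le 2M_f\psi$, control $\psi(t)=\|\y-\x\|_{\nabla^2 f(\x(t))}$ via the Riccati-type estimate $\psi(t)\le r/(1-M_f r t)$, and integrate. Your proof is correct and complete (including the continuation argument keeping $\psi$ positive and finite); the only remark worth making is that the rightmost term in the paper's display is evidently a typo for $(1-M_f r)^{-2}\nabla^2 f(\y)$, and your conclusion $(1-M_f r)^2\le \phi(1)/\phi(0)\le (1-M_f r)^{-2}$ delivers exactly the intended two-sided Loewner sandwich.
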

	A consequence of the latter lemma is the following useful result whose proof is in Appendix \ref{sec:cproofs}:
	\begin{lemma}
		\label{lem:inter0}
		Let $f\colon \inte \cK \rightarrow \reals$ be a self-concordant function with constant $M_{f}>0$. Then, for any $\w, \p \in \inte \cK$ such that $r\coloneqq \|\p-\w\|_{\nabla^2 f(\w)}<1/M_{f}$, we have 
		\begin{align}
			\|\nabla f(\w) - \nabla f(\p)\|^2_{\nabla^{-2}f(\w)}  \leq \frac{1}{(1-M_{f} r )^{2}} \|\p- \w\|^2_{\nabla^{2}f(\w)}.
		\end{align}
	\end{lemma}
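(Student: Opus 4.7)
The plan is to bound the gradient difference using the fundamental theorem of calculus along the segment from $\w$ to $\p$, then use Lemma \ref{lem:hessians} to control the Hessians along that segment in terms of $H\coloneqq\nabla^2 f(\w)$. Concretely, I would first write
\begin{align}
\nabla f(\p) - \nabla f(\w) = \int_0^1 H_t\, (\p-\w)\, dt, \qquad H_t\coloneqq \nabla^2 f(\w + t(\p-\w)),
\end{align}
which is valid since $[\w,\p]\subset \inte\cK$ by convexity of $\inte\cK$. Then I would apply the triangle inequality for the dual norm $\|\cdot\|_{H^{-1}}$ to reduce the problem to bounding $\|H_t(\p-\w)\|_{H^{-1}}$ for each $t\in[0,1]$.

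Next, I would invoke Lemma \ref{lem:hessians} at the point $\y_t\coloneqq \w + t(\p-\w)$, whose distance from $\w$ in the $H$-norm is $tr\le r<1/M_f$. This yields $H_t\preceq (1-M_f t r)^{-2} H$. To turn this operator inequality into a bound on $\|H_t(\p-\w)\|_{H^{-1}}^2=(\p-\w)^\top H_t H^{-1} H_t (\p-\w)$, I would use the standard fact that $A\preceq cB$ (for symmetric positive-definite $A,B$) implies $AB^{-1}A\preceq c^2 B$; this follows by conjugating with $B^{-1/2}$ and noting that $\|B^{-1/2}AB^{-1/2}\|_{\mathrm{op}}\le c$. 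Combining these gives
\begin{align}
\|H_t(\p-\w)\|_{H^{-1}} \le (1-M_f tr)^{-2} \, \|\p-\w\|_H = \frac{r}{(1-M_f t r)^2}.
\end{align}

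Plugging back into the integral representation and computing $\int_0^1 (1-M_f tr)^{-2}\,dt=\frac{1}{1-M_f r}$ by the substitution $s=1-M_f t r$, I obtain
\begin{align}
\|\nabla f(\p)-\nabla f(\w)\|_{H^{-1}} \le \frac{r}{1-M_f r} = \frac{\|\p-\w\|_H}{1-M_f r},
\end{align}
and squaring yields exactly the claimed inequality. The only non-routine step is the passage from the operator bound $H_t\preceq c H$ to the mixed quadratic form $H_tH^{-1}H_t\preceq c^2 H$; this is the main technical point, but it reduces to a one-line symmetric-conjugation argument. Everything else is calculus and an application of Lemma \ref{lem:hessians}.
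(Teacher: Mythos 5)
Your proof is correct and follows essentially the same route as the paper's: both start from the integral representation $\nabla f(\p)-\nabla f(\w)=\int_0^1 \nabla^2 f(\w+t(\p-\w))(\p-\w)\,dt$ and convert a self-concordance Hessian comparison into a bound on the dual-norm quadratic form via exactly the symmetric-conjugation step you flag as the main technical point. The only difference is bookkeeping: the paper bounds the averaged Hessian $\int_0^1\nabla^2 f(\w+t(\p-\w))\,dt\preceq(1-M_f r)^{-1}\nabla^2 f(\w)$ in one shot (citing Nesterov, Cor.~5.1.5) and conjugates once, whereas you bound each $H_t$ pointwise via Lemma \ref{lem:hessians} and integrate $(1-M_f tr)^{-2}$ over $t$, which recovers the identical constant $\tfrac{1}{1-M_f r}$.
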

	The result of the lemma is reminiscent of the relationship between the Bregman divergence with respect to a function $f$ and the one with respect to its Fenchel dual $f^*$; that is, $D_{f}(\u, \v)=D_{f^*}(\nabla f(\v),\nabla f(\u))$ \cite[Proposition 11.1]{cesa2006prediction}.  
	\paragraph{Mixability.} As a by-product of our efficient solution to the portfolio problem, we present an algorithm that guarantees an adaptive logarithmic regret for mixable losses.
	\begin{definition}
		\label{de:mixable}
		A sequence $(f_t)\subset \{f\colon \cK \rightarrow \reals\}$ of convex functions is said to be $\eta$-mixable for $\eta>0$ if for any distribution $P$ on $\cK$, there exists $\u_*\in \cK$ such that 
		\begin{align}
			\forall t\geq 1, \quad 		f_t(\u_*) \leq - \eta^{-1} \log \E_{\u\sim P} e^{- \eta f_t(\u)}.
		\end{align}
	\end{definition}
	Formally, given an algorithm $\A$ whose outputs $(\u_t)$ achieve a logarithmic regret against any sequence of $\eta$-mixable losses, i.e.~$\sum_{t=1}^T (\ell_t(\u_t)-\ell_t(\u))\leq O(\ln T)$ for any $\u \in \cK$, we design a meta-algorithm that aggregates instances of $\A$ and generates outputs $(\w_t)$ that satisfy $\sum_{t\in I} (\ell_t(\w_t)-\ell_t(\u))\leq O(\ln^2 T)$, for any interval $I\subset [T]$ and $\u \in \cK$.
	\paragraph{Additional Notation.} For a differentiable convex function $f\colon \operatorname{int} \cK\rightarrow \reals$, we denote by $D_f(\u, \v)\coloneqq f(\u)-f(\v) - \inner{\nabla f(\v)}{\u - \v}$ the \emph{Bregman divergence} between $\u, \v \in \operatorname{int} \cK$ with respect to $f$. We use the notation $\wtilde{O}(\cdot)$ to hide poly-log factors in $T$ and $d$.
	
	\section{Background, Challenges, and Solution Sketch}
	\label{sec:sketchit}
	In this section, we start by describing the algorithm \adabar{} \cite{luo2018efficient} that we build on. We then point out key challenges we tackle to design our efficient portfolio selection algorithm. The analysis we sketch in \S\ref{sec:avoid} and \S\ref{sec:damped} is of independent interest as we discuss below. 
	
	\subsection{The \adabar{} Algorithm} 
	\label{sec:barrons}
	The \adabar{} algorithm consists of a base algorithm, $\mathsf{BARRONS}$, and a meta-algorithm that restarts the former under a certain condition on the sequence of returns and iterates of the algorithm.
	\paragraph{Base Algorithm.} $\mathsf{BARRONS}$ is simply mirror descent with a barrier regularizer. In particular, if we let $\bar \Delta_d \coloneqq \{\x\in \Delta_d\colon x_i \geq 1/T, \forall i \in[d]\}$, the outputs $(\p_t)$ of $\mathsf{BARRONS}$ are such that $
		\p_1 \coloneqq \mathbf{1}/d$ and $\p_{t+1} = \argmin_{\p \in \bar \Delta_d}    \inner{\p}{\g_t} + D_{\Phi_t}(\p,\p_t),$ where
		\begin{align} \Phi_t(\p)\coloneqq \sum_{i=1}^d \frac{- \ln p_i}{\eta_{t,i}} + \frac{d \|\p\|^2}{2}+\frac{\beta}{2}\sum_{s=1}^t \inner{\bnabla_t}{\p}^2  ,\ \   \ \ \eta_{t,i}\coloneqq \eta \cdot \max_{s\in[t]} e^{-\log _T(d p_{s,i})}, \label{eq:brun}
	\end{align}
	and $\bnabla_t \coloneqq \r_t/\inner{\r_t}{\p_t}$. Using the standard analysis of mirror descent and the fact that Cover's loss is exp-concave, \cite{luo2018efficient} show that the regret $R_T(\u)=\sum_{t=1}^T( \ell_t(\p_t)- \ell_t(\u))$ against a comparator $\u \in \bar \Delta_d$ (competing against comparators in $\bar \Delta_d$ is sufficient---see Lem.~\ref{lem:additive}) is bounded as 
	\begin{align}
		R_T(\u)\leq \sum_{t=1}^T \inner{\bnabla_t}{\p_t-\p_{t+1}} + \sum_{t=1}^T  (D_{\Phi_t}(\u, \p_t)- D_{\Phi_{t}}(\u, \p_{t+1}) -\beta \inner{\bnabla_t}{\p_t - \u}/2), \label{eq:decomp2}
	\end{align}
	as long as $\beta$, the parameter in the regularizer in \eqref{eq:brun}, is less than $\alpha_T(\u)\coloneqq \frac{1}{2}\wedge \min_{t\in[T]} \frac{1}{8 |\inner{\u - \p_t}{\bnabla_t}|}$. This condition seems strong since the algorithm does not have access to the sequence of returns $(\r_t)$ or the comparator $\u$ up-front to ensure that $\beta \leq \alpha_T(\u)$. However, this issue is resolved via a clever restart scheme as we describe further below. %We note also that competing against a comparator in $\bar \Delta_d$ is sufficient (see Lemma \ref{lem:additive}).
	
	The fact that the regularizers $(\Phi_t)$ have a quadratic term and a log-barrier ensures that the iterates $(\p_t)$ are stable. In particular, the first sum on the RHS of \eqref{eq:decomp2} can be bounded by $O(\beta^{-1} d \ln T)$. However, this term can still be problematic since $\beta^{-1}$ may be large; after all, \eqref{eq:decomp2} only holds when $\beta \leq \alpha_T(\u)$ and $\alpha_T(\u)$ may be as small at $1/(dT)$. 
	
	Fortunately, terms of the form $O(\beta^{-1} d \ln T)$ can be canceled by the second sum on the RHS of \eqref{eq:decomp2}, thanks to the log-barrier regularizer $\Psi_t(\p) \coloneqq \sum_{i=1}^d -\eta^{-1}_{t,i} \ln p_i$ in the definition of $\Phi_t$ and the non-decreasing nature of the learning rates $(\eta_{t,i})$. In particular, \cite{luo2018efficient} show that the second sum in \eqref{eq:decomp2} is bounded from above by $O\left(\eta^{-1}{d\ln T} \right)$ plus
	\begin{align}
		\sum_{t=1}^T (D_{\Psi_t}(\u,\p_t)- D_{\Psi_{t-1}}(\u, \p_t)){\leq}  - \frac{1}{8 \eta \ln T}\sum_{i\in[d]}\max_{t\in[T]} \frac{u_i}{p_{t,i}}. \label{eq:cancelling}
	\end{align}
where the inequality follows by \cite[proof of Lem.~6]{luo2018efficient}. Now the RHS of \eqref{eq:cancelling} can cancel the bound $O(\beta^{-1} d\ln T)$ on the stability term as long as $\beta \geq \alpha_T(\u)/2$ (see App.~\ref{sec:sketch} for details).   
	\paragraph{The Meta-Algorithm.} Since the sequence of returns $(\r_t)$ is not known up-front, it is not possible for any algorithm to pick $\beta$ so that the condition $\alpha_T(\u)/2\leq \beta \leq \alpha_T(\u)$ is always satisfied. Aggregating multiple instances of $\mathsf{BARRONS}$ with different $\beta$'s also fails since $\alpha_T(\u)$ depends on the outputs of the algorithm; and so, changing $\beta$ changes the target $\alpha_T(\u)$ for the base algorithm (see also discussion in \cite{luo2018efficient}). Instead of aggregating base algorithms, the approach taken by \cite{luo2018efficient} consists of restarting the base algorithm on round $t$ if the current estimate for $\beta$ satisfies $\beta > \alpha_t(\u_t)$, where $\u_t$ is the regularized leader:
	\begin{align}
		\u_t \in \argmin_{\u \in \bar \Delta_d}  \sum_{i=1}^d \frac{-\ln u_{i}}{\eta_{t,i}}  + \sum_{s=\tau}^t \ell_s(\u) \label{eq:FTRL}
	\end{align}
	and $\tau$ is the round where the current instance of the base algorithm was initialized. The technical reason for why this works is sketched in Appendix \ref{sec:sketch}.
	
	\paragraph{Computational Considerations.} 
	The computational complexity is dominated by the computation of the mirror descent iterates for the base algorithm and the FTRL computation \eqref{eq:FTRL} for the meta-algorithm. Both problems can be solved using an interior point method leading to a computational cost of $\wtilde O(d^{3.5} T + d^{2.5}T^2)$ after $T$ rounds. We will reduce the computational complexity to $\wtilde{O}(d^3 T)$ (where the $O(d^3)$ is due to the computation of a matrix inverse) by {\bf I)} avoiding the expensive FTRL computation in \eqref{eq:FTRL} thanks to a new adaptive algorithm for mixable losses; and {\bf II)} providing a new analysis for the damped Newton step to approximate mirror descent iterates. These techniques, which we describe next, are of independent interest. 
	
	\subsection{Avoiding the FTRL Computation}
	\label{sec:avoid}
	To avoid computing the regularized leader in \eqref{eq:FTRL} that is needed to trigger restarts, we will use a meta-algorithm that aggregates base algorithms initialized at different rounds (one may think of these as ``restarted'' instances of the base algorithm). If the meta-algorithm has a small regret against any of the base algorithms, then this would emulate the effect of performing restarts, without the expensive cost of FTRL computations. More formally, if we denote by $(\u_t^\tau)$ the outputs of an instance of the base algorithm $\A^\tau$ that is initialized at round $\tau$, we can emulate the effect of restarts if the outputs $(\u_t)$ of the meta-algorithm satisfy 
	\begin{align}
		\sum_{s=\tau}^t (\ell_s(\u_s)- \ell_s(\u_s^\tau)) \leq O(\operatorname{poly-log} (T)), \label{eq:targetted}
	\end{align}
	for all $\tau \in[T]$ and $t>\tau$. A regret bound of this type may be achieved using sleeping experts algorithms, where in our case the instance $\A^\tau$ is considered ``asleep'' during the rounds $s< \tau$ and ``awake'' for $s\geq \tau$. There are two challenges that come with using standard sleeping experts algorithms such as those in \cite{adamskiy2012closer,gaillard2014second}. First, such techniques operate on linearized losses, which is sufficient when seeking a $O(\sqrt{T})$ regret. This is not the case in our setting as we are aiming for a logarithmic regret. Second, if we want a regret bound of the form \eqref{eq:targetted} to hold for all $\tau\in[T]$, a naive sleeping experts strategy would require keeping track of $T$ experts. This would imply a $O(T)$-per-round computational complexity in the worst case, which would defeat the purpose of seeking an efficient alternative to the FTRL computation in \eqref{eq:FTRL}.
	
	We manage to circumvent these issues by presenting a new meta-algorithm that enjoys a logarithmic regret on any subinterval for mixable losses (this subsumes exp-concave losses). The algorithm is based on the recent work by \cite{zhang2019dual} that focuses on exp-concave online learning. To reduce the computational complexity, we show that it is sufficient to ensure a low regret against base algorithms indexed by a small set of geometric intervals \cite{daniely2015strongly}, reducing the number of experts at any round to at most $O(\ln T)$ (this is discussed in \S\ref{sec:strong}). 
	
	\subsection{Damped Newton Step for Mirror Descent}
	\label{sec:damped}
	Now that we have a way of avoiding the expensive FTRL computations of \adabar{}, it remains to find a more efficient alternative to the Mirror Descent (MD) computations of its base algorithm $\mathsf{BARRONS}$. Before describing how we use Damped Newton Steps (DNS) for this purpose, we first describe the shortcomings of existing approaches. 
	\paragraph{Shortcomings of previous DNS results.}
	\cite{abernethy2012interior} showed how one can use damped Newton steps to approximate the iterates $(\p_t)$ of FTRL given by $\p_{t+1}\in  \argmin_{\p\in \cC} f_{t+1}(\p) \coloneqq \sum_{s=1}^t \inner{\p}{\g_s}+ \Phi(\p)$, where $\Phi$ is a self-concordant barrier for some set of interest $\mathcal{C}$. In particular, \cite{abernethy2012interior} showed that for an appropriate scaling of $\Phi$ the damped Newton steps $(\w_t)$ defined by $\w_{t+1}=\w_t - \frac{1}{1+\lambda(\w_t, f_{t+1})} \nabla^{-2} f_{t+1}(\w_t) \nabla f_{t+1}(\w_t)$ are close enough to the FTRL iterates $(\p_t)$ so that the regret w.r.t.~$(\w_t)$ is bounded by the regret w.r.t.~$(\p_t)$ up to an additive $O(\sqrt{T})$. 
	
	We note that for a fixed regularizer $\Phi$ that is a self-concordant barrier for $\mathcal{C}$, it is known that the FTRL iterates in the previous paragraph match the MD ones; that is, $\p_{t+1}\in \argmin_{\p \in \mathcal{C}}  \inner{\p}{\g_t} +D_{\Phi}(\p, \p_{t})$, for all $t$. However, this is no longer the case when dealing with time-varying regularizers $(\Phi_t)$; that is, when $f_{t+1}(\cdot) =   \inner{\cdot}{\g_t} +D_{\Phi_t}(\cdot, \p_{t})$ (as in the case of $\mathsf{BARRONS}$). This means that we cannot directly use the analysis of \cite{abernethy2012interior} to show that damped Newton steps are good approximations of MD iterates with varying regularizers. What is more, the damped Newton steps with respect to $(f_{t})$ can no longer be computed directly in this case since the gradient $\nabla f_{t+1}(\w_t) =  \g_t + \nabla \Phi_t(\w_t)- \nabla \Phi_t(\p_t)$ in the expression of the DNS depends on the iterate $\p_t$, which is what we seek to efficiently approximate in the first place. Using $\w_t$ as an estimator for $\p_t$ does not work since the approximation errors accumulate across rounds in an unfavorable way (breaking the analysis of \cite{abernethy2012interior}). 
	
	One tempting approach around these issues is to target the FTRL iterates $(\p_t)$ given by $\p_{t+1}= \argmin_{\p\in \mathcal{C}} \sum_{s=1}^t \inner{\p}{\g_s} +\Phi_t(\p)$ instead of the MD ones. However, we are not aware of an existing analysis of FTRL that yields negative terms from Bregman divergences in the regret bound as in \eqref{eq:cancelling} (negative terms were needed to cancel the problematic $O(\beta^{-1}d \ln T)$ term in the regret bound).  
	
	\paragraph{Our approach.}
	Our solution consists of approximating FTRL iterates w.r.t.~modified gradients that are chosen in a way to still allow us to use the MD analysis to derive our regret bound (similar in spirit to the approach by \cite{foster2020adapting}). In particular, we consider the objective 
	\begin{align}
		f_{t+1}(\p) \coloneqq \sum_{s=1}^t \p^\top \left(\g_s - \nabla \Phi_s(\w_s) + \nabla \Phi_{s-1}(\w_s) \right) + \Phi_t(\p),
	\end{align} 
	with $(\w_s)$ being the damped Newton iterates w.r.t.~$(f_t)$. Under mild conditions on $(\Phi_t)$, the FTRL iterates $(\p_{t}\in \argmin_{\p \in \mathcal{C}}f_{t}(\p))$ can be efficiently approximated by $(\w_t)$. Despite the fact that the objective $f_{t+1}$ does not contain any ``unknown" MD iterates, we are still able to take advantage of the MD analysis and bring back negative terms from Bregman divergences (as in \eqref{eq:cancelling}) in the regret bound. The key fact that enables this is that the FTRL iterates $(\p_t)$ with respect to $(f_{t})$ match the MD iterates with modified gradients $(\tilde \g_t)$. In particular, $(\p_t)$ satisfy
	\begin{gather}
		\p_{t+1}\in \argmin_{\p} \inner{\tilde \g_t}{\p} + D_{\Phi_t}(\p, \p_{t}),\ \ \text{where} \ \ \tilde \g_t \coloneqq \g_t - \nabla \phi_t(\p_t)+\nabla \phi_t(\w_t), \label{eq:mirrormirror}
	\end{gather}
	and $\phi_t \coloneqq  \Phi_{t-1}-\Phi_{t}$. We now give a sketch of how this observation allows us to leverage the MD analysis and extract negative terms as in \eqref{eq:cancelling}. 
	\paragraph{MD analysis for FTRL (a sketch).} To illustrate how \eqref{eq:mirrormirror} helps in our analysis, consider the regularizer $\Phi_t$ in \eqref{eq:brun}, which we write as $\Phi_t = \Psi_t + \Theta_t$, where $\Psi_t$ is the barrier part $\Psi_t(\p)\coloneqq \sum_{i\in[d]}-\eta_{t,i}^{-1}\ln p_i$. We will further define $\psi_t\coloneqq \Psi_{t-1}-\Psi_t$ and note that the fact that $(\eta_{t,i})$ are non-decreasing, implies that $(\psi_t)$ are convex, self-concordant functions (the latter fact is all that is needed to generalize the current analysis). A key step in the analysis of the regret involves bounding the sum $\Sigma_T\coloneqq  \sum_{t=1}^T \inner{\g_t}{\w_t - \u}$ of linearized losses. For simplicity of the exposition, suppose that $\Theta_t\equiv 0$, for all $t$; the quadratic terms in $\Theta_t$ present no difficulty when it comes to bounding $\Sigma_T$ (see proof of Lem.~\ref{lem:decomp} for a derivation with non-zero $(\Theta_t)$). In this case, the sum $\Sigma_T$ can be written as 
	\begin{align}
	 \Sigma_T &= \sum_{t=1}^T \left(\inner{\g_t}{\w_t - \p_t}+ \inner{\tilde \g_t}{\p_t - \u}  +\inner{ \nabla \psi_t(\p_t)- \nabla \psi_t(\w_t)}{\p_t - \u} \right),\\
		 &= \sum_{t=1}^T \left(\inner{\g_t}{\w_t - \p_t} + \inner{\tilde \g_t}{\p_{t} - \p_{t+1}} \right)\nn \\ &\qquad  + \sum_{t=1}^T (\inner{\tilde \g_t}{\p_{t+1} - \u} + D_{\psi_t}(\u,\p_t)-  D_{\psi_t}(\u,\w_t) + D_{\psi_t}(\p_t,\w_t)),\label{eq:stok}
		\end{align}
	where we used the definitions of $\tilde \g_t$ and the Bregman divergence. Using this, and the facts that $\inner{\tilde \g_t}{\p_{t+1} - \u} \leq  D_{\Phi_t}(\u, \p_t)-D_{\Phi_t}(\u, \p_{t+1}) - D_{\Phi_t}(\p_{t+1},\p_t)$ (by optimality of $\p_{t+1}$---see proof of \cite[Lem.~5]{luo2018efficient}) and $D_{\psi_t}(\u, \p_t)= D_{\Phi_{t-1}}(\u, \p_t)- D_{\Phi_{t}}(\u, \p_t)$, we can bound $\Sigma_T$ as
		\begin{align}
	\Sigma_T	& \leq  \sum_{t=1}^T \left(  \inner{\g_t}{\w_t - \p_t} +  \inner{\tilde \g_t}{\p_{t} - \p_{t+1}}+ D_{\psi_t}(\p_t,\w_t)\right)\nn \\ \displaybreak[0]
		& \ \  +D_{\Phi_0}(\u,\p_1) - D_{\Phi_T}(\u,\p_{T+1})+\sum_{t=1}^T ( D_{\Phi_{t}}(\u,\w_t)-  D_{\Phi_{t-1}}(\u,\w_t)). \label{eq:fluffy}
	\end{align} 
Thus, one can extract negative terms (as in \eqref{eq:cancelling}) from the last sum in \eqref{eq:fluffy} to cancel the $O(\beta^{-1}d\ln T)$ term in the regret bound. The remaining terms in \eqref{eq:fluffy} can be shown to be small thanks to I) $(\w_t)$ approximate $(\p_t)$ well, II) stability of the mirror descent iterates, and III) the fact that $\psi_t$ is self-concordant together with Lem.~\ref{lem:inter0}. 
Next, we present the full details and guarantee of our algorithm.

	\section{An Efficient Algorithm for Online Portfolio Selection}
	\label{sec:fulldetails}
	Our final algorithm (Alg.~\ref{alg:DONSmeta}) will consist of a set of base algorithms (instances of Alg.~\ref{alg:DONSbase}) and a meta algorithm (instance of Alg.~\ref{alg:DONSmetageneral}). We analyze these algorithms separately in the next two subsections before combining the results in Subsection \ref{sec:Final}.

	\subsection{Base Algorithm: Damped Online Newton Step ($\mathsf{DONS}$)}		
	To analyze our base algorithm (Alg.~\ref{alg:DONSbase}), we will consider the following sequence of regularizers that are defined in terms of the iterates $(\u_t)$ and $(\w_t)$ in Algorithm \ref{alg:DONSbase}; the sequence of observed return vectors $(\r_t)$; and the gradients $(\g_t)\equiv (J \r_t/\inner{\r_t}{\bar \u_t})$:
	\begin{align}
		\Phi_t(\u) \coloneqq  \Psi_{t}(\u)  + \frac{\beta d\|\u\|^2}{8}    + \frac{\beta}{8}\sum_{s=1}^t\inner{\g_t}{\u-\w_t}^2, \quad \forall \u \in \cC_{d-1}.\label{eq:regularizer}   \\% \displaybreak[0]
		\text{where}\ \  \Psi_t(\x) \coloneqq -\sum_{i=1}^d \frac{-\ln \bar x_{i}}{\eta_{t,i}},\quad   \quad  \eta_{t,i} \coloneqq \eta  \cdot e^{\log_{T} (\rho_{t,i}/d)},
	%\ \ \text{and} \ \   \rho_{t,i}\coloneqq \max_{s\in [t]} \frac{1}{\bar u_{s,i}}.
		\label{eq:defs}
	\end{align}
	and $\rho_{t,i}$ is such that $\rho_{t,i} \in \left[ \max_{s\in[t]} (2\bar u_{s,i})^{-1}, \max_{s\in[t]} (\bar u_{s,i})^{-1}\right]$, for all $i\in[d]$, and $\bm{\rho}_0\coloneqq d \mathbf{1}$. In particular, for every $i\in[d]$, $(\rho_{t,i})_t$ satisfies the recursion 
	\begin{align}
		\rho_{t,i}=\mathbb{I}\{2 \rho_{t-1,i}  < \tfrac{1}{\bar u_{t,i}}  \} \cdot \tfrac{1}{\bar u_{t,i}} +\mathbb{I}\{2 \rho_{t-1,i} \geq \tfrac{1}{\bar u_{t,i}}\} \cdot \rho_{t-1,i}. \label{eq:choice}
	\end{align} 
	\cite{luo2018efficient} chose the sequence $(\rho_{t,i})$ such that $\rho_{t,i}=\max_{s\in[t]}1/\bar u_{s,i}$, $\forall i\in[d]$ and $t\in[T]$. Using our new analysis of the damped Newton steps for MD, this choice leads to a regret bound of order $\wtilde O(d^m)$ with $m>2$, which is worse than what we are aiming for. Our choice in \eqref{eq:choice} ensures that the barrier $\Psi_t$ changes at most $O(d \ln T)$ times, which is crucial to proving the desired bound.

	For any $t\in[T]$, the output $\u_{t+1}$ of Algorithm \ref{alg:DONSbase} at round $t+1$ is given by $\u_{t+1}=\w_{t+1}'=(1-1/T)\w_{t+1}+\mathbf{1}/(dT)$, where $\w_{t+1}$ is the damped Newton step:
	\begin{gather}
		\w_{t+1} = \w_t  - \frac{\nabla^{-2}\Phi_t(\w_t) \bm{\nabla}_{t}}{1+ 4\sqrt{e\eta}\|\bm{\nabla}_{t}\|_{\nabla^{-2}\Phi_t( \w_t)}} ,  \label{eq:damped} 
	\end{gather}
	with $\bm{\nabla}_t \coloneqq  \nabla \Phi_t(\w_t) + \sum_{s=1}^t  \left(\g_s - \nabla \Psi_{s}(\w_s)+\nabla \Psi_{s-1}(\w_s) \right)$ and $\w_1 = \mathbf{1}/d$. In part due to the fact that $\operatorname{dom} \Phi_t=\cC_{d-1}$, for all $t\geq 1$, the iterates $(\w_t)$ in Algorithm \ref{alg:DONSbase} are only well defined when the update rule in \eqref{eq:damped} ensures that $\w_{t+1}\in \cC_{d-1}$ for any $\w_t\in \cC_{d-1}$. This is in fact the case as we show next by leveraging the self-concordant property of $\Phi_t$. To simplify notation in the proof of the next lemma (which is in App.~\ref{sec:cproofs}) and in the rest of the paper, we let $\vartheta_i\colon \cC_{d-1}\rightarrow \reals$ be defined by \begin{align}\vartheta_i(\x)\coloneqq - \ln \bar x_i, \quad \forall i \in[d].
		\label{eq:thepsi}
	\end{align}
Note that the self-concordant barrier in \ref{eq:defs} satisfies $\Psi_t (\cdot)= \sum_{i\in[d]} \vartheta_i(\cdot)/\eta_{t,i}$.
	\begin{lemma}
		\label{lem:selfconcord}
		For all $t\geq 1$, $\Phi_t$ in \eqref{eq:regularizer} is a self-concordant function with constant $M_{\Phi_t} \leq \sqrt{\eta e}$. 
	\end{lemma}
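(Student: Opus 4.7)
The plan is to decompose $\Phi_t$ into its three constituent pieces and handle each separately, then combine via the scaling and summation rule cited in the preliminaries (Nesterov \cite[Thm.~5.1.1]{nesterov2018lectures}). Concretely, write
\begin{align}
\Phi_t(\u) \;=\; \Psi_t(\u) \;+\; \tfrac{\beta d}{8}\|\u\|^2 \;+\; \tfrac{\beta}{8}\sum_{s=1}^t \inner{\g_s}{\u-\w_s}^2,
\end{align}
and recall $\Psi_t(\u)=\sum_{i=1}^d \vartheta_i(\u)/\eta_{t,i}$ with $\vartheta_i(\u)=-\ln \bar u_i=-\ln(e_d + J^\top \u)_i$. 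Since both the Euclidean regularizer and each $\inner{\g_s}{\cdot-\w_s}^2$ are quadratic, they are $0$-self-concordant (their third derivative vanishes), so adding them cannot raise the self-concordance constant of $\Psi_t$. It therefore suffices to bound the constant of $\Psi_t$.

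Next I would observe that each $\vartheta_i(\u)=-\ln(e_d+J^\top\u)_i$ is the log-barrier of a half-space, hence $1$-self-concordant. Applying the scaling rule (if $f$ is $M_f$-self-concordant then $\alpha f$ is $(M_f/\sqrt{\alpha})$-self-concordant), the summand $\vartheta_i/\eta_{t,i}$ is $\sqrt{\eta_{t,i}}$-self-concordant. Combining the $d$ summands with the additive rule (Theorem 5.1.1, taking the maximum of the individual constants), $\Psi_t$ is self-concordant with constant $\max_{i\in[d]}\sqrt{\eta_{t,i}}$, and therefore so is $\Phi_t$.

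The remaining step, which is the crux of the claimed numerical constant $\sqrt{\eta e}$, is to verify $\eta_{t,i}\le \eta e$ uniformly in $t$ and $i$. Using $\eta_{t,i}=\eta\cdot e^{\log_T(\rho_{t,i}/d)}=\eta\cdot(\rho_{t,i}/d)^{1/\ln T}$, this reduces to showing $\rho_{t,i}\le dT$. By construction $\rho_{t,i}\le \max_{s\in[t]} 1/\bar u_{s,i}$, so I would verify the pointwise lower bound $\bar u_{s,i}\ge 1/(dT)$ for every $i\in[d]$. This follows from $\u_s=\w_s'=(1-1/T)\w_s+\mathbf{1}/(dT)$ and $\w_s\in\cC_{d-1}$: for $i\le d-1$ we get $\bar u_{s,i}=(\w_s')_i\ge 1/(dT)$ directly, and for $i=d$, $\bar u_{s,d}=1-\sum_{j<d}(\w_s')_j \ge 1-(1-1/T)-(d-1)/(dT)=1/(dT)$ since $\sum_j w_{s,j}\le 1$. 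Plugging $\rho_{t,i}/d\le T$ into the expression for $\eta_{t,i}$ gives $\eta_{t,i}\le \eta e$ and hence $M_{\Phi_t}\le \sqrt{\eta e}$.

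The main ``obstacle'' (really a bookkeeping step) is the last one: one must check that the definition of $\rho_{t,i}$ in \eqref{eq:choice}, combined with the mixing-with-uniform in the definition of $\u_s$, keeps $\bar u_{s,i}$ bounded below by $1/(dT)$ on both the $i<d$ and $i=d$ coordinates. Once this is in place, the self-concordance of $\Psi_t$ together with the quadratic-is-$0$-self-concordant observation gives the claim immediately.
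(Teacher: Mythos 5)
Your handling of the quadratic terms ($0$-self-concordant, so they do not raise the constant), the scaling rule giving $\sqrt{\eta_{t,i}}$ for each $\vartheta_i/\eta_{t,i}$, and the reduction of the whole claim to the bound $\eta_{t,i}\le\eta e$ via $\bar u_{s,i}\ge 1/(dT)$ all match the paper's argument exactly. The gap is in the last step: you invoke ``$\w_s\in\cC_{d-1}$'' as if it were given, but this containment is precisely what is \emph{not} automatic and is the reason the paper proves the lemma by induction on $t$. The damped Newton update in \eqref{eq:damped} is an unconstrained step (there is no projection onto $\cC_{d-1}$), so without further argument $\w_s$ could leave the simplex-like set, in which case some $\bar u_{s,i}$ could be nonpositive, $\rho_{t,i}$ and $\eta_{t,i}$ would be ill-defined, and the bound $\eta_{t,i}\le\eta e$ could not even be stated. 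The paper closes this loop by a joint induction: assuming $\Phi_s$ is $\sqrt{\eta e}$-self-concordant and $\w_s\in\cC_{d-1}$, the Dikin ellipsoid $\{\x:\|\x-\w_s\|_{\nabla^2\Phi_s(\w_s)}<1/\sqrt{\eta e}\}$ lies inside $\dom\Phi_s=\cC_{d-1}$, and the damping factor $1/(1+4\sqrt{\eta e}\,\|\bm{\nabla}_s\|_{\nabla^{-2}\Phi_s(\w_s)})$ forces $\|\w_{s+1}-\w_s\|_{\nabla^2\Phi_s(\w_s)}<1/\sqrt{\eta e}$, hence $\w_{s+1}\in\cC_{d-1}$; only then do $\bar u_{s+1,i}\ge 1/(dT)$ and $\eta_{s+1,i}\le\eta e$ follow, which in turn gives the self-concordance constant of $\Phi_{s+1}$ needed for the next step. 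Your proof needs this inductive containment argument (or an equivalent one) to be complete; as written, the assertion $\w_s\in\cC_{d-1}$ is circular, since establishing it uses the very self-concordance bound being proved.
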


	\begin{algorithm}
		\caption{$\mathsf{DONS}$ (Base Algorithm): Damped Online Newton Step for Portfolio Selection.}
		\label{alg:DONSbase}
		\begin{algorithmic}[1]
			\REQUIRE Parameters $\eta, \beta>0$. 
			%	\STATE~~~~~~~~~~ 
			\STATE Set $\w_1= \mathbf{1}/d\in \reals^{d-1}$, $\bm{\rho}_0=d \mathbf{1}\in \reals^d$, $\G_0=\bm{0}$, and $V_0 = \beta  d I/4 \in \reals^{d-1\times d-1}$.
			\FOR{$t=1,2,\dots$}
			\STATE Play $\u_t =(1-\frac{1}{T})\w_t+\frac{1}{dT}\mathbf{1}$ and observe gradient $\g_t = \nabla \ell_t(\u_t)= J\r_t/\inner{\r_t}{\bar \u_t}$. \label{line:mix} 
			\STATE Set $\rho_{t,i} = \mathbb{I}\{2\rho_{t-1,i}<\frac{1}{\bar u_{t,i}} \}\cdot \frac{1}{\bar u_{t,i}} + \mathbb{I}\{2 \rho_{t-1,i}\geq \frac{1}{\bar u_{t,i}} \}\cdot  \rho_{t-1,i}$, for all $i\in[d]$.  \label{line:therho}
			\STATE Define $\Psi_t(\x)=  -\sum_{i=1}^d \frac{\ln \bar x_{i}}{\eta_{t,i}}$, where $\eta_{t,i} \coloneqq \eta  \cdot \exp({\log_{T} (\rho_{t,i}/d)})$, $\forall i\in[d]$. 
			\STATE Set $\G_{t}=\G_{t-1}+\g_t\cdot  (1-\beta\inner{\g_t}{\w_t}/4) - \nabla \Psi_{t}(\w_t)+\nabla \Psi_{t-1}(\w_t) $.
			\STATE Set $V_t = V_{t-1} + \beta \g_t \g_t^\top/4$ and  $\bm{\nabla}_t = \G_t + V_t \w_t +\nabla \Psi_t(\w_t) + \beta d \w_t/4$.
			\STATE \label{line:interiter}Set $ \w_{t+1}= \w_t - \frac{1}{1+ 4\sqrt{\eta e}\|\bm{\nabla}_t \|_{(\nabla^2\Psi_t( \w_t)+V_t)^{-1}}} {(\nabla^2\Psi_t( \w_t)+V_t)^{-1} \bm{\nabla}_t }$. 
			\ENDFOR
		\end{algorithmic}
	\end{algorithm}
	
	\paragraph{Damped Online Newton Steps as Approximate MD Iterates.}
	A key part of our analysis consists of showing that the intermediate iterates $(\w_t)$ on Line~\ref{line:interiter} of Algorithm \ref{alg:DONSbase} are close to the mirror descent iterates $(\p_t)$ with respect to the sequence of regularizers $(\Phi_t)$ in \eqref{eq:regularizer}:
	\begin{gather}
		\p_{t+1} \in \argmin_{\p \in \cC_{d-1}} F_{t+1}(\p) \coloneqq  \inner{\p}{\tilde \g_t}+ D_{\Phi_t}(  \p,   \p_t), \quad \text{where}\label{eq:mirrordescent} \\
		\tilde \g_t \coloneqq  (1+\beta \inner{\g_t}{\p_t-\w_t}/4)\g_t + \sum_{i=1}^d \left(\frac{1}{\eta_{t,i}} - \frac{1}{\eta_{t-1,i}} \right) (\nabla \vartheta_i(\p_t)- \nabla \vartheta_i(\w_t)), \label{eq:tildedgt}
	\end{gather}
	and $\p_1\coloneqq\mathbf{1}/d$ (recall $(\vartheta_i)$ from \eqref{eq:thepsi}). Next, we formally state this result (recall $\lambda(\cdot,\cdot)$ from \S\ref{sec:prelim}):

	\begin{lemma}
		\label{lem:close}
		For any $\beta \in(0,1/8)$ and $\eta \leq 1/2^{14}$, the iterates $(\w_t)$ in Algorithm \ref{alg:DONSbase} satisfy, 
		\begin{gather}
			\forall t \geq 1, \quad 	\frac{\|\w_t-\p_t  \|_{\nabla^2 F_t(\w_t)} }{2^4\sqrt{e \eta}}\leq \frac{	\lambda(\w_t, F_{t})}{2^3 \sqrt{e\eta}} \leq \lambda(\w_{t-1}, F_{t})^2 \leq C \eta,\label{eq:workingftrl} %(4 + 4 \nu/\eta)^{-1}\|\w_t-\p_t\|_{F_{t+1},\w_t}  \leq  \|\w_t - \p_t\|_{F_t,\w_t}/4 \leq 	\lambda(\w_t, F_{t})/2\leq \lambda(\w_{t-1}, F_{t})^2 \leq 4\nu /\eta
		\end{gather}
		where $C\coloneqq \frac{4e}{4^{-1}\vee (1-1/T)^2}$. Further, we have  $\sum_{t=1}^T \|\w_t - \p_t\|_{\nabla^2 \Phi_{t-1}(\w_t)}^2  \leq 1+ {15 \beta^{-1} d\log T}$.
	\end{lemma}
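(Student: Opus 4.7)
The plan is to recognize the update on Line~\ref{line:interiter} of Alg.~\ref{alg:DONSbase} as one damped Newton step on $F_{t+1}$ starting from $\w_t$, and then invoke the quadratic convergence of the damped Newton method (Lem.~\ref{lem:properties}(II)). The bookkeeping step is the identification $\bm{\nabla}_t=\nabla F_{t+1}(\w_t)$ and $\nabla^2\Psi_t(\w_t)+V_t=\nabla^2 F_{t+1}(\w_t)=\nabla^2\Phi_t(\w_t)$, which is where the MD--FTRL equivalence sketched in \S\ref{sec:damped} enters: since $\nabla f_{t+1}(\p_{t+1})=0$ for the FTRL objective $f_{t+1}(\p)=\sum_{s\leq t}\p^\top(\g_s-\nabla\Psi_s(\w_s)+\nabla\Psi_{s-1}(\w_s))+\Phi_t(\p)$, we have $\nabla F_{t+1}(\w_t)=\nabla f_{t+1}(\w_t)$, and the latter is exactly the sum assembled in the algorithm. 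By Lem.~\ref{lem:selfconcord}, $M_{F_{t+1}}\leq\sqrt{e\eta}$, so applying Lem.~\ref{lem:properties}(II) with the admissible constant $M=4\sqrt{e\eta}$ yields
\begin{align}
\lambda(\w_{t+1},F_{t+1})\leq 4\sqrt{e\eta}\,\lambda(\w_t,F_{t+1})^2\bigl(1+(1+4\sqrt{e\eta}\,\lambda(\w_t,F_{t+1}))^{-1}\bigr)\leq 8\sqrt{e\eta}\,\lambda(\w_t,F_{t+1})^2,
\end{align}
which, after shifting $t\mapsto t-1$, is the middle inequality in \eqref{eq:workingftrl}. The first inequality in \eqref{eq:workingftrl} then follows from Lem.~\ref{lem:properties}(I): once $\lambda(\w_t,F_t)\leq\sqrt{C\eta}$ with $\eta\leq 2^{-14}$, the denominator $1-M_{F_t}\lambda(\w_t,F_t)$ in \eqref{eq:first1} is at least $1/2$, giving $\|\w_t-\p_t\|_{\nabla^2 F_t(\w_t)}\leq 2\lambda(\w_t,F_t)$.

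The main obstacle is closing the induction $\lambda(\w_{t-1},F_t)^2\leq C\eta$; the base $t=1$ is immediate since $\w_1=\p_1=\mathbf{1}/d$. For the inductive step, I would split $\nabla F_t(\w_{t-1})=\tilde\g_{t-1}+(\nabla\Phi_{t-1}(\w_{t-1})-\nabla\Phi_{t-1}(\p_{t-1}))$ and apply the triangle inequality in the $\nabla^{-2}\Phi_{t-1}(\w_{t-1})$-norm. The gradient-difference piece is handled by Lem.~\ref{lem:inter0}: it is bounded by a constant multiple of $\|\w_{t-1}-\p_{t-1}\|_{\nabla^2\Phi_{t-1}(\w_{t-1})}$, which the induction hypothesis (plus Lem.~\ref{lem:properties}(I) and Lem.~\ref{lem:hessians} to convert between $\nabla^2\Phi_{t-2}$ and $\nabla^2\Phi_{t-1}$ at $\w_{t-1}$) already controls. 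The technical heart is bounding $\|\tilde\g_{t-1}\|_{\nabla^{-2}\Phi_{t-1}(\w_{t-1})}$: using \eqref{eq:tildedgt} I would decompose into (i) the dominant piece $(1+O(\beta))\g_{t-1}$, controlled via the log-barrier estimate $\|\g_{t-1}\|_{\nabla^{-2}\Psi_{t-1}(\w_{t-1})}\leq O(\sqrt{\eta})$ that follows from the self-concordance of $\Psi_{t-1}$ on $\cC_{d-1}$ together with the mixing of $\u_{t-1}$ with $\mathbf{1}/(dT)$ on Line~\ref{line:mix}; and (ii) the barrier-correction piece, whose coefficients $1/\eta_{t-1,i}-1/\eta_{t-2,i}$ are nonzero only on the $O(d\log T)$ rounds on which the doubling recursion \eqref{eq:choice} fires, and which, by Lem.~\ref{lem:inter0} applied to the self-concordant difference $\Psi_{t-2}-\Psi_{t-1}$, is again a multiple of $\|\w_{t-1}-\p_{t-1}\|$. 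The restrictive choices $\beta<1/8$ and $\eta\leq 2^{-14}$ are then used to absorb the multiplicative constants and close the induction.

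For the summed bound $\sum_t\|\w_t-\p_t\|^2_{\nabla^2\Phi_{t-1}(\w_t)}\leq 1+15\beta^{-1}d\log T$, I would chain the first inequality in \eqref{eq:workingftrl} with the quadratic-convergence bound to get $\|\w_t-\p_t\|^2_{\nabla^2\Phi_{t-1}(\w_t)}\leq 4\lambda(\w_t,F_t)^2\leq 256\,e\eta\,\lambda(\w_{t-1},F_t)^4\leq 256\,eC\eta^2\,\lambda(\w_{t-1},F_t)^2$. I would then sum the decomposition of $\lambda(\w_{t-1},F_t)^2$ above across $t$: the $\g_{t-1}$-piece contributes $\sum_t\g_{t-1}^\top V_{t-1}^{-1}\g_{t-1}=O(\beta^{-1}d\log T)$ by the standard AdaGrad-style $\log\det$ potential (since $V_t\succeq V_{t-1}+\tfrac{\beta}{4}\g_t\g_t^\top$); the barrier-correction piece contributes $O(d\log T)$ because it is nonzero on at most that many rounds by \eqref{eq:choice}; and the $\|\w_{t-1}-\p_{t-1}\|^2$-piece feeds back into the same sum $S$ with coefficient $O(\eta)<1/2$, so solving the self-bounding inequality $S\leq A+\tfrac{1}{2}S$ with $A=O(\beta^{-1}d\log T)$ yields the stated bound.
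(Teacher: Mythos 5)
Your proposal is correct in its overall architecture and matches the paper on the decisive structural points: the update on Line~\ref{line:interiter} is a damped Newton step on $F_{t+1}$, the identification $\bm{\nabla}_t=\nabla F_{t+1}(\w_t)$ comes from Lemma~\ref{lem:wom}, quadratic convergence (Lemma~\ref{lem:properties}(II)) gives the middle inequality, and Lemma~\ref{lem:properties}(I) converts the Newton decrement into the distance to the minimizer. Where you diverge is in closing the induction $\lambda(\w_{t-1},F_t)^2\le C\eta$. The paper uses the \emph{first} claim of Lemma~\ref{lem:wom}, the pointwise identity $\nabla F_{t+1}(\w_t)=\g_t+\nabla F_t(\w_t)$ (all the $\tilde\g_t$ corrections vanish when evaluated at $\w_t$), so that $\lambda(\w_t,F_{t+1})^2\le 2\lambda(\w_t,F_t)^2+2\|\g_t\|^2_{\nabla^{-2}\Phi_t(\w_t)}$ follows in one line from the induction hypothesis and Lemma~\ref{lem:handylem}; no bound on $\tilde\g_t$, on the barrier corrections, or via Lemma~\ref{lem:inter0} is ever needed inside this proof. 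Your route instead decomposes $\nabla F_t(\w_{t-1})$ through $\p_{t-1}$ and $\tilde\g_{t-1}$, which is workable but substantially heavier: you must control the barrier-correction piece via the self-concordance of $\psi_{t-1}=\Psi_{t-2}-\Psi_{t-1}$, whose constant is $\sqrt{e\eta\log_2 T}$ (cf.\ Lemma~\ref{lem:concord}), so the denominator $1-\sqrt{e\eta\log_2T}\,r$ must be checked to stay bounded away from $0$ — the same care the paper takes in \eqref{eq:cholate}, but which the paper's proof of this lemma avoids entirely. Similarly, for the summed bound the paper sums the \emph{linear} recursion on $\lambda(\w_t,F_t)$ and uses $16\sqrt{e\eta}\lambda\le 1/4$ to absorb the quadratic term, whereas your self-bounding argument on the squared quantities is a valid (and in fact quantitatively stronger, by extra factors of $\eta^2$) alternative.

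Two small corrections. First, to convert between $\nabla^2\Phi_{t-2}(\w_{t-1})$ and $\nabla^2\Phi_{t-1}(\w_{t-1})$ you cite Lemma~\ref{lem:hessians}, but that lemma compares Hessians of a \emph{fixed} self-concordant function at two nearby \emph{points}; the comparison across time indices at a fixed point is Lemma~\ref{lem:handylem} ($\nabla^2\Phi_t(\w_t)\preceq\alpha_T\nabla^2\Phi_{t-1}(\w_t)$). Second, you should state explicitly that $\nabla^2F_t(\w_t)=\nabla^2\Phi_{t-1}(\w_t)$ (since $F_t=\inner{\cdot}{\tilde\g_{t-1}}+D_{\Phi_{t-1}}(\cdot,\p_{t-1})$), which is what lets the first inequality of \eqref{eq:workingftrl} and the summed bound be stated in the $\nabla^2\Phi_{t-1}(\w_t)$ norm. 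Neither point is a gap in the argument, only in the bookkeeping.
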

	The next lemma, which will be useful in the proof of Lemma \ref{lem:close}, essentially shows that the mirror descent iterates in \eqref{eq:mirrordescent} match the FTRL iterates with respect to $(f_t)$, where $f_{t+1}(\p) \coloneqq \sum_{s=1}^t \p^\top (\g_s - \nabla \Phi_s(\w_s) + \nabla \Phi_{s-1}(\w_s))  + \Phi_t(\p)$ (c.f.~discussion in \S\ref{sec:damped}).
	\begin{lemma}
		\label{lem:wom}
		For all $t\in [T]$, we have $\nabla F_{t+1}(\w_t) = \g_t + \nabla F_t(\w_t)$ and  for all $\w\in \cC_{d-1}$,
		\begin{align}
			\nabla F_{t+1}(\w)  = \nabla \Phi_t(\w)+ \sum_{s=1}^t( \g_s  - \nabla \Psi_s(\w_s)+ \nabla \Psi_{s-1}(\w_s)).
		\end{align}
	\end{lemma}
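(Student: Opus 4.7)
The plan is to derive, by unpacking definitions and applying one optimality condition, a single master identity
\begin{align}
\nabla F_{t+1}(\w) = \g_t + \nabla \Phi_t(\w) - \nabla \Phi_t(\w_t) + \nabla F_t(\w_t) \qquad \forall \w \in \inte \cC_{d-1},
\end{align}
which directly yields the first claim when specialized to $\w = \w_t$, and then telescopes to give the second. Differentiating $F_{t+1}(\p) = \inner{\p}{\tilde \g_t} + D_{\Phi_t}(\p, \p_t)$ in $\p$ gives $\nabla F_{t+1}(\w) = \tilde \g_t + \nabla \Phi_t(\w) - \nabla \Phi_t(\p_t)$, so the task reduces to rewriting $\tilde \g_t - \nabla \Phi_t(\p_t)$ in the form $\g_t - \nabla \Phi_t(\w_t) + \nabla F_t(\w_t)$.

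The key algebraic observation is that the somewhat opaque $\tilde \g_t$ from \eqref{eq:tildedgt} is nothing more than the ``first difference'' of the regularizers evaluated at $\p_t$ versus $\w_t$. Setting $\Delta \Phi_t \coloneqq \Phi_t - \Phi_{t-1} = \sum_i (\eta_{t,i}^{-1} - \eta_{t-1,i}^{-1})\vartheta_i + \tfrac{\beta}{8}\inner{\g_t}{\cdot - \w_t}^2$ and noting that the gradient of the quadratic summand vanishes at $\w_t$ but contributes $\tfrac{\beta}{4}\inner{\g_t}{\p_t-\w_t}\g_t$ at $\p_t$, a direct computation gives
\begin{align}
\tilde \g_t = \g_t + \bigl(\nabla \Phi_t(\p_t) - \nabla \Phi_{t-1}(\p_t)\bigr) - \bigl(\nabla \Phi_t(\w_t) - \nabla \Phi_{t-1}(\w_t)\bigr),
\end{align}
matching \eqref{eq:tildedgt} term by term. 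Plugging this into $\nabla F_{t+1}(\w) = \tilde \g_t + \nabla \Phi_t(\w) - \nabla \Phi_t(\p_t)$ and invoking the interior optimality $\nabla F_t(\p_t) = 0$ (which holds because $\Psi_{t-1}$ is a log-barrier that forces $\p_t \in \inte \cC_{d-1}$) to replace $\nabla \Phi_{t-1}(\p_t)$ by $\nabla \Phi_{t-1}(\p_{t-1}) - \tilde \g_{t-1}$, one recognizes the remaining combination $\tilde \g_{t-1} + \nabla \Phi_{t-1}(\w_t) - \nabla \Phi_{t-1}(\p_{t-1})$ as precisely $\nabla F_t(\w_t)$, completing the master identity.

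The second claim follows by iteration. The master identity shows that $\vc_{t+1} \coloneqq \nabla F_{t+1}(\w) - \nabla \Phi_t(\w)$ is independent of $\w$, and specializing to $\w = \w_t$ together with $\nabla \Phi_t(\w_t) - \nabla \Phi_{t-1}(\w_t) = \nabla \Psi_t(\w_t) - \nabla \Psi_{t-1}(\w_t)$ (again, the quadratic piece of $\Delta \Phi_t$ has zero gradient at $\w_t$) gives the telescoping recursion $\vc_{t+1} = \vc_t + \g_t - \nabla \Psi_t(\w_t) + \nabla \Psi_{t-1}(\w_t)$. With the natural base case $F_1 \equiv \Phi_0$ (so $\vc_1 = 0$), unrolling yields the claimed closed form. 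The main obstacle is the bookkeeping in the key algebraic step, where the split between the barrier and quadratic portions of $\Phi_t - \Phi_{t-1}$ must be executed carefully so that the rewriting of $\tilde \g_t$ reproduces \eqref{eq:tildedgt} exactly; once this is in hand, the optimality substitution and the telescoping are routine.
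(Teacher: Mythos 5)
Your proposal is correct and follows essentially the same route as the paper's proof: expand $\nabla F_{t+1}$ from the Bregman-divergence definition, recognize $\tilde\g_t$ as the first difference $\g_t + (\nabla\Phi_t - \nabla\Phi_{t-1})(\p_t) - (\nabla\Phi_t - \nabla\Phi_{t-1})(\w_t)$, invoke $\nabla F_t(\p_t)=\bm{0}$ to eliminate $\nabla\Phi_{t-1}(\p_t)$, and telescope. Your "master identity" is just a repackaging of the paper's recursion $\nabla F_{t+1}(\w)=\tilde\g_t(\w)+\nabla F_t(\w)$ (they are equivalent given the closed form), and your explicit handling of the quadratic term vanishing at $\w_t$ and of the base case $\p_1=\w_1$ is if anything slightly cleaner than the paper's.
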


	\paragraph{Regret Decomposition.}  We now present the main regret decomposition. In the proof, which is Appendix \ref{sec:proofs0}, we follow similar steps as the ones outlined in \S\ref{sec:damped}.
	\begin{lemma}
		\label{lem:decomp}
		Let $T>1$, $c_T\coloneqq 1-1/T$, and $\psi_t \coloneqq \Psi_{t-1} -\Psi_t$. Further let $(\u_t)$ and $(\w_t)$ be as in Algorithm~\ref{alg:DONSbase} with parameters $\eta, \beta>0$, and $(\p_t)$ as in \eqref{eq:mirrordescent}. For any sequence of returns $(\r_t)$ and $\u \in \cC_{d-1}$ such that $\beta \leq 8^{-1}\wedge  |8\inner{\g_t}{\u_t-\u''}|^{-1}$ (recall $\u',\u''$ from \S\ref{sec:prelim}), we have, for all $t\in[T]$,
		\begin{align}
			\sum_{t=1}^T	\frac{\ell_t(\u_t)  - \ell_t(\u)}{c_T} &  \leq \sum_{t=1}^T \inner{\tilde \g_t}{\p_t - \p_{t+1}}  + \sum_{t=1}^T (D_{\psi_t}(\p_t, \w_t)+D_{\Psi_t}(\u',\w_t)- D_{\Psi_{t-1}}(\u',\w_t))  \nn \\ & \quad +O\left(\frac{d \ln T}{\eta} \right) + \sum_{t=1}^T \inner{\g_t}{\w_t - \p_t} + \frac{3\beta}{8} \inner{\g_t}{\w_t- \p_t}^2. \label{eq:trip}
		\end{align}
	\end{lemma}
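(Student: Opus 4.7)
The plan is to follow the MD-for-FTRL template laid out in \S\ref{sec:damped}, enriched to accommodate the quadratic part $\Theta_t \coloneqq \Phi_t - \Psi_t$ of the regularizer. First I would linearize Cover's loss via its exp-concavity: writing $x_t = \inner{\g_t}{\u_t - \u''}$, a direct calculation from the definition of $\ell_t$ and $\g_t$ yields $\ell_t(\u_t) - \ell_t(\u'') = \log(1 - x_t)$. The hypothesis $\beta \leq 1/8 \wedge |8 x_t|^{-1}$ keeps $|x_t|$ bounded by a constant, so the Taylor-based bound $\log(1-x) \leq -x - c\,x^2$ (valid on the relevant range for some absolute $c>0$) yields $\ell_t(\u_t) - \ell_t(\u'') \leq -\inner{\g_t}{\u_t - \u''} - c\inner{\g_t}{\u_t - \u''}^2$. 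Since $\u_t = \w_t'$, we have $\u_t - \u'' = c_T(\w_t - \u')$; applying Lemma~\ref{lem:additive} twice (to pass from $\u$ to $\u''$ at additive cost $O(1)$) and dividing by $c_T$ reduces the claim to upper-bounding $\sum_t \inner{\g_t}{\w_t - \u'}$ together with a negative quadratic of magnitude $c\,c_T \sum_t \inner{\g_t}{\w_t - \u'}^2$ that will absorb the stray positive quadratics later.

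Next I would apply the three-piece decomposition from \S\ref{sec:damped}: $\inner{\g_t}{\w_t - \u'} = \inner{\g_t}{\w_t - \p_t} + \inner{\tilde\g_t}{\p_t - \u'} + \inner{\g_t - \tilde\g_t}{\p_t - \u'}$. A direct computation from \eqref{eq:tildedgt} gives $\g_t - \tilde\g_t = -(\beta/4)\inner{\g_t}{\p_t - \w_t}\g_t + \nabla\psi_t(\p_t) - \nabla\psi_t(\w_t)$, where $\psi_t = \Psi_{t-1} - \Psi_t$ is a non-negative combination of convex self-concordant functions since $(\eta_{t,i})$ is non-decreasing. The barrier piece is processed by the three-point Bregman identity, $\inner{\nabla\psi_t(\p_t) - \nabla\psi_t(\w_t)}{\p_t - \u'} = D_{\psi_t}(\u', \p_t) + D_{\psi_t}(\p_t, \w_t) - D_{\psi_t}(\u', \w_t)$; combined with the observation $-D_{\psi_t}(\u', \w_t) = D_{\Psi_t}(\u', \w_t) - D_{\Psi_{t-1}}(\u', \w_t)$, this immediately produces the $D_{\psi_t}(\p_t,\w_t)$ and telescoping terms in the RHS of \eqref{eq:trip}. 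The scalar-multiple-of-$\g_t$ piece is controlled by writing $\inner{\g_t}{\p_t - \u'} = \inner{\g_t}{\p_t - \w_t} + \inner{\g_t}{\w_t - \u'}$ and applying Young's inequality $|ab| \leq (a^2 + b^2)/2$ with appropriate splits, yielding $(3\beta/8)\inner{\g_t}{\w_t - \p_t}^2$ plus an $O(\beta)\inner{\g_t}{\w_t - \u'}^2$ residual absorbed by the negative exp-concavity quadratic.

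Finally I would split $\inner{\tilde\g_t}{\p_t - \u'} = \inner{\tilde\g_t}{\p_t - \p_{t+1}} + \inner{\tilde\g_t}{\p_{t+1} - \u'}$; the first piece is already on the RHS of \eqref{eq:trip}, while the second is bounded via first-order optimality of $\p_{t+1}$ in \eqref{eq:mirrordescent} and the three-point identity (as in \cite[proof of Lem.~5]{luo2018efficient}) as $\inner{\tilde\g_t}{\p_{t+1} - \u'} \leq D_{\Phi_t}(\u', \p_t) - D_{\Phi_t}(\u', \p_{t+1})$. Splitting $\Phi_t = \Psi_t + \Theta_t$ and combining with $\sum_t D_{\psi_t}(\u', \p_t)$ from the previous step, the $\Psi_t$-contributions telescope to $D_{\Psi_0}(\u', \p_1)$; a direct calculation using $\nabla\Psi_0(\p_1) = 0$ (because $\p_1 = \mathbf{1}/d$ minimizes $\Psi_0$) together with $\bar u'_i \geq 1/(dT)$ bounds this by $d \ln T/\eta$, producing the $O(d\ln T/\eta)$ term. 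The $\Theta_t$-contribution evaluates to $(\beta/8)\sum_t \inner{\g_t}{\u' - \p_t}^2$, which is split by $(a+b)^2 \leq 2a^2 + 2b^2$ between the $\inner{\g_t}{\w_t - \p_t}^2$ budget and a residual absorbed by the exp-concavity quadratic. The main obstacle is the careful bookkeeping: (i) the coefficient of $\inner{\g_t}{\w_t - \p_t}^2$ must collapse to exactly $3\beta/8$, which demands choosing the Young's splits sharply using $\beta \leq 1/8$; and (ii) all $\inner{\g_t}{\w_t - \u'}^2$ terms must be dominated by $c\,c_T\inner{\g_t}{\w_t - \u'}^2$, which is verified because $c\,c_T$ is an absolute constant while $\beta$ is small.
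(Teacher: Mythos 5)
Your proposal follows essentially the same route as the paper's proof: exp-concavity to linearize Cover's loss with a negative quadratic, the change of variables $\u_t-\u''=c_T(\w_t-\u')$, the three-piece decomposition of $\inner{\g_t}{\w_t-\u'}$ through the modified gradients $\tilde\g_t$, the three-point Bregman identity for the barrier difference $\psi_t$, first-order optimality of $\p_{t+1}$ for the telescoping $\Phi_t$-divergences (this is exactly Lemma~\ref{lem:neagtiveterm}, which you rederive inline), and Young's inequality to distribute the quadratic residuals between the $\inner{\g_t}{\w_t-\p_t}^2$ budget and the exp-concavity quadratic. All the algebraic identities you state ($\g_t-\tilde\g_t$, the $\Theta_t$-contribution equal to $\tfrac{\beta}{8}\inner{\g_t}{\u'-\p_t}^2$, the cancellation of $D_{\psi_t}(\u',\p_t)$) check out.

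The one genuine flaw is your exp-concavity step. You claim $\log(1-x)\leq -x - c\,x^2$ for an \emph{absolute} constant $c>0$ on the relevant range, and then lean on this in your final bookkeeping: ``all $\inner{\g_t}{\w_t-\u'}^2$ terms must be dominated by $c\,c_T\inner{\g_t}{\w_t-\u'}^2$, which is verified because $c\,c_T$ is an absolute constant while $\beta$ is small.'' This premise is false: $x_t=\inner{\g_t}{\u_t-\u''}$ can be as negative as $-\max_i \bar u''_i/\bar u_{t,i}$, which is unbounded, and for $|x|$ large one needs $c\lesssim 1/|x|$ for the inequality to hold. The hypothesis $\beta\leq 8^{-1}\wedge|8\inner{\g_t}{\u_t-\u''}|^{-1}$ exists precisely to license the bound with coefficient $\beta/2$ (as in the proof of Lemma~5 of \cite{luo2018efficient}), i.e.\ the negative quadratic is $-\tfrac{\beta}{2}\sum_t\inner{\g_t}{\u_t-\u''}^2$, not $-c\sum_t(\cdot)^2$ with $c$ absolute. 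Since every positive residual you generate is also of order $\beta$, the absorption is not ``trivial because $\beta$ is small''; it works only because the residuals sum to exactly $\tfrac{\beta}{8}+\tfrac{3\beta}{8}=\tfrac{\beta}{2}$ (modulo $c_T$ factors), which is the tight constant-tracking the paper carries out in \eqref{eq:laststs} and \eqref{eq:volk}. So the conclusion survives, but the justification you give for step (ii) is the wrong one, and with a sloppier Young's split the argument would genuinely fail.
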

	Next, we bound each term in this decomposition starting with $\sum_{t=1}^T D_{\psi_t}(\p_t,\w_t)$. To show that this term is small, we rely on the fact that $\psi_t$ is a self-concordant functions, which holds true by our choice of ``doubling'' $(\rho_{t,i})$'s in \eqref{eq:choice}. This enables us to relate $D_{\psi_t}(\p_t,\w_t)$ to $\|\w_t-\p_t\|_{\nabla^2 \Phi_{t-1}(\w_{t})}$ via Lem.~\ref{lem:inter0}, which we can then bound using Lem.~\ref{lem:close}.
	\begin{lemma}
		\label{lem:concord}
		Let $(\Psi_t)$ be as in \eqref{eq:defs}. Then, the function $\psi_t\coloneqq \Psi_{t-1}-\Psi_t$ is a self-concordant function with constant $\sqrt{e \eta\log_2 T} $. Furthermore, the mirror descent iterates $(\p_t)$ in \eqref{eq:mirrordescent} and the iterates $(\w_t)$ in Algorithm \ref{alg:DONSbase} satisfy $\sum_{t=1}^T D_{\psi_t}(\p_t, \w_t)\leq O(d \ln T)$. 
	\end{lemma}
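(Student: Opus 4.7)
The plan is to prove the two claims of Lemma \ref{lem:concord} in turn.

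For the \emph{self-concordance of $\psi_t$}, I write $\psi_t = \Psi_{t-1} - \Psi_t = \sum_{i=1}^d c_{t,i}\,\vartheta_i$, where $c_{t,i} \coloneqq 1/\eta_{t-1,i} - 1/\eta_{t,i}$. The doubling recursion \eqref{eq:choice} makes $(\rho_{t,i})_t$ non-decreasing, so $\eta_{t,i}$ is non-decreasing and $c_{t,i}\ge 0$. Each $\vartheta_i(\x) = -\ln\bar x_i$ is the pullback of $-\ln$ by an affine map and is therefore self-concordant with constant $1$. The core quantitative step is a lower bound on $c_{t,i}$ whenever it is positive: then \eqref{eq:choice} forces $\rho_{t,i} \geq 2\rho_{t-1,i}$, so $\eta_{t,i}/\eta_{t-1,i} \geq e^{\log_T 2} = e^{\ln 2/\ln T}$. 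Combining $e^x-1\geq x$ with the upper bound $\eta_{t,i}\leq \eta e$ (which follows from $\rho_{t,i}\leq dT$, in turn a consequence of $\bar u_{s,i}\geq 1/(dT)$ after the mix on Line~\ref{line:mix}), I get $c_{t,i} \geq 1/(e\eta\log_2 T)$. Applying the scaling-and-sum rule for self-concordance---each $c_{t,i}\vartheta_i$ has constant $1/\sqrt{c_{t,i}}$, and a sum of self-concordant functions has constant at most the largest summand's constant---yields $M_{\psi_t} \leq \sqrt{e\eta\log_2 T}$.

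For the \emph{sum bound}, I exploit that $\psi_t \equiv 0$---and hence $D_{\psi_t}(\p_t,\w_t)=0$---on most rounds. Let $\cT \coloneqq \{t \in [T] : c_{t,i} > 0 \text{ for some } i\}$. Each $\rho_{t,i}$ at least doubles on update, starts at $d$, and never exceeds $dT$, so it updates at most $\log_2 T$ times; a union bound gives $|\cT| \leq d\log_2 T$. For $t \in \cT$, I combine the convexity-driven inequality $D_{\psi_t}(\p_t,\w_t) \leq \inner{\nabla\psi_t(\p_t) - \nabla\psi_t(\w_t)}{\p_t - \w_t}$ (valid since $D_{\psi_t}(\w_t,\p_t)\geq 0$) with Cauchy--Schwarz in the $\nabla^2\psi_t(\w_t)$ metric and Lem.~\ref{lem:inter0} applied to $f = \psi_t$, giving $D_{\psi_t}(\p_t,\w_t) \leq r_t^2/(1 - M_{\psi_t} r_t)$ with $r_t \coloneqq \|\p_t - \w_t\|_{\nabla^2\psi_t(\w_t)}$. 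Since $\nabla^2\psi_t(\w_t) \preceq \nabla^2\Psi_{t-1}(\w_t) \preceq \nabla^2\Phi_{t-1}(\w_t) = \nabla^2 F_t(\w_t)$, the per-round half of Lem.~\ref{lem:close} yields $r_t \leq 2^4\sqrt{e\eta}\,C\eta = O(\eta^{3/2})$; for $\eta\leq 1/2^{14}$ this forces $M_{\psi_t} r_t \leq 1/2$, so $D_{\psi_t}(\p_t,\w_t) \leq 2 r_t^2 = O(\eta^3)$. Summing only over $t\in\cT$ then yields $\sum_{t=1}^T D_{\psi_t}(\p_t,\w_t) \leq |\cT|\cdot O(\eta^3) \leq O(d\log T)$.

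The main technical obstacle is the first claim: pinning the self-concordance constant to exactly $\sqrt{e\eta\log_2 T}$ relies on the sharp estimate $e^{\ln 2/\ln T}-1\geq \ln 2/\ln T$ together with the tight bound $\eta_{t,i}\leq \eta e$---a cruder linearization would introduce extra logarithmic factors. The sum bound is then clean accounting, but hinges on two complementary facts: the doubling schedule for $\rho_{t,i}$ restricts the support of $(\psi_t)$ to $O(d\log T)$ rounds, and Lem.~\ref{lem:close} supplies a strong \emph{per-round} (rather than only aggregate) bound on $\|\p_t-\w_t\|_{\nabla^2\Phi_{t-1}(\w_t)}$ small enough to absorb $\eta^3$ into the final $O(d\log T)$ target.
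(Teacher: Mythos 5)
Your proof is correct and follows essentially the same route as the paper's: the self-concordance constant comes from the scaling-and-sum rule together with the lower bound $c_{t,i}\ge 1/(e\eta\log_2 T)$ via $e^x-1\ge x$, and the sum bound comes from restricting to the $O(d\ln T)$ rounds where some $\rho_{t,i}$ doubles and using the per-round estimate of Lem.~\ref{lem:close}. The only (immaterial) difference is the middle step: the paper bounds $D_{\psi_t}(\p_t,\w_t)$ via Taylor's theorem and the Hessian comparison of Lem.~\ref{lem:hessians}, whereas you use the symmetrized-Bregman inequality with Cauchy--Schwarz and Lem.~\ref{lem:inter0}; both yield $O(\eta^3)$ per round.
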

	We move to the stability term $\sum_{t=1}^T	\inner{\tilde \g_t}{ \p_t - \p_{t+1}}$, which is the most technical one to bound due to the modified gradients $(\tilde \g_t)$. We will use use H\"older's inequality and the triangle inequality to bound $\inner{\tilde \g_t}{ \p_t - \p_{t+1}}$ in terms of $\|\p_t-\p_{t+1}\|_{\nabla^2 \Phi_t(\w_t)}$, $\|\g_t\|_{\nabla^{-2}\Phi_t(\w_t)}$, and $\|\nabla \vartheta_i(\p_t) -\nabla \vartheta_i(\w_t)\|_{\nabla^{-2}\Phi_t(\w_t)}$. Then, we use the self-concordance property in Lem.~\ref{lem:inter0} to relate the latter term to $\|\w_t-\p_t\|_{\nabla^2 \Phi_t(\w_t)}$, which (thanks to Lem.~\ref{lem:close}) will allow us to show that the stability term is small.
	\begin{lemma}
		\label{lem:stabi}
		Let $T>1$ and $(\tilde \g_t)$ be as in \eqref{eq:tildedgt}. If $\eta \leq 1/2^{14}$ and $\beta \in(0,1/8) $, then the iterates $(\p_t)$ in \eqref{eq:mirrordescent} satisfy $
		\sum_{t=1}^T 	\inner{\tilde \g_t}{ \p_t - \p_{t+1}}  \leq \frac{18 d \ln T}{\beta} + O(d \ln T).$
	\end{lemma}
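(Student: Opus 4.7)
The plan is to apply Cauchy-Schwarz in the local Hessian norm $H_t \coloneqq \nabla^2 \Phi_t(\w_t)$, bounding each term as $\inner{\tilde\g_t}{\p_t - \p_{t+1}} \leq \|\tilde\g_t\|_{H_t^{-1}}\|\p_t - \p_{t+1}\|_{H_t}$, and then to show the two factors are of the same order so that the task reduces to controlling $\sum_t \|\tilde\g_t\|_{H_t^{-1}}^2$. To compare the two factors I use that $F_{t+1}(\p) = \inner{\tilde\g_t}{\p} + D_{\Phi_t}(\p, \p_t)$ inherits self-concordance from $\Phi_t$ (with constant $M \leq \sqrt{e\eta}$ by Lem.~\ref{lem:selfconcord}): since $\p_{t+1}$ minimizes $F_{t+1}$, Lem.~\ref{lem:properties}(I) applied with $\x = \p_t$, $\x_f = \p_{t+1}$ and Newton decrement $\lambda = \|\tilde\g_t\|_{\nabla^{-2}\Phi_t(\p_t)}$ yields $\|\p_t - \p_{t+1}\|_{\nabla^2 \Phi_t(\p_t)} \leq \lambda/(1 - M\lambda)$. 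Combining this with Lem.~\ref{lem:hessians} (to swap the base point from $\p_t$ to $\w_t$, using that $\|\p_t - \w_t\|_{H_t} = O(\sqrt\eta)$ via Lem.~\ref{lem:close}) delivers $\|\p_t - \p_{t+1}\|_{H_t} \leq C\|\tilde\g_t\|_{H_t^{-1}}$ for an absolute constant $C$, and hence $\inner{\tilde\g_t}{\p_t - \p_{t+1}} \leq C\|\tilde\g_t\|_{H_t^{-1}}^2$.

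Next I split $\tilde\g_t$ via the triangle inequality into $\g_t^{(1)} = \bigl(1 + \tfrac{\beta}{4}\inner{\g_t}{\p_t - \w_t}\bigr)\g_t$ and $\g_t^{(2)} = \sum_i (\eta_{t,i}^{-1} - \eta_{t-1,i}^{-1})(\nabla\vartheta_i(\p_t) - \nabla\vartheta_i(\w_t))$. The prefactor on $\g_t$ in $\g_t^{(1)}$ is at most $2$ because $\|\g_t\|_{V_t^{-1}} \leq 2/\sqrt\beta$ (Sherman--Morrison on $V_t = V_{t-1} + \tfrac{\beta}{4}\g_t\g_t^\top$) while $\|\p_t - \w_t\|_{H_t}$ is small by Lem.~\ref{lem:close}. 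For the contribution of $\g_t^{(1)}$, I use $H_t \succeq V_t$ to obtain $\|\g_t\|_{H_t^{-1}}^2 \leq \g_t^\top V_t^{-1}\g_t$ and then the standard telescoping identity $\log \det V_t - \log \det V_{t-1} \geq \tfrac{\beta}{4}\g_t^\top V_t^{-1}\g_t$, which sums to $\sum_t \|\g_t\|_{H_t^{-1}}^2 \leq \tfrac{4}{\beta}\log\bigl(\det V_T/\det V_0\bigr) = O(d\ln T/\beta)$; careful constant-tracking here delivers the announced $\tfrac{18\,d\ln T}{\beta}$.

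For $\g_t^{(2)}$, I exploit the self-concordance of each $\vartheta_i$ (a log-barrier for a half-space, hence $M=1$). Lem.~\ref{lem:inter0} yields $\|\nabla\vartheta_i(\p_t) - \nabla\vartheta_i(\w_t)\|_{\nabla^{-2}\vartheta_i(\w_t)} \leq O(\|\p_t - \w_t\|_{\nabla^2\vartheta_i(\w_t)})$ provided the local radius is $<1$ (true by Lem.~\ref{lem:close}). The inequality $H_t \succeq \nabla^2\Psi_t(\w_t) \succeq \eta_{t,i}^{-1}\nabla^2\vartheta_i(\w_t)$ converts between dual/primal norms, picking up a factor $\eta_{t,i}$ overall. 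The crucial feature of the doubling rule \eqref{eq:choice} is that for each $i$ the sequence $(\eta_{t,i})$ has at most $\lceil\log_2 T\rceil$ jumps of multiplicative size $2^{1/\ln T}$, so $|\eta_{t,i}^{-1} - \eta_{t-1,i}^{-1}|\,\eta_{t,i} \leq 2\ln 2/\ln T$ at a jump and $0$ elsewhere. Cauchy-Schwarz across $t$, combined with $\sum_t \|\p_t - \w_t\|_{\nabla^2\Phi_{t-1}(\w_t)}^2 \leq 1 + 15 d\ln T/\beta$ from Lem.~\ref{lem:close}, then bounds the $\g_t^{(2)}$-contribution by the residual $O(d\ln T)$ term.

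The main obstacle will be this last barrier-drift step: it requires juggling three Hessians ($\nabla^2\vartheta_i$, $\nabla^2\Psi_t$, $\nabla^2\Phi_t$) at two base points ($\p_t$ and $\w_t$) while simultaneously leveraging the temporal sparsity of the $(\eta_{t,i})$ updates. An alternative that simplifies the bookkeeping is to recognize $\g_t^{(2)} = \nabla\psi_t(\w_t) - \nabla\psi_t(\p_t)$ where $\psi_t = \Psi_{t-1} - \Psi_t$ is self-concordant by Lem.~\ref{lem:concord}, so that Lem.~\ref{lem:inter0} applies directly to $\psi_t$; one still needs a final conversion from the $\nabla^{-2}\psi_t(\w_t)$ norm to $H_t^{-1}$, but the sparsity argument above remains in force.
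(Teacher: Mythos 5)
Your overall strategy coincides with the paper's: reduce each term to $O(\|\tilde\g_t\|^2_{\nabla^{-2}\Phi_t(\w_t)})$ via the self-concordance of $F_{t+1}$ and the optimality of $\p_{t+1}$, then split $\tilde\g_t$ into the scaled-gradient piece (summed by the log-determinant telescoping to $O(d\ln T/\beta)$) and the barrier-drift piece (controlled by the self-concordance of $\psi_t=\Psi_{t-1}-\Psi_t$ and the $O(d\ln T)$ sparsity of the $\rho$-updates). Your route to $\|\p_t-\p_{t+1}\|_{H_t}\leq C\|\tilde\g_t\|_{H_t^{-1}}$ --- observing that $\lambda(\p_t,F_{t+1})=\|\tilde\g_t\|_{\nabla^{-2}\Phi_t(\p_t)}$ and invoking Lem.~\ref{lem:properties}(I) --- is a legitimate and arguably cleaner alternative to the paper's argument via Nesterov's lower bound $F_{t+1}(\p_t)-F_{t+1}(\p_{t+1})\geq \omega(\|\p_t-\p_{t+1}\|)$; but Lem.~\ref{lem:properties}(I) requires $\lambda(\p_t,F_{t+1})<1/M_f$, so you must first establish that $\|\tilde\g_t\|_{\nabla^{-2}\Phi_t(\p_t)}\leq 1/(2\sqrt{e\eta})$, which does follow from the per-term bounds you use later ($\|\g_t\|^2_{\nabla^{-2}\Phi_t(\w_t)}\leq e\eta/c_T^2$ and the drift term being $O(\eta^{3/2})$, plus a Hessian swap) but needs to be stated before the lemma is applied.

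The one step that would fail as written is the final summation of the drift contribution. You propose to close it with the aggregate bound $\sum_t\|\p_t-\w_t\|^2_{\nabla^2\Phi_{t-1}(\w_t)}\leq 1+15\beta^{-1}d\ln T$ from Lem.~\ref{lem:close}; that bound carries a $1/\beta$, so combining it with the indicator of a $\rho$-change yields only $O(d\ln T/\beta)$, not the residual $O(d\ln T)$ --- and this would corrupt the leading constant $18 d\ln T/\beta$ on which Thm.~\ref{lem:thekey0} (constant $34$) and Lem.~\ref{lem:epochregret} (constant $204$) rely. The correct tool is the other half of Lem.~\ref{lem:close}: the \emph{per-round} bound $\|\p_t-\w_t\|_{\nabla^2 F_t(\w_t)}\leq O(\eta^{3/2})$, so that each of the $O(d\ln T)$ change-rounds contributes $O(\eta^3)$ and the total is $O(\eta^3 d\ln T)=O(d\ln T)$, which is how the paper proceeds. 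A smaller inaccuracy: your claim that $|\eta_{t,i}^{-1}-\eta_{t-1,i}^{-1}|\,\eta_{t,i}\leq 2\ln 2/\ln T$ at a jump is wrong, since the doubling rule only \emph{lower}-bounds the jump in $\rho_{t,i}$ and this quantity can be as large as $e-1$; this is harmless for the order of the bound (constant per jump times $O(d\ln T)$ jumps), but the temporal-sparsity argument, not the jump size, is what does the work.
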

	We now bound the sum divergences which will allow us to cancel the undesirable $O(\beta^{-1} d\ln T)$ term in the regret bound as discussed in \S\ref{sec:barrons}.
	\begin{lemma}
		\label{lem:decomplemnew}
		Let $T>1$ and $(\u_t)$ be the iterates of Alg.~\ref{alg:DONSbase} with parameters $\beta\in(0,1/8)$ and $\eta\leq 1/2^{14}$. For any sequence $(\r_t)$, the iterates $(\p_t)$ in \eqref{eq:mirrordescent} satisfy (recall $\u'$ and $\u''$ from \S\ref{sec:prelim}) $	\sum_{t=1}^T (D_{\Psi_t}(\u', \w_t) - D_{\Psi_{t-1}}(\u', \w_{t})) \leq   \frac{-1}{16 \eta \ln T} \sum_{i=1}^d \max_{t\leq T}  \frac{\bar u''_{i}}{\bar u_{t,i}}   + O\left({d}/{\eta} \right)$, for all $ \u\in \cC_{d-1}$.
	\end{lemma}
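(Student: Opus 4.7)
The plan is to decompose the left-hand side coordinatewise, restrict to rounds where the learning rate actually changes, and exploit the doubling structure of $(\rho_{t,i})$ from \eqref{eq:choice} to relate those discrete events to the target quantity $\max_t \bar u''_i/\bar u_{t,i}$. Since $\Psi_t=\sum_{i=1}^d \vartheta_i/\eta_{t,i}$ and Bregman divergences are linear in the potential,
\begin{align}
\sum_{t=1}^T \bigl(D_{\Psi_t}(\u',\w_t)-D_{\Psi_{t-1}}(\u',\w_t)\bigr) = -\sum_{i=1}^d \sum_{t=1}^T \delta_{t,i}\, D_{\vartheta_i}(\u',\w_t),
\end{align}
where $\delta_{t,i}\coloneqq \eta_{t-1,i}^{-1}-\eta_{t,i}^{-1}\geq 0$ by monotonicity of $(\rho_{t,i})$. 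Crucially, $\delta_{t,i}$ vanishes outside the set $S_i\subseteq[T]$ of rounds on which $\rho_{t,i}$ is updated, and by \eqref{eq:choice} we have $\rho_{t,i}=1/\bar u_{t,i}$ and $\rho_{t,i}\geq 2\rho_{t-1,i}$ for each $t\in S_i$.

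Next I would quantitatively lower-bound each factor on $S_i$. Writing $c_{t,i}\coloneqq \ln(\rho_{t,i}/d)/\ln T\in[0,1]$ so that $\eta_{t,i}=\eta\, e^{c_{t,i}}$, the mean value theorem applied to $x\mapsto e^{-x}$ yields
\begin{align}
\delta_{t,i} = \tfrac{1}{\eta}\bigl(e^{-c_{t-1,i}}-e^{-c_{t,i}}\bigr) \geq \tfrac{1}{e\eta}(c_{t,i}-c_{t-1,i}) \geq \tfrac{\ln 2}{e\eta \ln T},\qquad t\in S_i,
\end{align}
using $c_{t,i}-c_{t-1,i}\geq \ln 2/\ln T$. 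For the divergence, $D_{\vartheta_i}(\u',\w_t)=\bar u'_i/\bar w_{t,i}-\ln(\bar u'_i/\bar w_{t,i})-1\geq \bar u'_i/(2\bar w_{t,i})-1$ by the elementary inequality $\ln x\leq x/2$. Substituting $1/\bar w_{t,i}\geq (1-1/T)/\bar u_{t,i}$ from the affine relation $\bar u_{t,i}=(1-1/T)\bar w_{t,i}+1/(dT)$, together with $\bar u''_i\leq 2\bar u'_i$ (from $\bar u'_i\geq 1/(dT)$), I obtain $D_{\vartheta_i}(\u',\w_t)\geq c_1\,\bar u''_i/\bar u_{t,i}-1$ for an explicit constant $c_1>0$.

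To promote this to a maximum over $t$, fix $i$ and, assuming $S_i\neq \emptyset$, pick $t_i^\star\in\arg\min_{t\in[T]} \bar u_{t,i}$ together with the witness $t'_i\coloneqq \max S_i$. The invariant $\rho_{T,i}\geq \max_{s\in[T]} 1/(2\bar u_{s,i})$ maintained by \eqref{eq:choice}, combined with $\rho_{T,i}=\rho_{t'_i,i}=1/\bar u_{t'_i,i}$, forces $\bar u_{t'_i,i}\leq 2\bar u_{t_i^\star,i}$ and hence $\bar u''_i/\bar u_{t'_i,i}\geq \tfrac12 \max_t \bar u''_i/\bar u_{t,i}$. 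Retaining only the $t=t'_i$ term in the non-negative sum $\sum_t \delta_{t,i}D_{\vartheta_i}(\u',\w_t)$, combining the three displayed bounds, and summing over $i$ produces
\begin{align}
\sum_{i}\sum_{t}\delta_{t,i}\, D_{\vartheta_i}(\u',\w_t) \;\geq\; \frac{1}{16\eta\ln T}\sum_{i=1}^d \max_{t\in[T]}\frac{\bar u''_i}{\bar u_{t,i}} \;-\; \frac{Cd}{\eta\ln T}
\end{align}
for an absolute constant $C>0$. Negating gives the stated inequality once $Cd/(\eta\ln T)$ is absorbed into $O(d/\eta)$; the edge case $S_i=\emptyset$ is trivial since $\rho_{t,i}\equiv d$ then forces $\bar u_{t,i}\geq 1/(2d)$ and hence $\max_t \bar u''_i/\bar u_{t,i}\leq 2d$, which fits directly into the $O(d/\eta)$ residual.

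The main obstacle is not conceptual but arithmetic: the doubling of $(\rho_{t,i})$ supplies the correct scaling effortlessly, yet pinning down the stated constant $1/16$ requires careful bookkeeping of the swaps $\u\leftrightarrow \u'\leftrightarrow \u''$ and $\w_t\leftrightarrow \u_t$ induced by the mixing $(1-1/T)\cdot+\mathbf{1}/(dT)$, together with a sufficiently sharp form of $x-\ln x-1\geq \bullet$ valid in the regime where $\bar u'_i/\bar w_{t'_i,i}$ is large (so the additive $-1$'s aggregate into $O(d/\eta)$ rather than scaling with the maximum). The conceptual takeaway is that the doubling rule guarantees a \emph{single} update round $t'_i\in S_i$ already exposes the worst coordinatewise ratio $\bar u''_i/\bar u_{t,i}$ up to a constant, producing the negative term needed downstream to cancel the $O(\beta^{-1}d\ln T)$ stability cost.
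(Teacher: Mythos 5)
Your proof follows essentially the same route as the paper's: decompose coordinatewise, note the increments $\eta_{t-1,i}^{-1}-\eta_{t,i}^{-1}$ vanish off the update set and are at least $\tfrac{\ln 2}{e\eta\ln T}$ there, keep only the last update round, and use the doubling invariant $\rho_{T,i}\ge \max_s(2\bar u_{s,i})^{-1}$ plus the $\u'\leftrightarrow\u''$ and $\w_t\leftrightarrow\u_t$ conversions to recover the max ratio. The only caveat is the one you already flag: the crude bound $h(x)\ge x/2-1$ yields a constant nearer $\tfrac{\ln 2}{16e}$ than $\tfrac1{16}$, and (as the paper does) one should instead keep $h(x)=x-1-\ln x$ and absorb the $O(\ln(dT))$ term into the $O(d/\eta)$ residual.
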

	It remains to upper bound the sums $\sum_{t=1}^T\inner{\g_t}{\w_t-\p_t}^i$, for $i\in\{1,2\}$ which are expected to be small since $(\w_t)$ are close to $(\p_t)$ by Lemma \ref{lem:close}:
	\begin{lemma}
		\label{lem:pseudostab}
		Let $T>1$, $c_T\coloneqq 1-1/T$, and $S_T\coloneqq 1+ 4{\sqrt{15}\beta^{-1} d \ln T}$. Further, let $(\w_t)$ be the iterates in Alg.~\ref{alg:DONSbase} with parameters $\beta\in(0,1/8)$ and $\eta\leq 1/2^{14}$. For any sequence of returns $(\r_t)$, the mirror descent iterates in \eqref{eq:mirrordescent} satisfy $
			\sum_{t=1}^T\inner{\g_t}{\w_t-\p_t}\leq  S_T$ and $ 	\sum_{t=1}^T\inner{\g_t}{\w_t-\p_t}^2 \leq  \frac{64 e^2 \eta^{2}}{ c^3_T}S_T$.
	\end{lemma}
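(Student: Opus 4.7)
The plan is to prove both bounds through the chain $\sum_t X_t \leq \sum_t |X_t| \leq S_T$ combined with $\sum_t X_t^2 \leq (\max_t|X_t|)\cdot\sum_t|X_t|$, where I write $X_t \coloneqq \inner{\g_t}{\w_t-\p_t}$. Two Cauchy--Schwarz steps do the work: one to get a strong per-round bound on $|X_t|$ using the log-barrier structure of $\Psi_{t-1}$, and one to control $\sum_t |X_t|$ via the algorithm's matrix $V_t = (\beta d/4)I + (\beta/4)\sum_{s\leq t}\g_s\g_s^\top$.

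First I would establish the per-round bound $|X_t| \leq (\sqrt{e\eta}/c_T)\,\|\w_t-\p_t\|_{\nabla^2\Phi_{t-1}(\w_t)}$. Since $\bar\u_t = c_T\bar\w_t + \mathbf{1}/(dT)$ implies $\inner{\r_t}{\bar\u_t}\geq c_T\inner{\r_t}{\bar\w_t}$, and since $X_t = \inner{\r_t}{\bar\w_t-\bar\p_t}/\inner{\r_t}{\bar\u_t}$, a weighted Cauchy--Schwarz with the pairing $(r_{t,i}\bar w_{t,i})^{1/2}$ versus $(r_{t,i}/\bar w_{t,i})^{1/2}|\bar w_{t,i}-\bar p_{t,i}|$ gives $|\inner{\r_t}{\bar\w_t-\bar\p_t}|^2 \leq \inner{\r_t}{\bar\w_t}\cdot \sum_i r_{t,i}(\bar w_{t,i}-\bar p_{t,i})^2/\bar w_{t,i}$. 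The right-hand sum is at most $e\eta\inner{\r_t}{\bar\w_t}\,\|\w_t-\p_t\|^2_{\nabla^2\Psi_{t-1}(\w_t)}$ after using $\eta_{t-1,i}\leq e\eta$, $\max_i r_{t,i}\bar w_{t,i}\leq \inner{\r_t}{\bar\w_t}$, and the explicit form of $\nabla^2\Psi_{t-1}(\w_t)$. Since $\nabla^2\Psi_{t-1}\preceq \nabla^2\Phi_{t-1}$, the claim follows, and combining with the per-round bound $\|\w_t-\p_t\|_{\nabla^2\Phi_{t-1}(\w_t)}\leq 16 C\sqrt{e}\,\eta^{3/2}$ from Lem.~\ref{lem:close} (where $C = 4e/c_T^2$ for $c_T\geq 1/2$) yields $\max_t|X_t|\leq 16 Ce\eta^2/c_T$.

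Next I would bound $\sum_t|X_t|$ via a second Cauchy--Schwarz: $|X_t|\leq \|\g_t\|_{V_t^{-1}}\|\w_t-\p_t\|_{V_t}$, so summing with CS yields $\sum_t |X_t| \leq (\sum_t\|\g_t\|^2_{V_t^{-1}})^{1/2}\,(\sum_t\|\w_t-\p_t\|^2_{V_t})^{1/2}$. Sherman--Morrison rewrites $\|\g_t\|^2_{V_t^{-1}}$ as $\alpha_t/(1+\beta\alpha_t/4)$ with $\alpha_t=\|\g_t\|^2_{V_{t-1}^{-1}}$, and the inequality $x/(1+x)\leq \log(1+x)$ with the telescoping $\log\det$ identity gives the standard elliptic-potential bound $\sum_t \|\g_t\|^2_{V_t^{-1}} \leq (4/\beta)\log\det(V_T/V_0)\leq 16 d\ln T/\beta$; the last inequality uses the crude gradient bound $\|\g_t\|^2\leq 4 d^3 T^2$ (which follows from $\inner{\r_t}{\bar\u_t}\geq 1/(dT)$ and $\|J\r_t\|^2\leq 4d$) and the AM--GM estimate $\log\det(I+A)\leq (d-1)\log(1+\tr(A)/(d-1))$. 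For the second CS factor, the identity $\|\w_t-\p_t\|^2_{V_t}=\|\w_t-\p_t\|^2_{V_{t-1}} + (\beta/4)X_t^2$, together with $V_{t-1}\preceq \nabla^2\Phi_{t-1}(\w_t)$ and the second bound in Lem.~\ref{lem:close}, gives $\sum_t\|\w_t-\p_t\|^2_{V_t}\leq (1 + 15 d\ln T/\beta) + (\beta/4)\sum_t X_t^2$.

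Using $\sum_t X_t^2\leq (\max_t|X_t|)\sum_t|X_t|\leq (16 Ce\eta^2/c_T)\sum_t|X_t|$ then yields a quadratic inequality in $A\coloneqq \sum_t|X_t|$: $A^2\leq (16 d\ln T/\beta)\bigl[(1+15 d\ln T/\beta)+(4Ce\beta\eta^2/c_T)\,A\bigr]$. Under $\eta\leq 1/2^{14}$ the $A$-coefficient is negligible, so solving gives $A$ bounded, up to a tiny additive correction, by $\sqrt{(16 d\ln T/\beta)(1+15 d\ln T/\beta)}$, and the elementary identity $(1+4\sqrt{15}x)^2\geq 16 x(1+15 x)$ with $x = d\ln T/\beta$ bounds this square root by $S_T$, with just enough slack in the $+1$ term of $S_T$ to absorb the residual. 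This yields $\sum_t X_t\leq A\leq S_T$, and then $\sum_t X_t^2\leq (16 Ce\eta^2/c_T)S_T = (64 e^2\eta^2/c_T^3)S_T$ using $C = 4e/c_T^2$. The hardest step is controlling $\log\det(V_T/V_0)$ tightly enough so that the constant $16$ in the bound on $\sum_t\|\g_t\|^2_{V_t^{-1}}$ holds, and carefully tracking constants through the self-referential resolution so that the bound matches $S_T$ exactly.
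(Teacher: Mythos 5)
Your overall skeleton matches the paper's: both arguments run through $\sum_t X_t \le \sum_t|X_t|$ for $X_t=\inner{\g_t}{\w_t-\p_t}$, a Cauchy--Schwarz step splitting $\sum_t|X_t|$ into a gradient-energy sum times $\sum_t\|\w_t-\p_t\|^2$, and then $\sum_t X_t^2\le(\max_t|X_t|)\sum_t|X_t|$ with the per-round H\"older bound $\max_t|X_t|\le 64e^2\eta^2/c_T^3$ (your per-round derivation is a correct re-proof of Lemmas \ref{lem:iterates} and \ref{lem:handylem}). The one substantive difference is the pairing norm: the paper pairs $\g_t$ with $\nabla^2\Phi_{t-1}(\w_t)$, so both factors are bounded directly by Lemmas \ref{lem:gradsum} and \ref{lem:close} and the proof closes in three lines; you pair with $V_t$, which contains the round-$t$ rank-one term $\tfrac{\beta}{4}\g_t\g_t^\top$ and therefore feeds $X_t^2$ back into the right-hand side, forcing the self-referential quadratic.

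That quadratic is where your proof breaks. Writing $A=\sum_t|X_t|$, your inequality $A^2\le P+QA$ with $P=(16d\ln T/\beta)(1+15 d\ln T/\beta)$ and $Q=256e^2\eta^2 d\ln T/c_T^3$ gives $A\le\sqrt{P}+Q$, and while $\sqrt{P}\le S_T$ does hold, the available slack is \emph{not} enough to absorb $Q$: with $x=d\ln T/\beta$ one has $S_T^2-P=1+(8\sqrt{15}-16)x$, hence $S_T-\sqrt{P}=(S_T^2-P)/(S_T+\sqrt{P})\le 1$ for all $x$ (it decreases toward roughly $0.48$), whereas $Q$ grows linearly in $d\ln T$ for any fixed $\eta$. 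Under the stated hypothesis $\eta\le 2^{-14}$ the residual exceeds the slack once $d\ln T$ is of order $10^4$, so the claimed bound $A\le S_T$ is not established; you only get $A\le S_T+O(\eta^2 d\ln T/c_T^3)$, and the second claim inherits the same defect. (That weaker bound would still suffice where the lemma is used in Theorem \ref{lem:thekey0}, but it is not the statement as written.) The repair is simply to drop $V_t$: pairing with $\nabla^2\Phi_{t-1}(\w_t)$, Lemma \ref{lem:gradsum} gives $\sum_t\|\g_t\|^2_{\nabla^{-2}\Phi_{t-1}(\w_t)}\le 16d\ln T/\beta$ (the current gradient is absorbed into the barrier Hessian rather than into $V_t$), Lemma \ref{lem:close} gives $\sum_t\|\w_t-\p_t\|^2_{\nabla^2\Phi_{t-1}(\w_t)}\le 1+15d\ln T/\beta$, and $\sqrt{16x}\cdot\sqrt{1+15x}\le 1+4\sqrt{15}x=S_T$ with no residual left to absorb.
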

	Combining these results, we obtain the following regret bound for our base algorithm:
	\begin{theorem}[Base Algorithm Regret]
		\label{lem:thekey0}
		Let $T>1$, and $(\u_t)$ be the iterates of Algorithm \ref{alg:DONSbase} with parameters $\beta\in(0,1/8)$ and $\eta\leq 1/2^{14}$. For any sequence of returns $(\r_t)$ and $\u \in \cC_{d-1}$ such that, for all $t\in[T]$, $\beta \leq 8^{-1} \wedge |8\inner{\g_t}{\u_t-\u''} |^{-1}$ (recall $\u'$ and $\u''$ from \S\ref{sec:prelim}), we have
		\begin{align}
			\sum_{t=1}^T (\ell_t(\u_t) - \ell_t(\u)) \leq  	O\left(\frac{d \ln T}{\eta} \right)   + \frac{34  d \ln T}{\beta} - \frac{1}{32 \eta \ln T} \sum_{i=1}^d \max_{t\leq T} \frac{\bar u''_i}{\bar u_{t,i}}.
		\end{align}
	\end{theorem}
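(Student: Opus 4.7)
The statement is a direct assembly of the five lemmas that immediately precede it, so my plan is to plug each into the decomposition of Lemma \ref{lem:decomp} and track constants; I do not expect any new argument to be needed.

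The starting point is Lemma \ref{lem:decomp}, which bounds $c_T^{-1}\sum_t(\ell_t(\u_t)-\ell_t(\u))$ by five pieces: (A) the stability term $\sum_t \inner{\tilde \g_t}{\p_t - \p_{t+1}}$; (B) the divergence sum $\sum_t D_{\psi_t}(\p_t,\w_t)$; (C) the telescoped divergence $\sum_t (D_{\Psi_t}(\u',\w_t)-D_{\Psi_{t-1}}(\u',\w_t))$; a direct $O(d\ln T/\eta)$ term; and (D), (E) the linear and quadratic pseudo-stability sums $\sum_t \inner{\g_t}{\w_t-\p_t}$ and $(3\beta/8)\sum_t\inner{\g_t}{\w_t-\p_t}^2$. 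The hypothesis $\beta \leq 8^{-1} \wedge |8\inner{\g_t}{\u_t-\u''}|^{-1}$ required by Lemma \ref{lem:decomp} matches the theorem's hypothesis verbatim, so the decomposition applies as stated.

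Next I would substitute each piece using the remaining lemmas: $(A) \leq 18\, d \ln T / \beta + O(d \ln T)$ by Lemma \ref{lem:stabi}; $(B) \leq O(d \ln T)$ by Lemma \ref{lem:concord}; $(C) \leq -\tfrac{1}{16\eta \ln T}\sum_{i=1}^d \max_{t\leq T} \bar u''_i / \bar u_{t,i} + O(d/\eta)$ by Lemma \ref{lem:decomplemnew}; and by Lemma \ref{lem:pseudostab}, $(D) \leq S_T = 1 + 4\sqrt{15}\, \beta^{-1} d \ln T$ and $(E) \leq (3\beta/8)(64 e^2 \eta^2 / c_T^3)\, S_T$. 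The $\beta$ factor in $(E)$ cancels the $\beta^{-1}$ inside $S_T$, and the remaining $\eta^2$ (with $\eta \leq 1/2^{14}$) renders $(E)$ negligible, so it folds into the $O(d \ln T / \eta)$ remainder. The combined coefficient on $d \ln T/\beta$ is $18 + 4\sqrt{15} < 34$, producing the $34\, d \ln T / \beta$ term in the theorem.

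Finally, I would multiply through by $c_T \in [1/2, 1]$ to undo the normalization on the LHS. The positive terms are only weakened, while the negative term's coefficient becomes $-c_T/(16 \eta \ln T) \leq -1/(32 \eta \ln T)$ since $c_T \geq 1/2$ for $T \geq 2$, matching the theorem exactly. The only real obstacle is bookkeeping: checking $18 + 4\sqrt{15} < 34$, confirming that $(E)$ truly collapses after the $\beta \cdot \beta^{-1}$ cancellation, and tracking the $c_T$ factor so that the final negative coefficient is precisely $1/(32 \eta \ln T)$.
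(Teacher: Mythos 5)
Your proposal is correct and follows exactly the paper's own proof: apply the decomposition of Lemma \ref{lem:decomp}, then substitute Lemmas \ref{lem:concord}, \ref{lem:stabi}, \ref{lem:decomplemnew}, and \ref{lem:pseudostab} for the respective terms. Your constant bookkeeping ($18 + 4\sqrt{15} < 34$, the $\beta\cdot\beta^{-1}$ cancellation in the quadratic pseudo-stability term, and the $c_T \geq 1/2$ factor turning $1/(16\eta\ln T)$ into $1/(32\eta\ln T)$) is accurate and in fact more explicit than the paper's two-line argument.
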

The regret bound in Theorem \ref{lem:thekey0} is the same as that of $\mathsf{BARRONS}$ up to constant factors. We are now going to describe our adaptive meta-algorithm which allows us to emulate the effect of restarts in \adabar{}, as discussed in \S\ref{sec:avoid}.
	
	\subsection{Adaptive Meta-Algorithm for Mixable Losses ($\mathsf{AdaMix}$)}
	\label{sec:strong}
	Let $\mathcal{I}$ be the geometric covering intervals $
	\cI\coloneqq \bigcup_{i,k\in \mathbb{N}} \left\{[2^k i,2^k(i+1)-1]\right\}$ suggested by \cite{daniely2015strongly}.
	We further define the ``restriction'' of $\cI$ to $[T]$ as $\cI|_T \coloneqq \{I\cap [T]\ :\ I\in \cI\}\cup \{[T]\}$.  
	To build our final meta algorithm, we introduce a new algorithm (Alg.~\ref{alg:DONSmetageneral}) that can achieve an adaptive, logarithmic regret in the expert setting with mixable losses (see Def.~\ref{de:mixable}). That is, we present an algorithm that enjoys a logarithmic regret on any interval $I \subseteq [T]$ when the losses are mixable. Alg.~\ref{alg:DONSmetageneral} takes in a set of base algorithms/experts $(\A^{\beta,I})$, where for expert $\A^{\beta, I}$, $\beta$ represents a parameter in some predefined grid $\cG\subset \reals$ and $I\in \cI|_T$ represents the interval on which the expert is active; in this case, expert $\A^{\beta,I}$ is initialized at round $t=\min I$ and terminates after round $t=\max I$. We will state the guarantee of Alg.~\ref{alg:DONSmetageneral} when the substitution function $\Upsilon\colon \bigcup_{\cJ \subseteq \cI|_T} \Delta(\cG \times \cJ)\times \cU^{|\cG|\times |\cJ|} \rightarrow \cU$, for the losses $f_t\colon \cU\rightarrow \reals$, satisfies, for all $\cJ\subseteq \cI|_T$, $Q\in \triangle(\cG\times \cJ)$, and $U\coloneqq (\u^{\beta,I})\in \cU^{|\cG|\times |\cJ|}$, 
	\begin{align}
		f_t(\Upsilon(Q,U)) \leq -\eta^{-1} \log \sum_{\beta\in \cG, I \in \cJ } Q^{\beta, I} e^{-\eta f_t(\u^{\beta,I})}, \forall t\geq 1.\label{eq:sub}
	\end{align}
	\begin{algorithm}[t]
		\caption{$\mathsf{AdaMix}$: Adaptive Meta-Algorithm for Mixable Losses.}
		\label{alg:DONSmetageneral}
		\begin{algorithmic}
			\REQUIRE \textbf{I)} Grid $\cG$ of $\beta$ values, horizon $T$, and $\eta>0$; \textbf{II)} Instances $(\A^{\beta,I})_{\beta \in \cG, I \in \cI|_T}$, where $\A^{\beta,I}$ is active during $I$; \textbf{III)} a substituting function $\Upsilon\colon \bigcup_{\cJ \subseteq \cI|_T} \Delta(\cG \times \cJ)\times \cU^{|\cG|\times |\cJ|} \rightarrow \cU$ for the losses $(f_t\colon \cU\rightarrow \reals)$.\algcomment{The substitution function $\Upsilon$ will be chosen to satisfy \eqref{eq:sub}.}
			\STATE Set $\cJ_{0}=\emptyset$.
			\FOR{$t=1,\dots, T$}
			\STATE Identify the set of newly active intervals $\tilde \cJ_t=\{ I \in \cI|_{T}\colon  \min I =t \}$.
			\STATE Set $\cF_{t-1}^{\beta,I}=0$ for all $\beta \in \cG$ and $I \in \tilde \cJ_t$.
			\STATE Update the set of active intervals $\mathcal{J}_t=\mathcal{J}_{t-1}\cup \tilde \cJ_t$.
			\STATE Receive $\u_t^{\beta, I}\in \cU$ from each $\mathsf{A}^{\beta, I}$ such that $\beta \in \cG$ and $I \in \cJ_t$.
			\STATE Set $q^{\beta,I}_{t-1}=e^{-\eta \cF^{\beta,I}_{t-1}}/Z_t$ for $\beta \in \cG$ and $I \in \cJ_t$, where $Z_t= \sum_{\beta\in \cG, I\in \cJ_t}   e^{- \eta \cF^{\beta, I}_{t-1}}$.
			\STATE Play $\u_{t} = \Upsilon\left(Q_{t-1},U_t\right)$, where $Q_{t-1}\coloneqq (q_{t-1}^{\beta,I})_{\beta\in \cG,I\in \cJ_t}$ and $U_t\coloneqq (\u_{t}^{\beta,I})_{\beta\in \cG,I\in \cJ_t}$.
			\STATE Observe $f_t$ and set $\cF_{t}^{\beta, I} =\cF_{t-1}^{\beta, I} + f_t(\u_t^{\beta, I}) - f_t(\u_t)$, for all $\beta \in \cG$ and $I \in \cJ_t$.
			\STATE Send $f_t$ to each $\A^{\beta, I}$ such that $\beta \in \cG$ and $I \in \cJ_t$.
			\STATE  Update the set of active intervals $\mathcal{J}_t=\mathcal{J}_{t}\setminus \{   I \in \cI|_T\colon  \max I =t \}$. 
			\ENDFOR
		\end{algorithmic}
	\end{algorithm}
Such a substitution function is guaranteed to exist when the losses $(f_t)$ are $\eta$-mixable (see Def.~\ref{de:mixable}). For the case of Cover's loss \eqref{eq:cover}, which is 1-mixable (in fact 1-exp-concave), we will set the substitution function to $\Upsilon(Q,U)=\sum_{\beta,I}Q^{\beta,I}U^{\beta,I}$, which satisfies \eqref{eq:sub} with $\cU=\cC_{d-1}$ and $(f_t)\equiv (\ell_t)$. 
	\begin{proposition}
		\label{prop:mainprop}
		Let $\eta>0$ and $\cG$ be a set s.t.~$|\cG|\leq M$. Further, let $(\u^{\beta,I}_t)_{t\in I}$, $\beta \in \cG$ and $I\in \cI$, be the outputs of the subroutine $\mathsf{A}^{\beta,I}$ within Alg.~\ref{alg:DONSmetageneral} in response to a sequence of $\eta$-mixable losses $(f_t)$. Then, the outputs $(\u_t)$ of Algorithm \ref{alg:DONSmetageneral} with a substitution function $\Upsilon$ satisfying \eqref{eq:sub}, guarantee
		\begin{align}
			\sum_{s\in I \cap [t]} (f_s( \u_s)- f_s( \u_s^{\beta, I})) \leq  (2 \ln t+\ln M)/\eta,\quad \text{for all } I\in \mathcal{I}, \beta \in \cG, \text{ and } t\in I.  \label{eq:firstfirst}
		\end{align}
	\end{proposition}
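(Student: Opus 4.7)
The plan is to introduce the weighted potential
\[
W_t \;:=\; \sum_{\beta \in \cG,\, I \in \cJ_t} e^{-\eta \cF_t^{\beta,I}},
\]
with $\cJ_t$ as it appears when $\u_t$ is played in round $t$, and to prove by induction that $W_t \leq M t^2$. Since every summand is nonnegative, this bound forces $e^{-\eta \cF_t^{\beta,I}} \leq W_t \leq M t^2$ for each active $(\beta,I)$; taking logarithms and using the telescoping identity $-\cF_t^{\beta,I} = \sum_{s \in I \cap [t]}(f_s(\u_s) - f_s(\u_s^{\beta,I}))$ then yields the claimed regret bound \eqref{eq:firstfirst}.

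The inductive step splits into a ``drop'' step and an ``entry'' step. For the drop step, since $\u_t = \Upsilon(Q_{t-1}, U_t)$ with $Q_{t-1}^{\beta,I} = e^{-\eta \cF_{t-1}^{\beta,I}}/Z_t$, the mixability hypothesis~\eqref{eq:sub} gives
\[
e^{-\eta f_t(\u_t)} \;\geq\; \sum_{\beta \in \cG,\, I \in \cJ_t} Q_{t-1}^{\beta,I}\, e^{-\eta f_t(\u_t^{\beta,I})} \;=\; \frac{1}{Z_t}\sum_{\beta \in \cG,\, I \in \cJ_t} e^{-\eta \cF_{t-1}^{\beta,I}}\, e^{-\eta f_t(\u_t^{\beta,I})},
\]
and multiplying by $Z_t e^{\eta f_t(\u_t)}$ and invoking the update $\cF_t^{\beta,I} = \cF_{t-1}^{\beta,I} + f_t(\u_t^{\beta,I}) - f_t(\u_t)$ rearranges this to $W_t \leq Z_t$. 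For the entry step, comparing $Z_t$ to $W_{t-1}$: the set $\cJ_t$ differs from $\cJ_{t-1}$ by removing intervals with $\max I = t-1$ (which only drops nonnegative summands) and by adjoining $\tilde \cJ_t$, each expert in which enters with $\cF_{t-1}^{\beta,I} = 0$ and therefore contributes exactly $1$ to $Z_t$. Hence $Z_t \leq W_{t-1} + M\, |\tilde \cJ_t|$, and chaining the two inequalities yields $W_t \leq W_{t-1} + M |\tilde \cJ_t|$. Telescoping from $W_0 = 0$ gives $W_t \leq M \sum_{s=1}^t |\tilde \cJ_s|$.

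It then remains to bound $\sum_{s=1}^t |\tilde \cJ_s|$ using the structure of the geometric covering. For any $s \geq 1$, the intervals of $\cI|_T$ starting at $s$ are exactly those of length $2^k$ with $2^k \mid s$, so $|\tilde \cJ_s| \leq \lfloor \log_2 s \rfloor + 1 \leq s$. Summing gives $\sum_{s=1}^t |\tilde \cJ_s| \leq t^2$, hence $W_t \leq M t^2$; dividing by $\eta$ after taking logs produces exactly $(2\ln t + \ln M)/\eta$. The main obstacle is the careful index bookkeeping in the transition $W_{t-1} \to Z_t \to W_t$: one must cleanly separate intervals that persist, that expire at the end of round $t-1$, and that newly start in round $t$, and verify that the mixability inequality~\eqref{eq:sub} is being applied to precisely the expert collection $\cJ_t$ used to produce $\u_t$. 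Once this bookkeeping is fixed, the rest is elementary manipulation of exponentiated cumulative regrets.
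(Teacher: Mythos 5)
Your proof is correct and follows essentially the same route as the paper's: the mixability inequality shows the exponentiated-regret potential does not increase within a round, newly activated experts each contribute $1$, and counting the total number of experts created (at most $M s(\lfloor\log_2 s\rfloor+1)\leq Ms^2$ via the geometric covering) gives the bound. Your potential-function induction $W_t\leq Z_t\leq W_{t-1}+M|\tilde\cJ_t|$ is just a cleaner repackaging of the paper's sum-and-rearrange bookkeeping, and both share the same harmless imprecision of not separately counting the extra interval $[T]\in\cI|_T$.
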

	Now, an adaptive, logarithmic regret for mixable losses follows easily from this proposition. To see this, let $J$ be any interval in $[T]$. Then, by \cite{daniely2015strongly} we know that there exist disjoint sets $I_1,\dots, I_N \in \cI$ such that $\bigcup_{i\in[N]}I_i=J$ and $|N|\leq O(\ln T)$. Now, if any subroutines $\mathsf{A}^{\beta,I}$ achieves a logarithmic regret $\cR^{\beta}_I(\u)\leq O(\eta^{-1}{\ln T})$ within the interval $I$ against any comparator $\u$, \eqref{eq:firstfirst} implies that $\sum_{s\in J} (f_s(\u_s)- f_s(\u)) \leq \frac{1}{\eta}(2|N|\ln T+|N| \ln M)+  \sum_{i\in[N]} \cR^{\beta}_{I_i}(\u) \leq O( \frac{\ln^2 T}{\eta})$.
	\subsection{Finally Algorithm and Guarantee}
	\label{sec:Final}
	Next, we will instantiate Alg.~\ref{alg:DONSmetageneral} with Cover's loss and the base algorithms $(\A^{\beta, I})$ set as instances of Alg.~\ref{alg:DONSbase} with $\eta=1/(286^2 d\ln^3 T)$ and $\beta \in \cG$, where $
	\cG \coloneqq \{\frac{1}{d 2^{i+3}}\colon i\in [\ceil{\log_2 T}]\}.$
	\begin{algorithm}
		\caption{$\mathsf{AdaMix}+\mathsf{DONS}$: Adaptive Meta-Algorithm for Online Portfolio Selection.}
		\label{alg:DONSmeta}
		\begin{algorithmic}
			\STATE Instantiate Alg.~\ref{alg:DONSmetageneral} with \textbf{I)} $(f_t)\equiv (\ell_t)$ ($\ell_t$ as in \eqref{eq:cover}) with $\cU=\cC_{d-1}$; \textbf{II)} $(\A^{\beta,I})$ set as instances of Alg.~\ref{alg:DONSbase} with $\eta = \frac{1}{286^2 d \ln^3 T}$, $\beta \in \cG$, and $I\in \cI|_{T}$; and \textbf{III)} $\Upsilon(Q,U)=\sum_{\beta,I}Q^{\beta,I}U^{\beta,I}$.
		\end{algorithmic}
	\end{algorithm}
	Furthermore, we will set the substitution function to $\Upsilon(Q,U)=\sum_{\beta,I}Q^{\beta,I}U^{\beta,I}$, which satisfies \eqref{eq:sub} with $\cU=\cC_{d-1}$ and $(f_t)\equiv (\ell_t)$. In particular, the outputs $(\u_t)$ of Alg.~\ref{alg:DONSmetageneral} in this setting can be written as
	\begin{align}
		\u_{t} \coloneqq \frac{\sum_{\beta, I \colon  t \in I } \exp(- \cL^{\beta, I}_{t-1}  ) \u^{\beta,I}_t }{\sum_{\beta, I \colon  t\in I}  \exp(- \cL^{\beta, I}_{t-1})}, \quad
		\text{where} \quad  \cL^{\beta, I}_t \coloneqq \sum_{s\in I \cap [t]} (\ell_s( \u^{\beta, I}_s) - \ell_s( \u_s)).	\label{eq:mixture}
	\end{align}
Since the regret of $\mathsf{DONS}$ is the same as that of $\mathsf{BARRONS}$ (see discussion after Thm.~\ref{lem:thekey0}), and the adaptive regret enabled by Alg.~\ref{alg:DONSmetageneral} allows us to emulate the restarts of \adabar{} (see \S\ref{sec:avoid}), the regret of our final Alg.~\ref{alg:DONSmeta} will be the same as that of \adabar{} up to log factors.

	\begin{theorem}
		\label{thm:mainthm}
		The regret of Algorithm \ref{alg:DONSmeta} is bounded by $O(d^2 \ln^5 T)$. Furthermore, the algorithm runs is $O(d^3 \ln^2 T)$ per round and requires $O(d \ln^2 T)$ total space.
	\end{theorem}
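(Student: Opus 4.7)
The plan is to combine the base-algorithm regret of Theorem~\ref{lem:thekey0} with the adaptive meta-regret of Proposition~\ref{prop:mainprop}, through a data-dependent interval decomposition of $[T]$ that selects, for each segment, a grid value $\beta \in \cG$ activating the cancellation in Theorem~\ref{lem:thekey0}. Fix an arbitrary $\u \in \cC_{d-1}$; two applications of Lemma~\ref{lem:additive} let me compete against $\u''$ at an additive cost of $O(1)$. For each base expert $\A^{\beta,I}$, define $\alpha_t^{\beta,I}(\u) \coloneqq 8^{-1} \wedge \min_{s \in I,\, s\leq t}|8\inner{\g_s^{\beta,I}}{\u_s^{\beta,I} - \u''}|^{-1}$; this is non-increasing in $t$ and lies in $[\wtilde{\Omega}(1/(dT)),\, 1/8]$. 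I then partition $[T]$ into $N = O(\ln T)$ contiguous blocks $J_1,\dots,J_N$ across which $\alpha$ drops by factors of $2$, and on each block I select $\beta_n \in \cG$ with $\alpha_{J_n}/2 \leq \beta_n \leq \alpha_{J_n}$; the dyadic structure of $\cG$ makes this feasible.

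\textbf{Regret assembly.} By the standard geometric-covering property \cite{daniely2015strongly}, each $J_n$ is a disjoint union of $O(\ln T)$ intervals $I_{n,1},\dots,I_{n,K_n}\in\cI|_T$, and the expert $\A^{\beta_n, I_{n,k}}$ is alive on $I_{n,k}$ (being initialized at $\min I_{n,k}$). For each $(n,k)$, Theorem~\ref{lem:thekey0} applies with $\beta_n$: the $O(\beta_n^{-1} d \ln T)$ term is absorbed by the negative divergence term (exactly the $\adabar$ cancellation of \S\ref{sec:barrons}, triggered because $\beta_n \geq \alpha_{J_n}/2$), yielding
\begin{align}
\sum_{t\in I_{n,k}}\!\bigl(\ell_t(\u_t^{\beta_n, I_{n,k}}) - \ell_t(\u)\bigr) \leq O(d \ln T/\eta) = \wtilde{O}(d^2 \ln^4 T).
\end{align}
Cover's loss is $1$-exp-concave and hence $1$-mixable; the substitution $\Upsilon(Q,U) = \sum_{\beta,I}Q^{\beta,I}U^{\beta,I}$ satisfies \eqref{eq:sub} with $\eta=1$, so Proposition~\ref{prop:mainprop} gives
\begin{align}
\sum_{t\in I_{n,k}}\!\bigl(\ell_t(\u_t) - \ell_t(\u_t^{\beta_n, I_{n,k}})\bigr) \leq 2\ln T + \ln|\cG| = O(\ln T).
\end{align}
Adding these bounds across all $(n,k)$---there are $O(\ln^2 T)$ sub-intervals in total---gives $\text{Regret}_T(\u) = O(d^2 \ln^5 T)$.

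\textbf{Resources.} At any round $t$, only $O(\ln T)$ intervals of $\cI|_T$ are active, and $|\cG| = O(\ln T)$, so at most $O(\ln^2 T)$ base experts are alive simultaneously. Each executes one damped Newton step whose cost is dominated by inverting the $(d{-}1)\times(d{-}1)$ matrix $\nabla^2\Psi_t(\w_t)+V_t$ in $O(d^3)$ time; the convex-combination aggregation in $\Upsilon$ costs $O(d \ln^2 T)$. The per-round time is therefore $O(d^3 \ln^2 T)$, and the total space is $O(d \ln^2 T)$ (each expert maintaining only $O(d)$ state via lazy Sherman--Morrison maintenance of the inverse Hessian, plus the cumulative statistics $\G_t$, $\bm{\rho}_t$, and the factored form of $V_t$).

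\textbf{Main obstacle.} The central subtlety is the apparent circularity: the ``right'' $\beta_n$ is defined through iterates $\u_s^{\beta_n, I_{n,k}}$ that themselves depend on $\beta_n$---precisely the difficulty \adabar{} handles via restarts. The resolution is that the partition $(J_n, \beta_n)$ is used \emph{only in the analysis}: Theorem~\ref{lem:thekey0} is a statement about each base expert in isolation (involving only its own $\u_t^{\beta,I}$ and $\g_t^{\beta,I}$), and Proposition~\ref{prop:mainprop} is comparator-free and applies simultaneously for every $(\beta,I)$. One then argues non-constructively that for any $\u$ there exists a sequence of pairs $(\beta_n, I_{n,k})$ along which both bounds are simultaneously usable, leveraging the dyadic density of $\cG$ over the range of $\alpha$ and the fact that $\alpha_t^{\beta,I}$ is monotone, so at most $O(\ln T)$ $\beta$-switches suffice. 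Verifying that the block structure $(J_n)$ is compatible with the geometric covering---so that each block can be paved by $\cI|_T$ intervals on which the chosen $\beta_n$-expert is available---is the most delicate bookkeeping step.
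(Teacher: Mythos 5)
Your high-level ingredients are the right ones (Theorem~\ref{lem:thekey0} for the base experts, Proposition~\ref{prop:mainprop} for the meta-regret, a geometric cover, and a $\beta$ sandwiched against the data-dependent threshold $\alpha$), but the way you assemble them has a genuine gap: you ask each block $J_n$ to be \emph{self-cancelling}, i.e.\ you claim that on every sub-interval $I_{n,k}$ the $O(\beta_n^{-1}d\ln T)$ term is absorbed by that expert's own negative divergence term because $\beta_n\geq \alpha_{J_n}/2$. This fails because the negative term in Theorem~\ref{lem:thekey0} for expert $\A^{\beta_n,I_{n,k}}$ is $-\frac{1}{32\eta\ln T}\sum_i\max_{t}\bar u''_i/\bar u^{\beta_n,I_{n,k}}_{t,i}$, a quantity determined by \emph{that expert's own} iterates over \emph{its own} lifetime. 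A freshly restarted expert starts at $\mathbf{1}/d$ and may keep all coordinates bounded away from the boundary throughout $I_{n,k}$, in which case its negative term is only $O(d/(\eta\ln T))$ and cancels nothing; the $+34d\ln T/\beta_n$ then survives, and $\beta_n$ can be as small as $\Theta(1/(dT))$. Your "non-constructive" appeal does not repair this, because the blocks $J_n$ are defined through the $\alpha$ of experts that only exist once the blocks are fixed — the circularity you flag as "delicate bookkeeping" is in fact the unresolved core of the argument. (Even granting every cancellation, your count of $O(\ln^2 T)$ sub-intervals times $O(d\ln T/\eta)=O(d^2\ln^4T)$ each gives $O(d^2\ln^6 T)$, not the claimed $O(d^2\ln^5 T)$.)

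The paper's proof avoids per-block cancellation entirely and uses a single \emph{cross-segment} cancellation. It fixes the comparator $\u_*$, works with the experts $\A^{\beta,[T]}$ on the full horizon, and lets $\beta_*$ be the smallest grid value whose threshold condition is ever violated, at time $\tau$. On $[1,\tau-1]$ the crossing $\alpha^{\beta_*,[T]}_{\tau-1}\geq\beta_*>\alpha^{\beta_*,[T]}_{\tau}$ forces $1/\beta_*\lesssim \sum_i\max_t \bar u''_{*,i}/\bar u_{t,i}$ (via Corollary~\ref{cor:thecor}), so Lemma~\ref{lem:epochregret} yields a \emph{strongly} negative remainder $-204\,d\ln^2T/\beta_*$ — the $\ln^2 T$ coming from the choice $\eta\propto 1/(d\ln^3T)$. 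On $[\tau+1,T]$, covered by $|\cS|\leq 2\log_2T$ geometric intervals, one uses $\beta'=\beta_*/2$, which is never violated, so each covering expert contributes only $+34\,d\ln T/\beta'$; the sum of these $O(\ln T)$ positive terms is exactly what the single negative term from the first segment absorbs. Proposition~\ref{prop:mainprop} then transfers everything to the meta-algorithm's iterates. To fix your write-up you would need to replace the per-block cancellation with this two-segment structure (or with the telescoping across epochs of decreasing $\beta$ sketched in Appendix~\ref{sec:sketch}), where negative terms are banked by experts that actually witness a threshold crossing and spent on experts that do not.
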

	\clearpage
	\section*{Acknowledgement}
We acknowledge support from the ONR through awards N00014-20-1-2336 and N00014-20-1-2394.
	
	\DeclareRobustCommand{\VAN}[3]{#3} 
	\bibliography{biblio}
	
	\DeclareRobustCommand{\VAN}[3]{#2} 
	
	\clearpage
	\appendix
	
	\tableofcontents
	\clearpage

	\section{Technical Lemmas}
	\begin{lemma}
		\label{lem:iterates}
		Let $\Psi(\x)\coloneqq -\sum_{i=1}^d \ln \bar x_i$ and $f(\w) \coloneqq -\log \inner{\r}{\bar \w}$, for $\r \in [0,1]^d$, and $\x,\w \in \cC_{d-1}$. Then, for all $\mu \in (0,1)$ and $\tilde \w\in \cC_{d-1}$, we have
		\begin{align}
			\nabla f(\w) \nabla f(\w)^\top \preceq (1-\mu)^{-2}\nabla^2 \Psi(\tilde\w), \quad \text{where} \quad  \w \coloneqq (1-\mu)\tilde \w+ \mu \mathbf{1}/d. \label{eq:firstclaim}
		\end{align}
		Furthermore, $\|\nabla f(\w)\|^{2}_{\nabla^{-2} \Psi(\tilde \w)} \leq (1-\mu)^{-2}$.
	\end{lemma}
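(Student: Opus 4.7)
The plan is to reduce the PSD inequality \eqref{eq:firstclaim} to a stronger, transparent $d\times d$ matrix inequality. Using $\bar\x=\e_d+J^\top\x$ and the chain rule, one computes $\nabla f(\w)=-J\r/\inner{\r}{\bar\w}$ and $\nabla^2\Psi(\tilde\w)=JD^{-2}J^\top$, where $D\coloneqq\diag(\bar{\tilde\w})$. The key geometric input is that $\w=(1-\mu)\tilde\w+\mu\mathbf{1}/d$ implies $\bar\w=(1-\mu)\bar{\tilde\w}+\mu\mathbf{1}/d$ (a direct check gives $\overline{\mathbf{1}/d}=\mathbf{1}/d\in\reals^d$); in particular, $\bar w_i\geq(1-\mu)\bar{\tilde w}_i$ componentwise, so $\diag(\bar\w)^{-2}\preceq(1-\mu)^{-2}D^{-2}$. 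It therefore suffices to prove the cleaner $\reals^{d\times d}$ inequality
\[
\r\r^\top\preceq\inner{\r}{\bar\w}^2\,\diag(\bar\w)^{-2};
\]
sandwiching both sides by $J$ and $J^\top$ (which preserves PSD ordering), dividing by $\inner{\r}{\bar\w}^2$, and then applying the diagonal comparison yields $\nabla f(\w)\nabla f(\w)^\top\preceq(1-\mu)^{-2}\nabla^2\Psi(\tilde\w)$.

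For the intermediate inequality, I would introduce the probability weights $p_i\coloneqq r_i\bar w_i/\inner{\r}{\bar\w}$; these are nonnegative (since $\r\in[0,1]^d$ and $\bar\w\geq\mu\mathbf{1}/d>\vzero$) and sum to one by construction. For arbitrary $\v\in\reals^d$, rewrite $\r^\top\v=\inner{\r}{\bar\w}\sum_i p_i(v_i/\bar w_i)$. Jensen's inequality applied to $x\mapsto x^2$ then gives $\bigl(\sum_i p_i v_i/\bar w_i\bigr)^2\leq\sum_i p_i(v_i/\bar w_i)^2\leq\sum_i v_i^2/\bar w_i^2$, where the last step drops the factors $p_i\in[0,1]$. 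Squaring yields $(\r^\top\v)^2\leq\inner{\r}{\bar\w}^2\,\v^\top\diag(\bar\w)^{-2}\v$, which is the desired PSD inequality. The degenerate case $\r=\vzero$ is trivial (both sides vanish).

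The second claim $\|\nabla f(\w)\|^2_{\nabla^{-2}\Psi(\tilde\w)}\leq(1-\mu)^{-2}$ is then free: since $\nabla f(\w)\nabla f(\w)^\top$ has rank at most one, conjugating the first claim by $(\nabla^2\Psi(\tilde\w))^{-1/2}$ on both sides produces a rank-one PSD matrix dominated by $(1-\mu)^{-2}I$, whose unique nonzero eigenvalue (equal to its trace) is exactly $\|\nabla f(\w)\|^2_{\nabla^{-2}\Psi(\tilde\w)}$. The main subtlety throughout is the choice of Cauchy--Schwarz weighting: any weighting using $\bar{\tilde w}_i$ in the probabilities rather than $\bar w_i$ leaves behind an uncontrolled factor of $\inner{\r}{\bar\w}^{-1}$ that the mixing $\bar\w\geq(1-\mu)\bar{\tilde\w}$ cannot absorb, since $\inner{\r}{\bar\w}$ may be arbitrarily small. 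Using the $r_i\bar w_i$-weighted Jensen is precisely what promotes the single factor of $\inner{\r}{\bar\w}$ to $\inner{\r}{\bar\w}^2$ in the denominator, matching the left-hand side exactly.
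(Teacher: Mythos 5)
Your proof is correct, and while it rests on the same core Cauchy--Schwarz/Jensen idea as the paper's, it is organized quite differently. The paper works directly in the $(d-1)$-dimensional coordinates: it writes $\nabla f(\w)\nabla f(\w)^\top=(\tilde\r-r_d\mathbf{1})(\tilde\r-r_d\mathbf{1})^\top/\inner{\r}{\bar\w}^2$ with $\tilde\r=(r_1,\dots,r_{d-1})$, drops the cross terms $-r_d\tilde\r\mathbf{1}^\top-r_d\mathbf{1}\tilde\r^\top$, treats the $\mathbf{1}\mathbf{1}^\top/\bar w_d^2$ block and the $\tilde\r\tilde\r^\top$ block separately, and applies Cauchy--Schwarz only to the latter. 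You instead lift everything to $\reals^{d}$: you prove the single uniform inequality $\r\r^\top\preceq\inner{\r}{\bar\w}^2\diag{\bar\w}^{-2}$ by the $r_i\bar w_i$-weighted Jensen argument, conjugate by $J$ (which preserves the PSD order and reproduces both $\nabla f\nabla f^\top$ and $\nabla^2\Psi$), and only then import the mixing bound $\bar w_i\geq(1-\mu)\bar{\tilde w}_i$, which as you note holds for all $d$ coordinates because $\overline{\mathbf{1}/d}=\mathbf{1}/d$. This buys two things: all coordinates are treated symmetrically rather than singling out the $d$-th, and you avoid the paper's cross-term-dropping step, which as written asserts $(a-b)(a-b)^\top\preceq aa^\top+bb^\top$ for nonnegative vectors $a,b$ --- an inequality that is not true in general for the PSD order (it requires $ab^\top+ba^\top\succeq 0$) and appears to be missing a factor of $2$ or a separate justification. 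Your trace argument for the second claim is also a clean substitute for the paper's direct quadratic-form evaluation. The only (shared) caveat is that $\tilde\w$ must lie in the interior of $\cC_{d-1}$ for $\Psi$ and its Hessian to be defined, which both you and the paper leave implicit.
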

	\begin{proof}[{\bf Proof}]
		Since $\|\nabla f(\w)\|^{2}_{\nabla^{-2}\Psi(\tilde \w)}  =\nabla f(\w)^\top \nabla^{-2}\Psi(\tilde\w) \nabla f(\w)$, the second claim follows from \eqref{eq:firstclaim}. Thus, it suffices to show that $\nabla f(\w) \nabla f(\w)^\top \preceq (1-\mu)^{-2}\nabla^2 \Psi(\tilde\w)$. Letting $\tilde \r \coloneqq (r_1,\dots, r_{d-1})$, we have 
		\begin{align}
			\nabla f(\w) \nabla f(\w)^\top  &= (\tilde \r - r_{d}\mathbf{1}) (\tilde \r - r_{d}\mathbf{1})^\top/\inner{\r }{\bar \w }^2,\nn \\
			& = \left( \tilde \r  \tilde \r ^\top + r_{d}^2 \mathbf{1}\mathbf{1}^\top - r_{d} \tilde \r   \mathbf{1}^\top  -r_{d} \mathbf{1} \tilde \r ^\top \right)/\inner{\r }{\bar \w }^2 ,\nn \\
			&  \preceq \left( \tilde \r  \tilde \r ^\top + r_{d}^2 \mathbf{1}\mathbf{1}^\top \right)/\inner{\r }{\bar \w }^2 ,\nn \\
			&   \preceq \left( \tilde \r  \tilde \r ^\top + r_{d}^2 \mathbf{1}\mathbf{1}^\top \right)/\inner{\r }{\bar \w }^2, \nn \\
			& \preceq  \tilde \r  \tilde \r ^\top/\inner{\tilde \r }{ \w }^2  +  \mathbf{1}\mathbf{1}^\top /w_{d}^2,\\
			& \preceq  \tilde \r  \tilde \r ^\top/\inner{\tilde \r }{ \w }^2  + (1-\mu)^{-2} \mathbf{1}\mathbf{1}^\top /\tilde w_{d}^2. \label{eq:lastt}
		\end{align} 
		We will now show that $\tilde \r  \tilde \r ^\top/\inner{\tilde \r }{\w }^2 \preceq \text{diag}(1/w_{1}^2, \dots, 1/w_{d-1}^2)$. For this, it suffices to show that for any vector $\u \in \reals^d$, we have
		$(\sum_{i=1}^{d-1}  r_{i} u_{i})^2/  (\sum_{i=1}^{d-1}  r_{i} w_{i})^2 \leq \sum_{i=1}^{d-1} u^2_i/w_{i}^2$. This is indeed the case; by Cauchy Schwarz, we have 
		\begin{align}
			\left(\sum_{i=1}^{d-1}  r_{i} u_{i} \right)^2 &\leq  \left(\sum_{i=1}^{d-1}  r^2_{i} w^2_{i}\right)\left( \sum_{i=1}^{d-1}u^2_{i}/ w^2_{i}\right), \nn \\
			& \leq \left( \sum_{i=1}^{d-1}  r_{i} w_{i}\right)^2 \left(\sum_{i=1}^{d-1}u^2_{i}/ w^2_{i}\right),\nn
		\end{align}
		where the last inequality follows by the fact that $r_{i},w_{i} \geq 0$ for all $i$. Therefore, we have $\tilde \r  \tilde \r ^\top/\inner{\tilde \r }{\w }^2 \preceq \text{diag}(1/w_{1}^2, \dots, 1/w_{d-1}^2)$. Plugging this into \eqref{eq:lastt} implies that 
		\begin{align}
			\nabla f(\w) \nabla f(\w)^\top   \preceq  \text{diag}(1/w_{1}^2, \dots, 1/w_{d-1}^2) + (1-\mu)^{-2} \mathbf{1}\mathbf{1}^\top /\tilde w_{d}^2 \preceq (1-\mu)^{-2} \nabla^2 \Psi(\tilde \w).
		\end{align}
	\end{proof}
	
	\begin{lemma}
		\label{lem:gradsum}
		Let $\beta \leq 1/8$ and $\eta \in (0,1)$. If $T>1$, then the iterates $(\w_t)$ and $(\u_t)$ in Alg.~\ref{alg:DONSbase}, satisfy
		\begin{align}
			\sum_{t=1}^T \|\g_t\|^{2}_{\nabla^{-2}F_t(\w_t)} \leq \frac{16 d \ln T}{\beta},
		\end{align}
		where $(F_t)$ are as in \eqref{eq:mirrordescent} and $\g_t = \nabla \ell_t(\u_t)$.
	\end{lemma}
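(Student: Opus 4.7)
The approach is the classical elliptical potential / log-determinant argument, but it requires first absorbing the log-barrier Hessian into the quadratic part of the regularizer. Expanding definitions,
\begin{align}
\nabla^2 F_t(\w_t) = \nabla^2 \Phi_{t-1}(\w_t) = \nabla^2 \Psi_{t-1}(\w_t) + V_{t-1},
\end{align}
where $V_{t-1} = (\beta d/4) I + (\beta/4)\sum_{s<t}\g_s\g_s^\top$ is exactly the running matrix maintained by Algorithm~\ref{alg:DONSbase}. The first key step is to show this Hessian already dominates $V_t \coloneqq V_{t-1} + (\beta/4)\g_t\g_t^\top$; the remainder then reduces to bounding $\sum_t \g_t^\top V_t^{-1}\g_t$ by a standard potential argument.

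To establish $\nabla^2 \Psi_{t-1}(\w_t) \succeq (\beta/4)\g_t\g_t^\top$, I would combine two observations. First, the recursion \eqref{eq:choice} together with the lower bound $\bar u_{s,i} \geq 1/(dT)$ (ensured by the mixing on Line~\ref{line:mix}) keeps $\rho_{t-1,i} \leq dT$, hence $\eta_{t-1,i} = \eta\cdot e^{\log_T(\rho_{t-1,i}/d)} \leq \eta e$, which gives $\nabla^2 \Psi_{t-1}(\w_t) \succeq (\eta e)^{-1}\nabla^2 \Psi(\w_t)$ with $\Psi(\x) \coloneqq -\sum_i \ln \bar x_i$. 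Second, Lemma \ref{lem:iterates} applied with $\mu = 1/T$, $\tilde\w = \w_t$, $\w = \u_t$ yields $\g_t\g_t^\top \preceq (1-1/T)^{-2}\nabla^2 \Psi(\w_t)$. Chaining,
\begin{align}
\nabla^2 \Psi_{t-1}(\w_t) \succeq \frac{(1-1/T)^2}{\eta e}\g_t\g_t^\top,
\end{align}
and under the hypotheses $\beta \leq 1/8$, $\eta \in (0,1)$, $T>1$ a direct numerical check gives $(1-1/T)^2/(\eta e) \geq \beta/4$ (since $\beta\eta \leq 1/8 < 1/e \leq 4(1-1/T)^2/e$). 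Thus $\nabla^2 F_t(\w_t) \succeq V_t$ and $\|\g_t\|^2_{\nabla^{-2}F_t(\w_t)} \leq \g_t^\top V_t^{-1}\g_t$.

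What remains is the textbook elliptical potential step. Writing $x_t \coloneqq (\beta/4)\g_t^\top V_{t-1}^{-1}\g_t$, Sherman--Morrison gives $\tfrac{\beta}{4}\g_t^\top V_t^{-1}\g_t = x_t/(1+x_t) \leq \ln(1+x_t) = \ln\det V_t - \ln\det V_{t-1}$, using $\ln(1+x) \geq x/(1+x)$. Telescoping,
\begin{align}
\sum_{t=1}^T \g_t^\top V_t^{-1}\g_t \leq \frac{4}{\beta}\bigl(\ln\det V_T - \ln\det V_0\bigr).
\end{align}
The log-determinant is then bounded by AM-GM on the eigenvalues of $V_0^{-1/2}V_T V_0^{-1/2}$, yielding $\ln(\det V_T/\det V_0) \leq (d-1)\ln(1 + \sum_s\|\g_s\|^2/(d(d-1)))$. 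Since $\bar u_{s,i}\geq 1/(dT)$ and we may assume $\|\r_s\|_\infty \leq 1$ by scale-invariance of the regret, $\|\g_s\|^2$ is at most polynomial in $d$ and $T$, so the whole expression is at most $4d\ln T$ for $T>1$, giving the claimed bound $16 d\ln T/\beta$.

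The main obstacle is the first step, the PSD inequality $\nabla^2 \Psi_{t-1}(\w_t) \succeq (\beta/4)\g_t\g_t^\top$, which requires the interplay between Lemma \ref{lem:iterates} (bridging $\g_t\g_t^\top$ and the unweighted barrier Hessian) and the cap $\eta_{t-1,i} \leq \eta e$ coming from the doubling rule \eqref{eq:choice}. Note that the naive choice $\rho_{t,i} = \max_{s\leq t} 1/\bar u_{s,i}$ (used in \cite{luo2018efficient}) would also yield this cap, but it is the doubling rule that ensures the self-concordance of $\psi_t$ needed elsewhere (Lemma~\ref{lem:concord}); here we only need the cheap upper bound. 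Once the domination is in hand, the elliptical-potential telescoping and the AM-GM determinant bound are routine.
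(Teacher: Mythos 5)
Your proposal is correct and follows essentially the same route as the paper: Lemma \ref{lem:iterates} plus the cap $\eta_{t,i}\le \eta e$ yield $\nabla^2\Psi_{t-1}(\w_t)\succeq (\beta/4)\g_t\g_t^\top$ under the stated ranges of $\beta,\eta,T$, reducing the sum to $\frac{4}{\beta}\sum_t \g_t^\top Q_t^{-1}\g_t$ with $Q_t = dI+\sum_{s\le t}\g_s\g_s^\top$, which is then handled by the log-determinant telescoping and AM--GM. The only cosmetic difference is that you inline the Sherman--Morrison proof of the elliptical-potential step where the paper simply cites \cite[Lemma 11]{hazan2007logarithmic}.
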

	\begin{proof}[{\bf Proof}]
		Let $c_T\coloneqq 1 - 1/T$.	First note that by Lemma \ref{lem:iterates}, we have $c_T^{2} \g_t\g_t^\top \preceq \nabla^2 \Psi(\w_t)$, where $\Psi(\x) \coloneqq - \sum_{i=1}^d \ln \bar x_i$. Combining this with the fact that $\eta_t \in[\eta, \eta e]$ and the range assumptions on $\beta$, $\eta$, and $T$, we get
		\begin{align}
			\|\g_t\|^{2}_{\nabla^{-2}F_t(\w_t)}=  \g_t^\top\left( \nabla^2 \Psi_t(\w_t) + \beta Q_{t-1}/4 \right)^{-1} \g_t  \leq  4\beta^{-1} \g_t^\top Q_{t}^{-1} \g_t. \label{eq:didit}
		\end{align}
		where $Q_t \coloneqq d I  +\sum_{s=1}^t \g_s\g_s^\top$. Thus, by \eqref{eq:didit} and \cite[Lemma 11]{hazan2007logarithmic}, we have
		\begin{align*}
			\|\g_t\|^{2}_{\nabla^{-2}F_t(\w_t)}\leq 	\frac{4}{\beta} \sum_{t=1}^T \g_t^\top Q_t^{-1} \g_t \leq \frac{4}{\beta}\ln \frac{|Q_T|}{|Q_0|} \leq \frac{4d\ln (1+T^3)}{\beta} \leq \frac{16d\ln T}{\beta},
		\end{align*}
		where the second inequality uses the fact $ \ln|Q_0|= d\ln d$ and by AM-GM inequality $ \ln|Q_T|\leq  d\ln \frac{\operatorname{Tr}(Q_T)}{ d} \leq  d\ln \left( d+{\sum_{t=1}^T \norm{\g_t}^2_2 }/{ d}\right)\leq  d\ln( d+ dT^3)$ since $$\norm{\g_t}^2_2 \leq  d^2T^2 \frac{\sum_{i=1}^d r_{t,i}^2}{\left(\sum_{i=1}^d r_{t,i}\right)^2} \leq  d^2T^2.$$
		This completes the proof.
	\end{proof}
	\begin{lemma}
		\label{lem:handylem}
		Suppose that $T>1$ and define $c_T \coloneqq 1 - 1/T$ and $\alpha_T \coloneqq 1+ \beta e \eta c_T^{-2}/4$. If $\eta,\beta \in(0,1)$, then the iterates $(\w_t)$ and gradients $(\g_t)$ in Algorithm \ref{alg:DONSbase}, and the regularizers $(\Phi_t)$ in \eqref{eq:regularizer} are such that, for all $t\in[T]$,
		\begin{align}
			\nabla^2 \Phi_t(\w_t) \preceq \alpha_T \nabla^2 \Phi_{t-1}(\w_t), \ \ \|\g_t\|^2_{\nabla^{-2} \Phi_t(\w_t)} \leq  \frac{e \eta}{c_T^2}, \ \ \text{and} \ \ \sum_{s=1}^t \|\g_t\|^2_{\nabla^{-2}\Phi_{t}(\w_t)} \leq \frac{16 d \ln T}{\beta}. 
		\end{align}
	\end{lemma}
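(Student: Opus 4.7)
\textbf{Proof plan for Lemma \ref{lem:handylem}.} I would begin by writing the Hessian of $\Phi_t$ in a convenient form: from \eqref{eq:regularizer},
\begin{align}
\nabla^2 \Phi_t(\w_t) = \nabla^2 \Psi_t(\w_t) + \tfrac{\beta d}{4} I + \tfrac{\beta}{4}\sum_{s=1}^t \g_s \g_s^\top,
\end{align}
and observe two monotonicity facts. First, since $(\eta_{t,i})_t$ is non-decreasing in $t$ (by the recursion \eqref{eq:choice} and the definition $\eta_{t,i}=\eta e^{\log_T(\rho_{t,i}/d)}$), we have $1/\eta_{t,i}\le 1/\eta_{t-1,i}$, hence $\nabla^2 \Psi_t(\w_t)\preceq \nabla^2 \Psi_{t-1}(\w_t)$. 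Second, $\eta_{t,i}\le \eta e$ for all $t,i$, so $\nabla^2\Psi_t(\w_t)\succeq (\eta e)^{-1}\nabla^2\tilde{\Psi}(\w_t)$, where $\tilde{\Psi}(\x)\coloneqq -\sum_i \ln\bar x_i$.

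\textbf{Claim 2.} Since $\u_t=(1-1/T)\w_t+\mathbf{1}/(dT)$ and $\g_t=\nabla \ell_t(\u_t)$, Lemma \ref{lem:iterates} applied with $\mu=1/T$ and $\tilde \w=\w_t$ gives $\g_t\g_t^\top \preceq c_T^{-2}\,\nabla^2\tilde\Psi(\w_t)$. Combining with $\nabla^2\Phi_t(\w_t)\succeq \nabla^2\Psi_t(\w_t)\succeq (\eta e)^{-1}\nabla^2\tilde\Psi(\w_t)$ yields $\g_t\g_t^\top\preceq \frac{e\eta}{c_T^2}\nabla^2\Phi_t(\w_t)$, which is exactly $\|\g_t\|^2_{\nabla^{-2}\Phi_t(\w_t)}\le e\eta/c_T^2$.

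\textbf{Claim 1.} Using the two monotonicity facts,
\begin{align}
\nabla^2\Phi_t(\w_t)-\nabla^2\Phi_{t-1}(\w_t) = \bigl(\nabla^2\Psi_t-\nabla^2\Psi_{t-1}\bigr)(\w_t)+\tfrac{\beta}{4}\g_t\g_t^\top \preceq \tfrac{\beta}{4}\g_t\g_t^\top.
\end{align}
Now I would repeat the Claim 2 argument using $\Phi_{t-1}$ in place of $\Phi_t$: $\nabla^2\Phi_{t-1}(\w_t)\succeq \nabla^2\Psi_{t-1}(\w_t)\succeq (\eta e)^{-1}\nabla^2\tilde\Psi(\w_t)$, and Lemma \ref{lem:iterates} gives $\g_t\g_t^\top\preceq \frac{e\eta}{c_T^2}\nabla^2\Phi_{t-1}(\w_t)$. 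Hence $\nabla^2\Phi_t(\w_t)\preceq \bigl(1+\tfrac{\beta e\eta}{4c_T^2}\bigr)\nabla^2\Phi_{t-1}(\w_t)=\alpha_T \nabla^2\Phi_{t-1}(\w_t)$.

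\textbf{Claim 3.} Set $Q_t\coloneqq d I+\sum_{s=1}^t \g_s\g_s^\top$, so $\nabla^2\Phi_t(\w_t)\succeq \tfrac{\beta}{4}Q_t$ and consequently $\nabla^{-2}\Phi_t(\w_t)\preceq \tfrac{4}{\beta}Q_t^{-1}\preceq \tfrac{4}{\beta}Q_s^{-1}$ for every $s\le t$ (since $Q_s\preceq Q_t$). Summing,
\begin{align}
\sum_{s=1}^t \|\g_s\|^2_{\nabla^{-2}\Phi_t(\w_t)} \le \tfrac{4}{\beta}\sum_{s=1}^t \g_s^\top Q_s^{-1}\g_s \le \tfrac{4}{\beta}\ln\tfrac{|Q_t|}{|Q_0|},
\end{align}
where the last inequality is the standard telescoping identity (e.g.\ \cite[Lemma 11]{hazan2007logarithmic}). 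Using $|Q_0|=d^{d-1}$ and the AM--GM bound $|Q_t|\le (\mathrm{tr}(Q_t)/(d-1))^{d-1}$ together with $\|\g_s\|_2^2 \le d^2T^2$ (as computed inside the proof of Lemma \ref{lem:gradsum}), one obtains $\ln(|Q_t|/|Q_0|)\le d\ln(1+T^3)\le 4d\ln T$ for $T>1$, yielding the stated bound $16d\ln T/\beta$.

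\textbf{Main obstacle.} No step is truly delicate; the only point requiring care is tracking that $(\eta_{t,i})$ is non-decreasing under the ``doubling'' rule \eqref{eq:choice}, which is what lets us discard the $\nabla^2\Psi_t-\nabla^2\Psi_{t-1}$ term in Claim 1 and also lets $\nabla^2\Psi_t$ dominate $(\eta e)^{-1}\nabla^2\tilde\Psi$ uniformly. Once these monotonicity facts are in hand, all three claims reduce to short matrix manipulations plus a direct application of Lemma \ref{lem:iterates} and the standard log-determinant potential argument.
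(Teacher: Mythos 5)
Your proposal is correct and follows essentially the same route as the paper: the first claim is proved exactly as in the paper (monotonicity of $(\eta_{t,i})$ plus Lemma \ref{lem:iterates} to absorb $\tfrac{\beta}{4}\g_t\g_t^\top$ into $\tfrac{\beta e\eta}{4c_T^2}\nabla^2\Phi_{t-1}(\w_t)$), the second is the same application of Lemma \ref{lem:iterates}, and your Claim 3 simply re-derives the paper's Lemma \ref{lem:gradsum} (the $Q_t$ potential plus the log-determinant telescoping bound) rather than citing it. The only cosmetic difference is which Hessian indexes the norms in the third sum, an ambiguity already present in the lemma's statement, and your argument covers it either way.
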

	\begin{proof}[{\bf Proof}]
	By Lemma \ref{lem:iterates}, we have $\nabla^2 \Psi(\w_t)\succeq c_T^2\g_t\g_t^\top$, where $\Psi(\x) \coloneqq - \sum_{i=1}^d \ln \bar x_i$. Using this and the fact that $(\eta_{t,i})\subset [\eta, \eta e]$, we get that $\beta \g_t \g_t^\top/4 \leq \beta e\eta c_T^{-2} \nabla^2 \Psi_{t-1}(\w_t)/4 \preceq \beta e\eta c_T^{-2} \nabla^2\Phi_{t-1}(\w_t)/4$. Thus, adding $\nabla^2 \Psi_{t-1}(\w_t)$ on both sides and using that $\alpha_t =1+ \beta e \eta c_T^{-2}/4$, we get that 
	\begin{align}
	\alpha_T 	\nabla^2 \Phi_{t-1}(\w_t) \succeq \beta \g_t\g_t^\top/4 + \nabla^2 \Phi_{t-1}(\w_t) \succeq  \nabla^2 \Phi_t(\w_t),
		\end{align}	 
	where the last inequality follows by the fact that $\eta_{t,i}\geq \eta_{t-1,i}$, for all $i\in [d]$. The remaining inequalities follow from Lemmas \ref{lem:iterates} and \ref{lem:gradsum}.
	\end{proof}
	\begin{lemma}
		\label{lem:handy}
		Let $ \u,  \w \in \cC_{d-1}$. For any $\r\in [0,1]^d$ and $f(\x) \coloneqq -\log \inner{\r}{\bar \x}$, we have 
		\begin{align}
			|\inner{\nabla f(\w)}{\u -\w}| \leq 1\vee  \max_{i\in[d]}\frac{\bar u_{i}}{\bar w_i}.\nn
		\end{align}
	\end{lemma}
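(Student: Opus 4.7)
The plan is to compute the gradient $\nabla f(\w)$ explicitly and rewrite the inner product in terms of return ratios, and then split into two simple cases based on the sign of $\inner{\nabla f(\w)}{\u-\w}$.

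First I would recall that $\bar \x = \e_d + J^\top \x$ with $J = [I, -\mathbf{1}]$, so that $f(\x) = -\log\inner{\r}{\bar \x}$ has gradient $\nabla f(\w) = -J\r/\inner{\r}{\bar \w}$. Using the identity $\inner{J\r}{\u-\w} = \inner{\r}{J^\top(\u-\w)} = \inner{\r}{\bar \u - \bar\w}$, I obtain the clean formula
\begin{align}
\inner{\nabla f(\w)}{\u - \w} \;=\; -\,\frac{\inner{\r}{\bar \u - \bar \w}}{\inner{\r}{\bar \w}} \;=\; 1 - \frac{\inner{\r}{\bar \u}}{\inner{\r}{\bar \w}}.
\end{align}

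Next I split on the sign. If $\inner{\r}{\bar \u}\le \inner{\r}{\bar \w}$, then the expression above lies in $[0,1]$, giving $|\inner{\nabla f(\w)}{\u-\w}|\le 1$. Otherwise, the expression is negative and its absolute value equals $\inner{\r}{\bar \u}/\inner{\r}{\bar\w}-1 \le \inner{\r}{\bar \u}/\inner{\r}{\bar \w}$. Writing $\inner{\r}{\bar \u} = \sum_i r_i \bar w_i (\bar u_i/\bar w_i)$ and pulling out the maximum of $\bar u_i/\bar w_i$ (using $r_i, \bar w_i \ge 0$) yields $\inner{\r}{\bar \u} \le (\max_i \bar u_i/\bar w_i)\,\inner{\r}{\bar \w}$, so the absolute value is at most $\max_i \bar u_i/\bar w_i$. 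Combining the two cases gives the claimed bound $1 \vee \max_i \bar u_i/\bar w_i$.

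There is no real obstacle here; the only minor care is in the algebraic identification $\inner{J\r}{\u - \w} = \inner{\r}{\bar\u - \bar\w}$ (which is just the definition of $J$ applied to the embedding $\bar\x$) and in the observation that nonnegativity of $\r$ and of the coordinates of $\bar\w$ is what allows the weighted-max bound in the second case.
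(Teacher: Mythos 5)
Your proof is correct and follows the same core reduction as the paper: both compute $\inner{\nabla f(\w)}{\u-\w}$ exactly as $\pm\bigl(\tfrac{\inner{\r}{\bar\u}}{\inner{\r}{\bar\w}}-1\bigr)$ (the overall sign is immaterial for the absolute value, and indeed the paper is internally inconsistent about it) and then bound the ratio $\inner{\r}{\bar\u}/\inner{\r}{\bar\w}$ by $\max_i \bar u_i/\bar w_i$. The only difference is in that last step: the paper invokes quasi-convexity of the linear-fractional map $\r\mapsto \inner{\r}{\bar\u}/\inner{\r}{\bar\w}$ to push the maximization to the extreme points of $[0,1]^d$, whereas you bound it directly via the weighted-average inequality $\sum_i r_i\bar w_i(\bar u_i/\bar w_i)\le(\max_i \bar u_i/\bar w_i)\sum_i r_i\bar w_i$ — which is more elementary, self-contained, and makes the role of nonnegativity of $\r$ and $\bar\w$ explicit.
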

	\begin{proof}[{\bf Proof}]
		We have 
		\begin{align}
			\inner{\nabla f(\w)}{\u -\w} =\inner{\r}{\bar \w}^{-1} \inner{J^\top \r }{\u -
				\w} = \inner{\r}{\bar \w}^{-1} \inner{\r}{\bar \u -
				\bar \w}= \frac{\inner{\r}{\bar \u}}{\inner{\r}{\bar \w}} - 1.\nn
		\end{align}
		Now since the function $\r \mapsto \frac{\inner{\r}{\bar \u}}{\inner{\r}{\bar \w}}$ is quasi-convex \cite[Example 3.32]{boyd2004convex}, its maximum is reached at the boundary of $[0,1]^d$. Thus, the previous display implies that
		\begin{align}
			|\inner{\nabla f(\w)}{\u -\w}|\leq 1 \vee \sup_{\tilde\r\in[0,1]^d} \frac{\inner{\tilde \r}{\bar \u}}{\inner{\tilde \r}{\bar \w}} \leq 1 \vee \max_{i\in[d]} \frac{\bar u_i}{\bar w_i}.\nn
		\end{align}
	\end{proof}
	\section{Proof of Lemmas \ref{lem:inter0}, \ref{lem:selfconcord}, \ref{lem:close}, and \ref{lem:wom}}
	\label{sec:cproofs}
			\begin{proof}[{\bf Proof of Lemma \ref{lem:inter0}}]
		Since $\nabla f(\w) - \nabla f(\p) = \int_0^1   \nabla^2 f( \w_{\mu})(\w-\p) d\mu$, where $\w_{\mu}\coloneqq \mu \w+(1-\mu)\p$, we have,
		\begin{align}
			&	 \|\nabla f(\w) - \nabla f(\p)\|^2_{\nabla^{-2}f(\w)} \nn \\	=&  (\nabla f(\w) - \nabla f(\p))^\top\nabla^{-2}f(\w) (\nabla f(\w) - \nabla f(\p)),\nn \\
			% =&\int_{0}^1 \int_0^1 (\w-\p)^\top  \nabla^2 f( \w_{\mu})^\top \nabla^{-2}f(\w)\nabla^2 f( \w_{t,\nu}) (\w-\p)d \mu d \nu,\nn\\
			= &(\w-\p)^\top \left( \int_{0}^1 \nabla^2 f( \w_{\mu})d\mu \right)^\top \nabla^{-2}f(\w)\left( \int_{0}^1 \nabla^2 f( \w_{\nu})d\nu \right) (\w-\p). \label{eq:ne}
		\end{align}
		On the other hand, since $f$ is self-concordant with constant $M_{f}$ and $r\coloneqq \|\p-\w\|_{\nabla^2 f(\w)}\leq 1/M_{f}$ by assumption, we have (see e.g.~\cite[Corollary 5.1.5]{nesterov2018lectures})
		\begin{align}
			(1- M_{f} r +1/3 M^2_{f} r^2)\nabla^2 f(\w)	 \preceq  H  \preceq \frac{1}{1-M_f r}\nabla^2 f(\w), \label{eq:sand}
		\end{align}
		where $H\coloneqq \int_{0}^1 \nabla^2 f( \w_{\mu})d\mu$. Since $\nabla^2 f(\w)$ is definite positive for all $\w$, \eqref{eq:sand} further implies that $H\succeq (1- M_{f} r )\nabla^2 f(\w)$. Combining these facts with \eqref{eq:ne}, we get that 
		\begin{align}
			\|\nabla f(\w) - \nabla f(\p)\|^2_{\nabla^{-2}f(\w)}  \preceq \frac{1}{(1-M_f r)^2}\|\p-\w\|^2_{\nabla^2 f(\w)}. \label{eq:newnew}
		\end{align}
	\end{proof}
			\begin{proof}[{\bf Proof of Lemma \ref{lem:selfconcord}}]
 We prove the claim by induction. We start with the base case $t=1$. For $t=1$, we have $\w_1 = \mathbf{1}/d\in \cC_{d-1}$. We now check that $\Phi_1$ is self-concordant with constant $\sqrt{e\eta}$. For any $i\in [d]$, the function $\vartheta_i\colon \x\rightarrow - \ln \bar x_i$ defined on $\cC_{d-1}$ is self-concordant with constant $1$. Furthermore, the function $\Theta_1\colon\u \mapsto \Phi_1(\u) - \Psi_1(\u)$ is self-concordant with constant $0$ (since it is a quadratic). Thus, by \cite[Theorem 5.1.1]{nesterov2018lectures} and the fact that $\Phi_1(\cdot)=\Theta_1(\cdot)+\Psi_1(\cdot)= \Theta_1(\cdot) + \sum_{i=1}^d \vartheta_i(\cdot)/ \eta_{1,i}$, we have that $\Phi_1$ is self-concordant with constant less than $0\vee \max_{i\in[d]} \sqrt{\eta_{1,i}} = \sqrt{\eta} \leq \sqrt{\eta e}$.
	
	Now, suppose the claim of the lemma holds for all $t\leq s$. We will show that it holds for $t=s+1$. Since $\Phi_s$ is self-concordant with constant $M_{\Phi_s} \leq \sqrt{\eta e}$ and $\w_s\in \cC_{d-1}$ (by the induction hypothesis), we have that the \emph{Dikin ellipsoid}  
	\begin{align}
		\cW_s \coloneqq \{ \x\in \reals^{d-1} \ : \ \|\x-\w_s\|_{\nabla^{2} \Phi_s(\w_s)} < 1/\sqrt{\eta e} \}
	\end{align}
	is contained within $\operatorname{dom}\Phi_s = \cC_{d-1}$ (see e.g.~\cite[Thm.~5.1.5]{nesterov2018lectures}). Thus, to show that $\w_{s+1}\in \cC_{d-1}$, it suffices to show that $\w_{s+1}\in \cW_{s}$. By definition of $\w_{s+1}$, we have  \begin{align}
		\|\w_{s+1} - \w_s\|_{\nabla^{2} \Phi_s(\w_s) } & = \frac{\|\nabla^{-2}\Phi_s(\w_s) \bm{\nabla}_s\|_{\nabla^2 \Phi_s(\w_s) }}{1+4\sqrt{\eta e} \|\bm{\nabla}_s\|_{\nabla^{2}\Phi_s(\w_s) }}   = \frac{\|\bm{\nabla}_s\|_{\nabla^2\Phi_s( \w_s)}}{1+ 4\sqrt{e \eta}\|\bm{\nabla}_s\|_{\nabla^2\Phi_s( \w_s)}} < \frac{1}{\sqrt{\eta e}},
	\end{align}
	where $\bm{\nabla}_s$ is as in \eqref{eq:damped}. This shows that $\w_{s+1}\in \cW_s$ and so $\w_{s+1}\in \cC_{d-1}$. As a consequence, the output $\u_{s+1}$ of Algorithm \ref{alg:DONSbase} satisfies $\bar u_{s+1,i}\geq 1/(dT)$, for all $i\in[d]$, which in turn implies that $\eta_{s+1,i} \in [\eta, e \eta]$, for all $i\in[d]$. Using this and \cite[Theorem~5.1.1]{nesterov2018lectures}, we have that $\Phi_{s+1}(\cdot) = \Phi_{s+1}(\cdot)-\Psi_{s+1}(\cdot)+ \sum_{i=1}^d \vartheta_i(\cdot)/\eta_{s+1,i}$ is self-concordant with constant less than $0 \vee \max_{i\in [d]} \sqrt{\eta_{s+1,i}}\leq \sqrt{\eta e}$.
\end{proof}
			\begin{proof}[{\bf Proof of Lemma \ref{lem:close}}]
		For any twice differentiable function $F\colon \cW \rightarrow \reals$ and $\w\in \cW$, we recall the definition of the Newton decrement $\lambda(\w,F)\coloneqq \|\nabla F(\w)\|_{\nabla^{-2}F(\w)}$ which will be useful in this proof. First, note that if $T=1$, then the result holds trivially since $\w_1=\p_1=\mathbf{1}/d$. Assume that $T> 1$ and let $c_T \coloneqq 1- 1/T$. Next, we will show by induction that
		\begin{align}
			\frac{1}{16\sqrt{e \eta}}\|\w_s-\p_s  \|_{\nabla^2 F_s(\w_s)} \leq \frac{1}{8 \sqrt{e\eta}} 	\lambda(\w_s, F_{s})\leq \lambda(\w_{s-1}, F_{s})^2 \leq C \eta, \label{eq:induction} 
		\end{align}
		for all $s \geq 1$, where $C\coloneqq 4e /c_T^2$ with the convention that $\w_0 =\mathbf{1}/d$. The base case follows trivially since $\nabla F_1(\w_0)=\nabla F_1(\w_1)=\bm{0}$ and $\w_1 =\p_1$. Suppose that \eqref{eq:induction} holds for $s=t$. We will show that it holds for $s=t+1$. By Lemma \ref{lem:wom}, we have $\nabla F_{t+1}(\w_t)=\g_t + \nabla F_t(\w_t)$, and so by the fact that $(a+b)^2 \leq 2 a^2 +2 b^2$, we get 
		\begin{align}
			\lambda(\w_t, F_{t+1})^2 &= \|\nabla F_{t+1}(\w_t)\|^2_{\nabla^{-2}F_{t}(\w_t)}  ,\nn \\ &\leq 2   \|\nabla F_t(\w_t)\|^2_{\nabla^{-2}F_{t}(\w_t)}  + 2 \|\g_t\|^2_{  \nabla^{-2}F_{t+1}(\w_t)},\nn \\
			& =2   \lambda(\w_t, F_t)^2  + 2 \|\g_t\|^2_{  \nabla^{-2}\Phi_{t}(\w_t)}, \label{eq:curz}\\
			& \le 2^7 e C^2\eta^3 + 2 e \eta/c_T^2 \leq  C \eta,
			\label{eq:postcurz} 
			%	& \leq 2   \lambda(\w_t, F_t)  + 2() \|\g_t\|^2_{  \nabla^{-2}\Phi_{t}(\w_t)}  
			%	&\leq 2 e \nabla F_{t}(\w_t)^\top\nabla^{-2}F_{t}(\w_t) \nabla F_{t}(\w_t) + 81 \g_t^\top  \nabla^{-2}F_{t+1}(\w_t) \g_t/32,
		\end{align}
		where in the penultimate inequality we used the induction hypothesis in \eqref{eq:induction} for $s=t$ and the bound on $\|\g_t\|^2_{  \nabla^{-2}\Phi_{t}(\w_t)}$ from Lemma \ref{lem:handylem}. The last inequality in \eqref{eq:postcurz} uses the range assumptions on $\eta$. Now, by the expression of $\nabla F_{t+1}(\w_t)$ in Lemma \ref{lem:wom}, one can verify that the iterate $\w_{t+1}$ in Algorithm \ref{alg:DONSbase} satisfies 
		\begin{align}
			\w_{t+1} = \w_t - \frac{1}{1 + 4\sqrt{e\eta} \lambda(\w_t,F_{t+1})} \nabla^{-2}F_{t+1}(\w_t)\nabla F_{t+1}(\w_t),  
		\end{align} 
		which is the damped Newton step with respect to the function $F_{t+1}$. Therefore, by Lemma~\ref{lem:properties} and the fact that $\lambda(\w_t,F_{t+1})\leq 1/(8\sqrt{e \eta})$ (which follows from \eqref{eq:postcurz} and the range assumption on $\eta$), we have $\lambda(\w_{t+1},F_{t+1}) \leq 8 \sqrt{e \eta}\lambda (\w_t,F_{t+1})^2$. Furthermore, since $\p_{t+1}$ is the minimizer of $F_{t+1}$ and $\lambda(\w_{t+1},F_{t+1})\leq 1/(2 \sqrt{e\eta})$, we have $\|\w_{t+1}- \p_{t+1}\|_{\nabla^2 F_{t+1}(\w_{t})}\leq 2 \lambda (\w_{t+1}, F_{t+1})$ (by Lemma~\ref{lem:properties} again). Combining these facts with \eqref{eq:postcurz}, implies \eqref{eq:induction} for $s=t+1$, which concludes the induction. This shows \eqref{eq:workingftrl}.
		
		We now use \eqref{eq:induction} together with \eqref{eq:curz} to bound the sum $\sum_{t=1}^T \|\w_t - \p_t\|_{\nabla^2 \Phi_{t-1}(\w_t)}^2$. Using that $\lambda(\w_{t+1},F_{t+1}) \leq 8\sqrt{e\eta}\lambda (\w_t,F_{t+1})^2$ (as argued above) and \eqref{eq:curz}, we get 
		\begin{align}
			\lambda(\w_{t+1}, F_{t+1}) \leq 16\sqrt{e\eta} 	\lambda(\w_{t}, F_{t})^2 + 16 \sqrt{e\eta} \|\g_t\|^{2}_{\nabla^{-2}\Phi_{t}(\w_t)}. \label{eq:match}
		\end{align}
		Summing \eqref{eq:match}, for $t=1,\dots, T$, rearranging, and using that $\lambda(\w_{T+1}, F_{T+1})\geq 0$, we get 
		\begin{align}
			\sum_{t=2}^T \left(\lambda(\w_t, F_t) - 16\sqrt{e\eta}  \lambda(\w_t, F_t)^2\right) \leq  16 \sqrt{e\eta}\lambda(\w_1, F_1)^2 + 16\sqrt{e\eta} \sum_{t=2}^T \|\g_t\|^{2}_{\nabla^{-2}\Phi_{t}(\w_t)}.
		\end{align}
		Using \eqref{eq:curz} and the range assumption on $\eta$, we have $0\leq 16\sqrt{e\eta}\lambda(\w_t,F_t)\leq  2^7 e C \eta^2 \leq 1/4$. Therefore, we have
		\begin{align}
			\frac{3}{4}\sum_{t=1}^T \lambda (\w_t,F_t) &\leq \lambda(\w_1,F_1) + 16 \sqrt{e\eta}\sum_{t=2}^T \|\g_t\|^{2}_{\nabla^{-2}\Phi_{t}(\w_t)},\nn \\ &  \leq  \frac{1}{16} + 16\sqrt{e\eta} \sum_{t=1}^T \|\g_t\|^{2}_{\nabla^{-2}\Phi_{t}(\w_t)}\leq  \frac{3}{8} +  \frac{45 d \ln  T}{8 \beta},
		\end{align}
		where the last inequality follows by Lemma \ref{lem:handylem} and the range assumption on $\eta$. Now, using the fact that $\p_t$ is the minimizer of $F_t$, we have $\|\w_t-\p_t\|_{\nabla^{2}F_t(\w_t)}\leq 2 \lambda(\w_t,F_t)$. Combining this with the previous display, we get the desired result. 
	\end{proof}
			\begin{proof}[{Proof \bf of Lemma \ref{lem:wom}}]
	Let $t\in [T]$, $\vartheta_i\colon \x \mapsto - \ln \bar x_i$, and $\delta_{t,i}\coloneqq (1/\eta_{t,i}-1/\eta_{t-1,i})$ with $\bm{\eta}_{0}=\eta \mathbf{1}$, for all $i\in[d]$. We will first show that $\nabla F_{t+1}(\w) = \tilde\g_t(\w) + \nabla F_t(\w)$,
	where $$\tilde \g_t(\w) \coloneqq  \g_t \cdot (1+\beta \inner{\g_t}{\w-\w_t}/4)+ \sum_{i=1}^d \delta_{t,i} (\nabla \vartheta_i(\w)- \nabla \vartheta_i(\w_t)).$$ 
	%By induction and the fact that $\nabla F_{\tau+1}(\w)=\g_\tau + \nabla \Phi_\tau(\w) - \nabla \Phi_{\tau}(\w_\tau)$, which follows by definition of the iterates $(\w_t)$, \eqref{eq:induction2} the desired result.
	By definition of $(F_{t})$ and $(\p_t)$, we have
	\begin{align}
		\nabla F_{t+1}(\w) & = \tilde \g_t + \nabla \Phi_t(\w) - \nabla \Phi_t(\p_t),\label{eq:firstone} \\ & =  \tilde \g_t + {\beta} \g_t \cdot( \inner{\g_t}{\w-\w_t} -  \inner{\g_t}{\p_t-\w_t})/4  +\sum_{i=1}^d \delta_{t,i}(\nabla \vartheta_i(\w) - \nabla \vartheta_i(\p_t))  \nn \\ & \quad   + \nabla \Phi_{t-1}(\w) - \nabla \Phi_{t-1}(\p_t)  ,\nn \\ \displaybreak[0] & = \g_t \cdot (1+\beta\inner{\g_t}{\w- \w_t}/4) + \sum_{i=1}^d \delta_{t,i}(\nabla \vartheta_i(\w) - \nabla \vartheta_i(\w_t))\nn \\ & \quad  + \g_{t-1}+ \nabla \Phi_{t-1}(\w) - \nabla \Phi_{t-1}(\p_{t-1}) ,  \label{eq:break0}\\ \displaybreak[0]
		& = \g_t \cdot (1+\beta\inner{\g_t}{\w- \w_t}/4)+ \sum_{i=1}^d \delta_{t,i}(\nabla \vartheta_i(\w) - \nabla \vartheta_i(\w_t)) +\nabla F_{t}(\w), \label{eq:oldnew0}
	\end{align} 
	where \eqref{eq:break0} follows by definition of $\tilde \g_t$ in \eqref{eq:tildedgt} and the fact that $\bm{0}=\nabla F_{t}(\p_t)= \g_{t-1}+ \nabla \Phi_{t-1}(\p_t) -\nabla \Phi_t(\p_{t-1})$. Setting $\w= \w_t$ in \eqref{eq:oldnew0} shows the first equality of the lemma. Now, by induction, we get 
	\begin{align}
		\nabla F_{t+1}(\w) & = \sum_{s=1}^t \g_s \cdot (1+\beta\inner{\g_s}{\w- \w_s}/4)+ \sum_{s=1}^t \sum_{i=1}^d \delta_{s,i}(\nabla \vartheta_i(\w) - \nabla \vartheta_i(\w_s)) +\nabla F_1(\w),\nn \\
		& =  \sum_{s=1}^t \g_s \cdot (1+\beta\inner{\g_s}{\w- \w_s}/4)+ \nabla \Psi_t(\w) - \sum_{s=1}^t \sum_{i=1}^d \delta_{s,i} \vartheta_i(\w_s) + \beta d \w/4.
	\end{align}
	Using that $\nabla \Phi_t(\w) =\beta d \w/4+ \sum_{s=1}^t \beta \g_s \cdot \inner{\g_s}{\w- \w_s}/4+ \nabla \Psi_t(\w)$ completes the proof.
\end{proof}
	
	\section{Proof of Theorem \ref{lem:thekey0} (Regret of Base Algorithm)}
	\label{sec:proofs0}
	We present the proof of Theorem \ref{lem:thekey0} before proving Lemmas \ref{lem:decomp}-\ref{lem:pseudostab}.	
			\begin{proof}[{\bf Proof of Theorem \ref{lem:thekey0}}]
		Our starting point is the regret decomposition in Lemma \ref{lem:decomp}. Using Lemmas~\ref{lem:concord}, \ref{lem:stabi}, and \ref{lem:decomplemnew} to bound the first two sums on the RHS of the regret decomposition \eqref{eq:trip}, we get 
		\begin{align}
			\sum_{t=1}^T (\ell_t(\u_t)  - \ell_t(\u''))& \leq \frac{18 d \ln T}{\beta} +O\left(\frac{d \ln T}{\eta}\right)- \frac{1}{32 \eta \ln T} \sum_{i=1}^d \max_{t\leq T}  \frac{\bar u''_{i}}{\bar u_{t,i}}	 \nn \\
			& \quad +c_T \sum_{t=1}^T \inner{ \g_t}{\w_t - \p_t}+ \frac{3 \beta}{8 } \sum_{t=1}^T\inner{\g_t}{\w_t-\p_t}^2. \label{eq:think}
		\end{align}
		Now using Lemma \ref{lem:pseudostab} to bound the last two sums in \eqref{eq:think}, we get the desired result. 
	\end{proof}
 	In addition to Lemmas \ref{lem:decomp}-\ref{lem:pseudostab} in the main body, we also need the following result (which follows from the proof of \cite[Lemma~5]{luo2018efficient}) to prove Theorem \ref{lem:thekey0}:
	\begin{lemma}
		\label{lem:neagtiveterm}
		Let $(\p_t)$ and $(\tilde \g_t)$ be as in \eqref{eq:mirrordescent} and \eqref{eq:tildedgt}, respectively. Then, $\forall t\in[T],\forall \u \in \cC_{d-1}$,
		\begin{align}
			\sum_{t=1}^T	\inner{\tilde \g_t}{\p_{t+1}-\u'} & \leq O\left( \frac{d\ln T}{\eta}\right) + \sum_{t=1}^T \left(\frac{\beta}{8} \inner{\g_t}{\p_t - \u'}^2 + D_{\Psi_t}(\u',\p_t) -D_{\Psi_{t-1}}(\u',\p_{t})\right).   \label{eq:secondtarget}
		\end{align} 
	\end{lemma}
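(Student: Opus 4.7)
The plan is to invoke the standard three-point identity for the mirror-descent step that defines $\p_{t+1}$, and then match the resulting telescope to the two terms on the right-hand side by decomposing $\Phi_t = \Psi_t + \Theta_t$, where $\Theta_t(\p) \coloneqq \tfrac{\beta d}{8}\|\p\|^2 + \tfrac{\beta}{8}\sum_{s=1}^t \inner{\g_s}{\p - \w_s}^2$. Since the log-barrier in $\Phi_t$ forces the minimizer $\p_{t+1}$ of \eqref{eq:mirrordescent} into the interior of $\cC_{d-1}$, first-order optimality gives $\tilde\g_t + \nabla\Phi_t(\p_{t+1}) - \nabla\Phi_t(\p_t) = \bm{0}$; pairing with $\u' - \p_{t+1}$ and using the identity $\inner{\nabla\Phi_t(\p_{t+1}) - \nabla\Phi_t(\p_t)}{\u'-\p_{t+1}} = D_{\Phi_t}(\u',\p_t) - D_{\Phi_t}(\u',\p_{t+1}) - D_{\Phi_t}(\p_{t+1},\p_t)$, then discarding the non-positive $-D_{\Phi_t}(\p_{t+1},\p_t)$, yields $\inner{\tilde\g_t}{\p_{t+1}-\u'} \leq D_{\Phi_t}(\u',\p_t) - D_{\Phi_t}(\u',\p_{t+1})$.

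Summing over $t$ and re-indexing (with the convention $\Phi_0 \coloneqq \Phi_1$, consistent with $\bm\rho_0 = d\mathbf{1}$ giving $\eta_{1,i} = \eta$) produces $D_{\Phi_1}(\u',\p_1) - D_{\Phi_T}(\u',\p_{T+1}) + \sum_{t=1}^T (D_{\Phi_t}(\u',\p_t) - D_{\Phi_{t-1}}(\u',\p_t))$; the terminal Bregman term is non-positive and dropped. The initial $D_{\Phi_1}(\u',\p_1)$ is $O(d\ln T/\eta)$: at $\p_1 = \mathbf{1}/d$ one checks that $\nabla\Psi_1(\p_1) = \bm{0}$ (the contributions from $J = \begin{bmatrix}I & -\mathbf{1}\end{bmatrix}$ cancel symmetrically), so $D_{\Psi_1}(\u',\p_1) = \Psi_1(\u') - \Psi_1(\p_1) = \sum_i \eta^{-1}\ln(\bar p_{1,i}/\bar u'_i) \leq d\ln T/\eta$ using $\bar u'_i \geq 1/(dT)$; the quadratic $\Theta_1$ contributes only $O(\beta d) = O(1)$.

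Finally, I would split $\Phi_t = \Psi_t + \Theta_t$ inside the surviving sum. The $\Psi_t$ piece produces $\sum_{t=1}^T (D_{\Psi_t}(\u',\p_t) - D_{\Psi_{t-1}}(\u',\p_t))$ verbatim. For the $\Theta_t$ piece, the only new term introduced at round $t$ is the rank-one quadratic $\phi(\p) \coloneqq \tfrac{\beta}{8}(\g_t^\top\p - \g_t^\top\w_t)^2$, so $D_{\Theta_t}(\u',\p_t) - D_{\Theta_{t-1}}(\u',\p_t) = D_\phi(\u',\p_t)$. Setting $a \coloneqq \g_t^\top\u'$, $b \coloneqq \g_t^\top\p_t$, $c \coloneqq \g_t^\top\w_t$, direct expansion gives
\begin{align}
D_\phi(\u',\p_t) = \tfrac{\beta}{8}\bigl[(a-c)^2 - (b-c)^2\bigr] - \tfrac{\beta}{4}(b-c)(a-b) = \tfrac{\beta}{8}(a-b)^2 = \tfrac{\beta}{8}\inner{\g_t}{\u'-\p_t}^2,
\end{align}
with the $\w_t$-dependent cross terms cancelling exactly and reproducing the quadratic term in the lemma. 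The only real subtlety is this algebraic collapse---namely, that all $\w_t$-dependence vanishes so the bound is cleanly expressible in terms of $\inner{\g_t}{\u' - \p_t}$; the rest is routine three-point telescoping plus a one-shot log-barrier evaluation at the uniform starting point.
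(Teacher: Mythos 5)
Your proposal is correct and follows essentially the same route as the paper's proof: first-order optimality of $\p_{t+1}$ for $F_{t+1}$, the three-point Bregman identity with the non-negative term $D_{\Phi_t}(\p_{t+1},\p_t)$ discarded, telescoping, splitting $\Phi_t=\Psi_t+\Theta_t$, the exact collapse of the rank-one quadratic increment to $\tfrac{\beta}{8}\inner{\g_t}{\p_t-\u'}^2$, and the $O(d\ln T/\eta)$ bound on the initial divergence at $\p_1=\mathbf{1}/d$. The only (harmless) differences are bookkeeping: you index the telescope from $\Phi_1$ rather than $\Phi_0$, and you carry out explicitly the algebra that the paper merely asserts.
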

	
	\begin{proof}[{\bf Proof}]
		In this proof, we let $\Theta_t\coloneqq \Phi_t - \Psi_t$, for all $t\geq 1$. Since $\p_{t+1}$ is the minimizer of $F_{t+1}$, which is a self-concordant barrier for the set $\cC_{d-1}$, we have $\bm{0}=\nabla F_{t+1}(\p_{t+1}) = \tilde \g_t + \nabla \Phi_t(\p_{t+1}) -\nabla \Phi_{t}(\p_t)$. Therefore, we have 
		\begin{align}
			\inner{\tilde \g_t}{\p_{t+1}-\u'} & \leq \inner{\nabla \Phi_t(\p_{t+1}) - \nabla \Phi_t( \p_t)}{\u' - \p_{t+1}},\nn \\
			& = D_{\Phi_t}(\u',  \p_t) - D_{\Phi_t}(\u', \p_{t+1}) - D_{\Phi_t}(\p_{t+1}, \p_t),\label{eq:bregman}\\
			& \leq D_{\Phi_t}(\u',  \p_t) - D_{\Phi_t}(\u', \p_{t+1}), \label{eq:positivity}\\ 
			& = D_{\Psi_t}(\u',  \p_t) - D_{\Psi_t}(\u', \p_{t+1}) + D_{\Theta_t}(\u',  \p_t) - D_{\Theta_t}(\u', \p_{t+1}), \label{eq:pano}
		\end{align}
		where \eqref{eq:bregman} follows by the definition of the Bregman divergence, and \eqref{eq:positivity} follows by the positivity of the Bregman divergence. Summing \eqref{eq:pano} for $t=1$ to $T$, we get 
		\begin{align}
			\sum_{t=1}^T \inner{\tilde \g_t}{\p_{t+1}-\u} &\leq  D_{\Psi_0}(\u', \w_1) + D_{\Theta_0}(\u', \w_1) + \sum_{t=1}^T (D_{\Theta_t}(\u',  \p_t) -D_{\Theta_{t-1}}(\u',  \p_t)) \nn \\
			& \quad +  \sum_{t=1}^T (D_{\Psi_t}(\u',  \p_t) -D_{\Psi_{t-1}}(\u',  \p_t)),\nn \\
			&=  D_{\Psi_0}(\u', \w_1) + D_{\Theta_0}(\u', \w_1) + \sum_{t=1}^T \frac{\beta}{8}\inner{\g_t}{\p_t-\u'}^2 \nn \\
			& \quad +  \sum_{t=1}^T (D_{\Psi_t}(\u',  \p_t) -D_{\Psi_{t-1}}(\u',  \p_t)). \label{eq:piano}
		\end{align}
		By definition of $\u'$ and $\w_1$, we have $D_{\Psi_0}(\u', \w_1) + D_{\Theta_0}(\u', \w_1)  \leq O(\eta^{-1}d \ln T )$, which combined with \eqref{eq:piano} implies the desired result.
	\end{proof}

%	\begin{proof}{\bf of Lemma \ref{lem:concord}.}
		\begin{proof}[{\bf Proof of Lemma \ref{lem:concord}}]
		Denote by $\cT\in [T]$ the subset of rounds $t$ where any of $(\rho_{t,i})_{i\in[d]}$ change. For $t\not\in \cT$, we have $\psi_t\equiv 0$, which is self-concordant with any constant. Now, let $t\in\cT$. Since $\psi_t$ is the sum of self-concordant functions, we have by \cite[Theorem~5.1.1]{nesterov2018lectures} that $\psi_t$ is a self-concordant function with constant less than 
		\begin{align}\max_{i\in[d]} \sqrt{\left(\frac{1}{\eta_{t-1,i}} -\frac{1}{\eta_{t,i}}  \right)^{-1}}&=
			\max_{i\in[d]}  	\sqrt{\frac{\eta_{t,i}}{e^{\log_T(\rho_{t,i}/\rho_{t-1,i})}-1}}\nn \\  & \stackrel{	(*)}{\leq} \max_{i\in[d]}\sqrt{\frac{e\eta}{\log_T(\rho_{t,i}/\rho_{t-1,i})}}\stackrel{	(**)}{\leq}  \sqrt{e\eta  \log_2 T},
		\end{align}
		where $(*)$ follows by the fact that $e^x -1 \geq x$, for all $x\in \reals$, and that $\rho_{t,i}> 2\rho_{t-1,i}$ since $t\in \cT$. This shows the first claim of the lemma. We now show the second claim. Let $t\in \cT$ and define $\w_{t,\mu} \coloneqq \mu \p_t + (1-\mu) \w_t$, for $\mu \in[0,1]$. By Lemmas \ref{lem:hessians}, \ref{lem:selfconcord}, and  \ref{lem:close}, we have 
		\begin{align}(1- \sqrt{e\eta} r_t)^{2}\nabla^{-2}\Phi_{t-1}(\w_{t,\mu})\preceq  \nabla^{-2}\Phi_{t-1}(\w_t) \preceq (1- \sqrt{e\eta} r_t)^{-2}\nabla^{-2}\Phi_{t-1}(\w_{t,\mu}),
			\label{eq:gentily}
		\end{align}
		where $r_t \coloneqq \|\w_t-\p_t\|_{\nabla^2 \Phi_{t-1}(\w_t)}$. By Taylor's theorem, there exists $\mu_* \in[0,1]$ such that 
		\begin{align}
			D_{\psi_t}(\p_t,\w_t) & = \frac{1}{2}\|\p_t-\w_t\|^2_{\nabla^2 \psi_t(\w_{t,\mu_*})} \leq \frac{1}{2}\|\p_t-\w_t\|^2_{\nabla^2  \Phi_{t-1}(\w_{t,\mu_*})} \label{eq:clear}, \\
			& \leq \frac{1}{2(1-\sqrt{e\eta} r_t)^2}\|\p_t-\w_t\|^2_{\nabla^2  \Phi_{t-1}(\w_{t})}.\label{eq:down}
		\end{align}
		where \eqref{eq:clear} follows by the fact that $\nabla^2 \psi_{t}\preceq \nabla^2 \Psi_{t-1} \preceq \nabla^2 \Phi_{t-1}$ and the last inequality follows by \eqref{eq:gentily}. Plugging the bound on $r_t =\|\p_t-\w_t\|^2_{\nabla^2  \Phi_{t-1}(\w_{t})}\leq 64 (e \eta)^{3/2}/c_T^2$ (from Lemma \ref{lem:close}) into \eqref{eq:down} and using the facts that \textbf{I)} $|\cT|\leq O(d \ln T)$ (by definition of $(\rho_{t,i})$ in \eqref{eq:choice} and the fact that $(\bar u_{t,i})\subset [1/(dT),1]$); and \textbf{II)} $D_{\psi_t}(\p_t,\w_t)=0$ if $t\not\in \cT$, we get that $\sum_{t=1}^TD_{\psi_t}(\p_t,\w_t) \leq O(\eta^{3/2} d \ln T)\leq O(d \ln T)$.
	\end{proof}

%	\begin{proof}{\bf of Lemma \ref{lem:decomp}.}
		\begin{proof}[{\bf Proof of Lemma \ref{lem:decomp}}]
		Let $c_T \coloneqq 1 - 1/T$ and $\u$ be as in the lemma's statement and recall the definition of $(\tilde \g_t)$ from \eqref{eq:tildedgt}. First, we note that by Lemma~\ref{lem:additive}, we have $ \sum_{t=1}^T \ell_t(\u) \leq \sum_{t=1}^T \ell_t(\u'')+4$, and so it suffices to bound the regret against $\u''$. Let $\hat \g_t \coloneqq \g_t (1+\beta \inner{\g_t}{\p_t- \w_t}/4)$. Using that Cover's loss is $1$-exp-concave and $\beta \leq 8^{-1} \wedge |8 \inner{\g_t}{\u_t-\u''}|^{-1}$, for all $t\geq 1$, we have (see e.g.~proof of \cite[Lemma~5]{luo2018efficient} for the first inequality)
		\begin{align}
			& \sum_{t=1}^T (\ell_t(\u_t)  - \ell_t(\u''))\nn \\ \leq &  \sum_{t=1}^T \inner{\g_t}{\u_t - \u''} - \frac{\beta}{2}\sum_{t=1}^T \inner{\g_t}{\u_t - \u''}^2,\nn \\\displaybreak[1]
			\leq  & \sum_{t=1}^T \inner{\hat \g_t}{\u_t - \u''} + \frac{\beta}{4} \sum_{t=1}^T|\inner{\g_t}{\w_t-\p_t} \inner{\g_t}{\u_t- \u''}| - \frac{\beta}{2}\sum_{t=1}^T \inner{\g_t}{\u_t - \u''}^2,\nn \\ \displaybreak[1]
			\leq &  \sum_{t=1}^T \inner{\hat \g_t}{\u_t - \u''} + \frac{\beta}{8 } \sum_{t=1}^T\inner{\g_t}{\w_t-\p_t}^2 - \frac{3\beta }{8} \sum_{t=1}^T \inner{\g_t}{\u_t- \u''}^2,\nn \\ \displaybreak[1]
			= & c_T\sum_{t=1}^T \inner{\hat  \g_t}{\w_t - \u'} +  \frac{\beta}{8 } \sum_{t=1}^T\inner{\g_t}{\w_t-\p_t}^2- \frac{3c^2_T\beta}{8}\sum_{t=1}^T \inner{\g_t}{\w_t - \u'}^2,\nn \\ \displaybreak[1]
			= & c_T \sum_{t=1}^T \inner{\hat \g_t}{\w_t - \p_t}+ \frac{\beta}{8 } \sum_{t=1}^T\inner{\g_t}{\w_t-\p_t}^2-\frac{3c_T^2 \beta}{8} \sum_{t=1}^T \inner{\g_t}{\w_t - \u'}^2  \nn \\ & \quad + c_T\sum_{t=1}^T \inner{\hat \g_t}{\p_t - \u'} ,\nn \\ \displaybreak[1]
			\leq  & c_T \sum_{t=1}^T \inner{\g_t}{\w_t - \p_t}-\frac{3c_T^2 \beta}{8} \sum_{t=1}^T \inner{\g_t}{\w_t - \u'}^2 + c_T\sum_{t=1}^T \inner{\hat \g_t}{\p_t - \u'}, 	 \label{eq:laststs}
			%	&\leq  c_T\sum_{t=1}^T \inner{\hat \g_t}{\w_t - \p_t}+\frac{\beta}{8 } \sum_{t=1}^T\inner{\g_t}{\w_t-\p_t}^2 -\frac{3c_T^2\beta}{8}  \sum_{t=1}^T \inner{\g_t}{\w_t - \u'}^2\nn  \\ & \quad +c_T \sum_{t=1}^T \inner{\tilde \g_t}{\p_t - \p_{t+1}}  +c_T \sum_{t=1}^T (D_{\Phi_t}(\u', \tilde \p_t) - D_{\Phi_t}(\u', \p_{t+1})) ,\label{eq:orabona} 
		\end{align}
		where in the last inequality we used the fact that $\inner{\hat \g_t}{\w_t-\p_t}= \inner{\g_t}{\w_t-\p_t}- \beta \inner{\g_t}{\w_t-\p_t}^2/4$ and $c_T \geq 1/2$. We now focus on the last sum in \eqref{eq:laststs}. By definition of $(\tilde \g_t)$ and $(\hat \g_t)$, we have 
		\begin{align}
			\sum_{t=1}^T \inner{\hat \g_t}{\p_t - \u'}& =	\sum_{t=1}^T \inner{\tilde \g_t}{\p_t - \u'} \\ & \quad + \sum_{t=1}^T\sum_{i=1}^d\left(\frac{1}{\eta_{t,i}} - \frac{1}{\eta_{t-1,i}}\right) \inner{\nabla \vartheta_i(\w_t) - \nabla \vartheta_i(\p_t)}{\p_t -\u'},\nn \\ \displaybreak[0]
			&=\sum_{t=1}^T \sum_{i=1}^d \left(\frac{1}{\eta_{t,i}} - \frac{1}{\eta_{t-1,i}}\right) (D_{\vartheta_i}(\u',\w_t) -D_{\vartheta_i}(\u',\p_t)- D_{\vartheta_i}(\p_t,\w_t)) \\ & \quad + \sum_{t=1}^T \inner{\tilde \g_t}{\p_t - \p_{t+1}} +\sum_{t=1}^T \inner{\tilde \g_t}{\p_{t+1} - \u'}  ,\label{eq:defbreg} \\ \displaybreak[0]
			&= \sum_{t=1}^T \inner{\tilde \g_t}{\p_t - \p_{t+1}} +\sum_{t=1}^T \inner{\tilde \g_t}{\p_{t+1} - \u'}  \\ & \quad    + \sum_{t=1}^T \sum_{i=1}^d \left(\frac{1}{\eta_{t-1,i}} - \frac{1}{\eta_{t,i}}\right)  D_{\vartheta_i}(\p_t, \w_t)\nn \\ & \quad +  \sum_{t=1}^T (D_{\Psi_t}(\u',\w_t)- D_{\Psi_{t-1}}(\u',\w_t) - D_{\Psi_t}(\u',\p_t)+ D_{\Psi_{t-1}}(\u',\p_t)).
		\end{align}
		where \eqref{eq:defbreg} uses the definition of the Bregman divergence. Plugging in the bound on $\sum_{t=1}^T \inner{\tilde \g_t}{\p_{t+1} - \u'}$ from Lemma \ref{lem:neagtiveterm} and letting $\psi_t \colon \p \mapsto \sum_{i=1}^d \left(\frac{1}{\eta_{t-1,i}} - \frac{1}{\eta_{t,i}}\right)  \vartheta_i(\p)$, we get 
		\begin{align}
			\sum_{t=1}^T \inner{\hat \g_t}{\p_t - \u'}& \leq \sum_{t=1}^T \inner{\tilde \g_t}{\p_t - \p_{t+1}}  + \sum_{t=1}^T D_{\psi_t}(\p_t, \w_t)+  \sum_{t=1}^T (D_{\Psi_t}(\u',\w_t)- D_{\Psi_{t-1}}(\u',\w_t))  \nn \\ & \quad +O\left(\frac{d \ln T}{\eta} \right) +\sum_{t=1}^T \frac{\beta}{8} \inner{\g_t}{\u'-\p_t}^2.\label{eq:volk}
		\end{align}
		Now, by the fact $c_T \geq 1/2$ and that $(a+b)^2 \leq 3/2 a^2 + 3 b^2$, for all $T\geq 1$, we get 
		\begin{align}
			\sum_{t=1}^T \frac{\beta}{8} \inner{\g_t}{\u'-\p_t}^2 \leq \frac{3c_T \beta }{8} \inner{\g_t}{\u' - \w_t}^2 + \frac{3 \beta }{8} \inner{\g_t}{\p_t - \w_t}^2.  
		\end{align}
		By combining this with \eqref{eq:volk}, \eqref{eq:laststs}, and the fact that $\sum_{t=1}^T \ell_t(\u) \leq \sum_{t=1}^T \ell_t(\u'')+4$ (see beginning of the proof), we obtain the desired result.
	\end{proof}

%	\begin{proof}{\bf of Lemma \ref{lem:stabi}.}
			\begin{proof}[{\bf Proof of Lemma \ref{lem:stabi}}]
		Let $(F_{t})$ be as in \eqref{eq:mirrordescent} and $\alpha_T \coloneqq 1 + \beta e \eta c_T^{-2}/4$, where $c_T\coloneqq 1-1/T$. We start by bounding the Newton decrement $\lambda(\p_t,F_{t+1})=\|\nabla F_{t+1}(\p_t) \|_{\nabla^{-2}F_{t+1}(\p_t)}$. Since $F_{t+1}$ is equal to a linear function plus $\Phi_{t}$, Lemma \ref{lem:selfconcord} implies that $F_{t+1}$ is $\sqrt{e \eta}$-self-concordant. On the other hand, by Lemmas \ref{lem:handylem} and Lemma \ref{lem:close}, we have \begin{align} \|\w_t - \p_t\|_{\nabla^2 \Phi_{t}(\w_t)} \leq \sqrt{\alpha_T}\|\w_t-\p_t\|_{\nabla^2 \Phi_{t-1}(\w_t)} \leq \frac{2^6\sqrt{\alpha_T}(e \eta)^{3/2}}{c_T^2}<\frac{1}{2\sqrt{e\eta}},\label{eq:attn}
		\end{align} where the last inequality follows by the range assumptions on $\eta$ and $\beta$. This, together with the fact that $F_{t+1}$ is $\sqrt{e \eta}$-self-concordant and Lemma \ref{lem:inter0}, we have 
		\begin{align}
			\|\nabla F_{t+1}(\p_t) - \nabla F_{t+1}(\w_t)\|_{\nabla^{-2}F_{t+1}(\w_t)}& \leq 	\frac{ \|\w_t-\p_t\|_{\nabla^2 \Phi_{t}(\w_t)}}{1- \sqrt{e\eta}\|\w_t-\p_t\|_{\nabla^2 \Phi_{t}(\w_t)}} \leq \frac{2^7\sqrt{\alpha_T}}{c_T^2(e \eta)^{-3/2}}. \label{eq:latin}
		\end{align}
		Thus, by the triangle inequality, we get that 
		\begin{align}
			\|\nabla F_{t+1}(\p_t)\|_{\nabla^{-2}F_{t+1}(\w_t)} &\leq \|\nabla F_{t+1}(\w_t)\|_{\nabla^{-2}F_{t+1}(\w_t)}  + \|\nabla F_{t+1}(\p_t) - \nabla F_{t+1}(\w_t)\|_{\nabla^{-2}F_{t+1}(\w_t)},\nn \\
			& =  \lambda(\w_t, F_{t+1})  + \|\nabla F_{t+1}(\p_t) - \nabla F_{t+1}(\w_t)\|_{\nabla^{-2}F_{t+1}(\w_t)},\nn \\
			& \leq 2\sqrt{e\eta}/c_T +2^7\sqrt{\alpha_T}(e \eta)^{3/2}/c_T^2,
		\end{align}
		where the last inequality follows by \eqref{eq:latin} and Lemma \ref{lem:close}. Now, by \eqref{eq:attn} and Lemma~\ref{lem:hessians}, we have \begin{align}(1- \sqrt{e\eta} r_t)^{2}\nabla^{-2}F_{t+1}(\p_t)\preceq  \nabla^{-2}F_{t+1}(\w_t) \preceq (1- \sqrt{e\eta} r_t)^{-2}\nabla^{-2}F_{t+1}(\p_t),
			\label{eq:gentil}
		\end{align}
		where $r_t\coloneqq \|\w_t-\p_t\|_{\nabla^{2}\Phi_t(\w_t)}$, and so
		\begin{align}
			\|\nabla F_{t+1}(\p_t)\|_{\nabla^{-2}F_{t+1}(\p_t)}  & \leq (1-\sqrt{e\eta}\|\w_t-\p_t\|_{\nabla^{2}\Phi_t(\w_t)} )^{-1}  \|\nabla F_{t+1}(\p_t)\|_{\nabla^{-2}F_{t+1}(\w_t)},\nn \\
			& \leq 4\sqrt{e\eta}/c_T +2^8\sqrt{\alpha_T}(e \eta)^{3/2}/c_T^2\leq \frac{1}{2\sqrt{e \eta}}, \label{eq:dad}
		\end{align}
	where the last inequality follows by the range assumption on $\eta$ and $\beta$. Combining \eqref{eq:dad} with the fact that $\p_{t+1}$ is the minimizer of $F_{t+1}$ (which is $\sqrt{e\eta}$-self-concordant as we argued above) and Lem.~\ref{lem:properties}, we get \begin{align}\tilde r_t \coloneqq \|\p_t-\p_{t+1}\|_{\nabla^2 F_{t+1}(\p_t)}\leq 2 \lambda(\p_t,F_{t+1})\leq \frac{8\sqrt{e\eta}}{c_T} +\frac{2^9\sqrt{\alpha_T}(e \eta)^{3/2}}{c_T^2}<\frac{1}{2\sqrt{e\eta}},\label{eq:boston}
		\end{align} where the last inequality follows by the range assumption on $\eta$ and $\beta$. Thus, using  Lemma~\ref{lem:hessians} again, we get that \begin{align}(1-\sqrt{e\eta} \tilde r_t)^{2}\nabla^2 F_{t+1}(\p_t) \preceq  \nabla^2 F_{t+1}(\p_{t+1})\preceq (1-\sqrt{e\eta} \tilde r_t)^{-2} \nabla^2 F_{t+1}(\p_{t}).\label{eq:club}
		\end{align} Using this and \eqref{eq:boston}, we have 
		\begin{align}
			\|\p_t-\p_{t+1}\|_{\nabla^2 F_{t+1}(\p_{t+1})}\leq\frac{	\|\p_t-\p_{t+1}\|_{\nabla^2 F_{t+1}(\p_{t})}}{1-\sqrt{e\eta} \tilde r_t} \leq  \frac{16\sqrt{e\eta}}{c_T} +\frac{2^{14}\sqrt{\alpha_T}(e \eta)^{3/2}}{c_T^2}. \label{eq:drink}
		\end{align}
		The RHS of \eqref{eq:drink} is at most $1/2$ due to the range assumptions on $\eta$ and $\beta$. This, combined with \cite[Theorem 5.1.8]{nesterov2018lectures} and the optimality of $\p_{t+1}$ implies that for $\omega(x)\coloneqq x - \ln (1+x)$, we have
		\begin{align*}
			F_{t+1}(\p_t) - F_{t+1}(\p_{t+1}) &\geq   \nabla F_{t+1}(\p_{t+1})^\top(\p_t - \p_{t+1}) + 	\omega(\|\p_t-\p_{t+1}\|_{\nabla^2 F_{t+1}(\p_{t+1})}), \\
			& \geq  \frac{1}{3}\|\p_t-\p_{t+1}\|^2_{\nabla^2 F_{t+1}(\p_{t+1})}, 
		\end{align*}
		where the last inequality follows by the fact that $\omega(x)\geq x^2/3$, for $x\in[0,1/2]$. On the other hand, by the definition of $F_{t+1}$, non-negativity of Bregman divergence, and H\"{o}lder inequality, 
		\begin{align*}
			F_{t+1}(\p_t) - F_{t+1}(\p_{t+1}) & =  \inner{\p_t-\p_{t+1}}{ \tilde \g_t} - D_{\Phi_t}(\p_{t+1}, \p_t) ,
			\nn \\ & \leq  \norm{\p_t - \p_{t+1}}_{\nabla^2F_{t+1}(\p_{t+1})} \cdot \norm{\tilde \g_t}_{\nabla^{-2}F_{t+1}(\p_{t+1})}.
		\end{align*}
		Combining the above two inequalities we get \begin{align}
			\norm{\p_t - \p_{t+1}}_{\nabla^2F_{t+1}(\p_{t+1})} \leq 3\norm{\tilde \g_t}_{\nabla^{-2}F_{t+1}(\p_{t+1})}.
		\end{align}
		Using this and H\"older's inequality leads to 
		\begin{align}
			\inner{\tilde \g_t}{ \p_t - \p_{t+1}} \leq \norm{\tilde \g_t}_{\nabla^{-2}F_{t+1}(\p_{t+1})}\cdot \norm{\p_t - \p_{t+1}}_{\nabla^{2}F_{t+1}(\p_{t+1})} &\leq 3\norm{\tilde \g_t}_{\nabla^{-2}F_{t+1}(\p_{t+1})}^2,\nn \\ &\leq  4\norm{\tilde \g_t}_{\nabla^{-2}F_{t+1}(\w_t)}^2, \label{eq:lasty}
		\end{align}
		where the last inequality follows by \eqref{eq:club}, \eqref{eq:gentil}, and the range assumptions on $\eta$ and $\beta$. Let $\psi_t \coloneqq \Psi_{t-1}-\Psi_t$ and recall the definition of $(\vartheta_i)$ in \eqref{eq:thepsi}. Let $\psi_t\coloneqq \Psi_{t-1}-\Psi_t$. By the definition of $\tilde \g_t$ in \eqref{eq:tildedgt}, the fact that $(a+b)^2 \leq (1+\gamma) a^2 +(1+1/\gamma) b^2$, for all $\gamma>0$, and \eqref{eq:lasty}, we have
		\begin{align}
			\inner{\tilde \g_t}{ \p_t - \p_{t+1}} 	& \leq   {9}(1+ \beta \inner{\g_t}{\p_t - \w_t}/4)^2 \cdot  \|\g_t\|^2_{\nabla^{-2}F_{t+1}(\w_t)}/8  \nn \\
			&  \quad + 9\left\|\sum_{i=1}^d \left(\frac{1}{\eta_{t-1,i}} - \frac{1}{\eta_{t,i}}\right) (\nabla \vartheta_i(\p_t) -  \nabla \vartheta_i(\w_t))\right\|^2_{\nabla^{-2} F_{t+1}(\w_t)},\nn \\
			&\leq    {9}(1+ \beta \|\g_t\|_{\nabla^{-2} \Phi_{t-1}(\w_t)}\cdot \|\p_t - \w_t\|_{\nabla^2 \Phi_{t-1}(\w_t)}/4)^2   \cdot \|\g_t\|^2_{\nabla^{-2}\Phi_{t}(\w_t)}/8 \nn \\ & \quad  + 9e\|\nabla \psi_t (\p_t)- \nabla \psi_t(\w_t) \|^2_{\nabla^{-2} \psi_{t}(\w_t)},\label{eq:ride}
		\end{align}
		where in the last inequality we used H\"older's inequality and the fact that $e\nabla^2 F_{t+1}\succeq \nabla^2  \Psi_{t-1}  \succeq\nabla^2 \psi_{t}$ (since $\eta_{t,i}\in [\eta, \eta e]$, for all $i\in[d]$ and $t\in[T]$). On the other hand, since $\nabla^2 \psi_t\preceq \nabla^2  \Psi_{t-1}\preceq \nabla^2 \Phi_{t-1}$, we have $\|\w_t-\p_t\|^2_{\nabla^2 \psi_t(\w_t)}\leq \|\w_t-\p_t\|^2_{\nabla^2 \Phi_{t-1}(\w_t)}$. Using this, \eqref{eq:attn}, and Lemmas \ref{lem:concord} and \ref{lem:inter0}, we get
		\begin{align}
			\|\nabla \psi_t (\p_t)- \nabla \psi_t(\w_t) \|^2_{\nabla^{-2} \psi_{t}(\w_t)} &\leq\mathbb{I}\{\rho_{t,i} \neq \rho_{t-1,i}\} \frac{  \|\w_t - \p_t\|^2_{\nabla^2 \psi_t(\w_t)}}{(1- \sqrt{e\eta \log_2 T} \|\w_t - \p_t\|_{\nabla^2 \psi_t(\w_t)})^2},\nn \\ &\leq  \mathbb{I}\{\rho_{t,i} \neq \rho_{t-1,i}\} \frac{  \|\w_t - \p_t\|^2_{\nabla^2  \Phi_{t-1}(\w_t)}}{(1- \sqrt{e\eta \log_2 T} \|\w_t - \p_t\|_{\nabla^2  \Phi_{t-1}(\w_t)})^2}, \nn \\ &  \leq  2^{14} \mathbb{I}\{\rho_{t,i} \neq \rho_{t-1,i}\} (e \eta)^3/c_T^4 ,\label{eq:cholate}
		\end{align}
		where the last two inequalities follow by \eqref{eq:attn} and the range assumptions on $\eta$ and $\beta$. Since $\sum_{t=1}^T \mathbb{I}\{\rho_{t,i} \neq \rho_{t-1,i}\}\leq O(d \ln T)$ (by definition of $(\rho_{t,i})$ in \eqref{eq:choice} and the fact that $(\bar u_{t,i})\subset [1/(dT),1]$), \eqref{eq:cholate} implies that $\sum_{t=1}^T\|\nabla \psi_t (\p_t)- \nabla \psi_t(\w_t) \|^2_{\nabla^{-2} \psi_{t}(\w_t)} \leq O(\eta^3 d\ln T)\leq O(d\ln T)$. Using this together with \eqref{eq:attn}, \eqref{eq:ride}, and Lemma \ref{lem:handylem} (and the range assumptions on $\eta$ and $\beta$), we get 
		\begin{align}
			\sum_{t=1}^T 	\inner{\tilde \g_t}{ \p_t - \p_{t+1}}  \leq \frac{18 d \ln T}{\beta} + O(d\ln T). 
		\end{align} 
		This completes the proof.
	\end{proof}
%	\begin{proof}{\bf of Lemma \ref{lem:pseudostab}.}
			\begin{proof}[{\bf Proof of Lemma \ref{lem:pseudostab}}]
		Let $c_T \coloneqq 1-1/T$. By Cauchy Schwarz inequality and Lemmas \ref{lem:close} and \ref{lem:gradsum}, 
		\begin{align}
			\sum_{t=1}^T \inner{\g_t}{\p_t- \w_t} \leq 	\sum_{t=1}^T |\inner{\g_t}{\p_t- \w_t}|  & \leq \sqrt{\sum_{t=1}^T  \|\g_t\|^{2}_{\nabla^{-2}\Phi_{t-1}(\w_t)}}  \sqrt{\sum_{t=1}^T  \|\p_t-\w_{t}\|^{2}_{\nabla^2 \Phi_{t-1}(\w_t)}},\nn \\
			& \leq \sqrt{ 4^2d\beta^{-1}\ln T} \cdot \sqrt{1+  15 d\beta^{-1}\ln T},\nn \\
			& \leq  1+ 4 \sqrt{15} d \beta^{-1}\ln T. \label{eq:brout0}
		\end{align}
		On the other hand, by H\"older's inequality, we have 
		\begin{align}
			|\inner{\g_t}{\p_t-\w_t}|  \leq \|\g_t\|_{\nabla^{-2}\Phi_{t-1}(\w_t)} \cdot \|\p_t-\w_t\|_{\nabla^2 \Phi_{t-1}(\w_t)} \leq 	 64 e^2\eta^{2}/c^3_T,   
		\end{align}
		where the last equality follows by Lemmas \ref{lem:close} and \ref{lem:handylem}. Combining this with the range assumptions on $\eta$ and $\beta$, we get 
		\begin{align}
			\sum_{t=1}^T  \inner{\g_t}{\p_t- \w_t}^2 \leq   \frac{64 e^2 \eta^{2}}{ c^3_T}\sum_{t=1}^T  |\inner{\g_t}{\p_t- \w_t}| \leq \frac{64 e^2 \eta^{2}}{ c^3_T}\left(  1+ 4\sqrt{15}  d \beta^{-1}\ln T\right), \label{eq:bezi0}
		\end{align}
		where the last inequality follows by \eqref{eq:brout0}. 
	\end{proof}
%	\begin{proof}{\bf of Lemma \ref{lem:decomplemnew}.}
			\begin{proof}[{\bf Proof of Lemma \ref{lem:decomplemnew}}]
		Let $c_T\coloneqq 1-1/T$. We have 
		\begin{align}
			\sum_{t=1}^T	(D_{\Psi_t}(\u', \w_t) - D_{\Psi_{t-1}}(\u', \w_{t})) = \sum_{t=2}^T\sum_{i=1}^d\left(\frac{1}{\eta_{t,i}} - \frac{1}{\eta_{t-1,i}} \right) h\left(\frac{\bar u'_i}{\bar w_{t,i}}\right), \label{eq:downtown}
		\end{align}
		where $h(x) \coloneqq x - 1 - \ln x$ and $(\eta_{t,i})$ are defined in \eqref{eq:defs}. Fix $i\in [d]$ and let $\cT\coloneqq  \{t\colon \rho_{t,i} \neq \rho_{t-1,i}\}$. First, suppose that $\bar u_{t,i}\geq \frac{1}{2d}\wedge (c_T \bar u'_i)$, for all $t\in[T]$. In this case, we have $\max_{t\in[T]}\frac{\bar u'_i}{\bar u_{i,t}}\leq 2 d \bar u'_i + c_T^{-1}$. Therefore, by positivity of $h$ and the fact that $\eta_{t+1,i}\geq \eta_{t,i}$, we have \begin{align}\label{eq:again}
			\sum_{t=2}^T\left(\frac{1}{\eta_{t,i}} - \frac{1}{\eta_{t-1,i}} \right) h\left(\frac{\bar u'_i}{\bar w_{t,i}}\right) \leq 0 \leq c_T^{-1}+ 2 d \bar u'_i- \max_{t\in[T]} \frac{u_i'}{\bar u_{t,i}}.\end{align}
		Now, assume that $\bar u_{t,i}\leq \frac{1}{2d}\wedge (c_T \bar u'_i)$. Note that this implies that $\cT\neq \emptyset$ (due to $\bar u_{t,i}\leq 1/(2d)$ and the definition of $(\rho_{t,i})$). Let $\tau\coloneqq \max \cT$. For notational convenience, we let $$u \coloneqq \bar u'_i, \quad u_\tau\coloneqq \bar u_{\tau,i} ,  \quad w_\tau\coloneqq \bar w_{\tau,i}, \quad \text{and} \quad   \eta_\tau \coloneqq \eta_{\tau,i}.$$
		By definition of $(\eta_{t,i})$, we have $\eta_{\tau} = \eta \exp(-\log_T d  u_{\tau}) \leq  \eta \exp(\log_T T) = \eta e$. Thus, by the positivity of $h$, we have 
		\begin{align*}
			\sum_{t=2}^T\left( \frac{1}{\eta_{t,i}}-\frac{1}{\eta_{t-1,i}} \right)h\left(\frac{\bar u'_i}{\bar w_{t,i}}\right) & \leq  \left( \frac{1}{\eta_{\tau}} - \frac{1}{\eta_{\tau-1}} \right) h\left(\frac{u}{w_{\tau}}\right),\\
			& = \frac{1-e^{ \log_T({u_{\tau-1}}/{u_{\tau}})}    }{\eta_{\tau}}\cdot h\left(\frac{u}{w_{\tau}}\right), \\
			&	\leq   \frac{- \log_T\frac{u_{\tau-1}}{u_{\tau}}}{\eta_{\tau}}\cdot h\left(\frac{u}{w_{\tau}}\right),\\
			& \leq   -\frac{\log_2 \frac{u_{\tau-1}}{u_{\tau}}}{ e \eta \ln_2 T }\cdot h\left(\frac{u}{w_{\tau}}\right),\label{eq:neee}
		\end{align*}
		%where the second last step uses the fact that $x_{t,i} \geq \frac{1}{dT}$.
		By definition of $(\u_{t,i})$, we have
		\begin{align}
			\frac{u_{\tau}}{w_\tau} = \frac{1/(Td) + (1-1/T)w_{\tau} }{w_\tau}\geq 1 - \frac{1}{T}=c_T.
		\end{align}
		Hence, $h(u/w_\tau)\geq h(c_T u/u_\tau)$, since $c_T u/u_\tau\geq 1$ by assumption and $h$ is positive and decreasing on $[1,+\infty[$. On the other hand, by construction of $(\rho_{t,i})$ and the definition of $\tau$, we have $u_{\tau}\leq u_{\tau-1}/2$, which implies that $-\log_2(u_{\tau-1}/u_{\tau})\leq -1$. Plugging these bounds into \eqref{eq:neee}, we get that 
		\begin{align}
			\sum_{t=2}^T\left(\frac{1}{\eta_{t,i}} - \frac{1}{\eta_{t-1,i}} \right) h\left(\frac{\bar u'_i}{\bar w_{t,i}}\right) &\leq - \frac{1}{e \eta \log_2 T} h\left(\frac{c_T u}{u_\tau}\right),\nn \\
			& = -\frac{1}{e \eta \log_2 T} \left( \frac{c_T u}{u_\tau} -1 - \ln \left(\frac{c_T u}{u_\tau}\right) \right),\\
			& =  -\frac{c_T}{e \eta \log_2 T}\max_{t\in[T]}\frac{\bar u'_i}{\bar u_{t,i}} + \mathcal{O}\left(\frac{\ln(dT\bar u'_i)}{\eta\ln T}\right),  \label{eq:ll}
		\end{align}
 Thus, by \eqref{eq:again}, \eqref{eq:ll}, and the fact that $\bar u_i''\leq 2 \bar u_i'$, for all $i\in[d]$, we get the desired result after summing over $i\in[d]$.
	\end{proof}

	\section{Proof of Theorem \ref{thm:mainthm} (Meta-Algorithm Regret)}
	\label{app:proofs}
	
%	\begin{proof}{\bf of Proposition \ref{prop:mainprop}.} 
	\begin{proof}[{\bf Proof of Proposition \ref{prop:mainprop}}]
		The proof of the proposition follows from an extension of the proof of \cite[Lemma 6]{zhang2019dual}.
		
		In this proof, we denote by $\cA_t$ the set of active experts at round $t$; that is, $\cA_t\coloneqq \{\A^{\beta,I}\colon I \in \cJ_t, \beta \in \cG\}$. For notational convenience, we will write $\sum_{\A^{\beta,I}\in \cA_t}$ to mean $\sum_{\beta\in \cG, I\in \cJ_t}$. By the assumption on the substitution function in \eqref{eq:sub}, we have 
		\begin{align}
			e^{-\eta f_t( \u_t)} = 	e^{-\eta f_t(\Upsilon(Q_{t-1}, U_t))}  \geq  \sum_{\A^{\beta,I}\in \cA_t} q^{\beta, I}_{t-1} e^{-\eta f_t( \w^{\beta, I}_t)}, \label{eq:mixable}
		\end{align}
		where $Q_{t-1} = (q_{t-1}^{\beta,I})_{\beta \in \cG, I \in \cJ_t}$, $U_t = (\u_t^{\beta,I})_{\beta \in \cG, I \in \cJ_t}$, and 
		\begin{align}
			q^{\alpha , J}_{t-1} =  \frac{e^{-\eta\cF_{t-1}^{\alpha,J}}}{\sum_{\A^{\beta, I}\in \cA_t} e^{-\eta\cF_{t-1}^{\beta,I}}}.
		\end{align}
		for any $J\in \cI|_{T}$ and $\alpha \in \cG$. Rearranging \eqref{eq:mixable} we get
		\begin{align}
			\sum_{\A^{\beta,I}\in \cA_t} e^{-\eta \cF^{\beta, I}_{t}} & = 	\sum_{\A^{\beta,I}\in \cA_t} e^{-\eta \cF^{\beta, I}_{t-1}} \cdot e^{\eta f_t( \u_t)- \eta f_t( \w^{\beta,I}_t)} \leq \sum_{\A^{\beta,I}\in \cA_t} e^{- \eta \cF^{\beta, I}_{t-1}}. \label{eqn:lem:meta:2}
		\end{align}
		Summing \eqref{eqn:lem:meta:2} over $t=1,\ldots,s$, we have
		\[
		\sum_{t=1}^s \sum_{\A^{\beta, I} \in \cA_t} \exp(-\eta \cF_{t}^{\beta, I})  \leq \sum_{t=1}^s  \sum_{\A^{\beta, I} \in \cA_t} \exp(-\eta \cF_{t-1}^{\beta, I})
		\]
		which can be rewritten as
		\[
		\begin{split}
			&\sum_{\A^{\beta, I} \in  \cA_s} \exp(-\eta \cF_{s}^{\beta, I}) +\sum_{t=1}^{s-1} \left(\sum_{\A^{\beta, I} \in  \cA_{t} \setminus \cA_{t+1}} \exp(-\eta \cF_{t}^{\beta, I}) +  \sum_{\A^{\beta, I} \in  \cA_{t} \cap \cA_{t+1}} \exp(-\eta \cF_{t}^{\beta, I}) \right)\\
			\leq &\sum_{\A^{\beta, I} \in  \cA_1} \exp(-\eta \cF_{0}^{\beta, I})+\sum_{t=2}^s \left( \sum_{\A^{\beta, I} \in \cA_t \setminus \cA_{t-1}} \exp(-\eta \cF_{t-1}^{\beta, I}) + \sum_{\A^{\beta, I} \in \cA_t \cap \cA_{t-1}} \exp(-\eta \cF_{t-1}^{\beta, I})\right)
		\end{split}
		\]
		implying
		\begin{equation}\label{eqn:lem:meta:3}
			\begin{split}
				&\sum_{\A^{\beta, I} \in  \cA_s} \exp(-\eta \cF_{s}^{\beta, I}) + \sum_{t=1}^{s-1}  \sum_{\A^{\beta, I} \in  \cA_{t} \setminus \cA_{t+1} } \exp(-\eta \cF_{t}^{\beta, I}) \\
				\leq  &\sum_{\A^{\beta, I} \in  \cA_1} \exp(-\eta \cF_{0}^{\beta, I}) + \sum_{t=2}^{s} \sum_{\A^{\beta, I} \in \cA_t \setminus \cA_{t-1}} \exp(-\eta \cF_{t-1}^{\beta, I}) \\
				=& \sum_{\A^{\beta, I} \in  \cA_1} \exp(0) + \sum_{t=2}^{s} \sum_{\A^{\beta, I} \in \cA_t \setminus \cA_{t-1}} \exp(0)\\
				=& |\cA_1| + \sum_{t=2}^{s}| \cA_t \setminus \cA_{t-1}|.
			\end{split}
		\end{equation}
		Note that $|\cA_1| + \sum_{t=2}^{s}| \cA_t \setminus \cA_{t-1}|$ is the total number of experts created until round $s$. From the structure of geometric covering intervals and the fact that $|\cG|\leq M$, we have
		\begin{equation}\label{eqn:lem:meta:4}
			|\cA_1| + \sum_{t=2}^{s}| \cA_t \setminus \cA_{t-1}| \leq M s \left(\lfloor \log_2 s \rfloor+1\right)  \leq M s^2.
		\end{equation}
		From \eqref{eqn:lem:meta:3} and \eqref{eqn:lem:meta:4}, we have
		\[
		\sum_{\A^{\beta, I} \in  \cA_s} \exp(-\eta \cF_{s}^{\beta, I}) + \sum_{t=1}^{s-1}  \sum_{\A^{\beta, I} \in  \cA_{t} \setminus \cA_{t+1} } \exp(-\eta \cF_{t}^{\beta, I}) \leq Ms^2.
		\]
		Thus, for any interval $I \in \cI$ and $s\in I$, we have
		\[
		\exp(-\eta \cF_{s}^{\beta,I}) = \exp\left( \eta \sum_{t\in I \cap[s]}(f_t(\u_t) - f_t(\u_t^{\beta,I})) \right)\leq Ms^2,
		\]
		which completes the proof.
	\end{proof}

	\subsection{Proof of Theorem \ref{thm:mainthm}}
	To prove Theorem \ref{thm:mainthm}, we define 
	\begin{align}
		\u_{*} \in \argmin_{\u\in  \cC_{d-1}} \sum_{t=1}^T \ell_t(\u), \quad \text{and} \quad  	\alpha^{\beta, I}_t \coloneqq 8^{-1}  \wedge \min_{s\in I \cap [t], i \in [d]}  ({\bar u^{\beta, I}_{t, i}}/({8\bar u''_{*,i}})).\label{eq:minimizer}
	\end{align}
	With this, we start by stating a regret guarantee against $\u_*$ for the subroutines of Algorithm~\ref{alg:DONSmeta}, which follows from Lemma \ref{lem:close} and the base algorithm's regret bound in Theorem \ref{lem:thekey0}.
	\begin{lemma}
		\label{lem:epochregret}
		Let $I\in \mathcal{I}$, $\beta \in \cG$, and $(\u^{\beta,I}_t)_{t\in I}$ be the outputs of the subroutine $\mathsf{A}^{\beta,I}$ within Algorithm~\ref{alg:DONSmeta} in response to Cover's losses $(\ell_t)$. Further, let $\u_*$ be as in \eqref{eq:minimizer}. If there exists $t\in I\cap [T-1]$ such that $\alpha^{\beta, I}_{t} \geq \beta >\alpha^{\beta,I}_{t+1},$ and $t> \min I$ then, 
		\begin{gather}
			\sum_{s\in I \cap [t]} (\ell_s( \u_s^{\beta,I}) - \ell_s(\u_*)) \leq O\left({d^2 \ln^4 T} \right) - \frac{204 d \ln^2 T}{\beta}. \label{eq:secondfirst} 
		\end{gather} 
		Otherwise, if $\alpha^{\beta, I}_{t} \geq \beta$ for all $t\in I$, then $
		\sum_{s\in I \cap [T]} (\ell_s( \u_s^{\beta,I}) - \ell_s(\u_*)) \leq O\left({d^2 \ln^4 T} \right) + \frac{34 d \ln T}{\beta}$. 
	\end{lemma}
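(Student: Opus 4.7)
My plan is to reduce both cases to Theorem~\ref{lem:thekey0} applied to the subroutine $\mathsf{A}^{\beta,I}$ with comparator $\u_*$, restricted to the prefix of rounds $I \cap [t]$. To verify the premise, Lemma~\ref{lem:handy} gives $|\inner{\g_s}{\u^{\beta,I}_s - \u''_*}| \leq 1 \vee \max_i \bar u''_{*,i}/\bar u^{\beta,I}_{s,i}$, so the reciprocal is at least $8^{-1} \wedge \min_i \bar u^{\beta,I}_{s,i}/(8\bar u''_{*,i})$; taking the minimum over $s \in I \cap [t]$ reproduces $\alpha^{\beta,I}_t$, which is $\geq \beta$ by hypothesis. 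Because the proof of Theorem~\ref{lem:thekey0} telescopes Lemmas~\ref{lem:decomp}--\ref{lem:pseudostab} additively over rounds, it specializes without change to any prefix of $\mathsf{A}^{\beta,I}$, with the sums and the maximum taken over that prefix. Plugging in $\eta = 1/(286^2 d \ln^3 T)$, I will therefore obtain
\begin{align}
\sum_{s \in I \cap [t]} \bigl(\ell_s(\u^{\beta,I}_s) - \ell_s(\u_*)\bigr)
\leq O(d^2 \ln^4 T) + \frac{34 d \ln T}{\beta} - \frac{1}{32 \eta \ln T} \sum_{i=1}^d \max_{s \in I \cap [t]} \frac{\bar u''_{*,i}}{\bar u^{\beta,I}_{s,i}}. \label{eq:planbound}
\end{align}

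Case~2 then follows immediately by taking $t = \max I$ in \eqref{eq:planbound} and dropping the nonpositive last term. Case~1 requires extracting a useful lower bound on the subtracted maximum. From $\beta > \alpha^{\beta,I}_{t+1}$ together with $\beta \leq 1/(16d) < 1/8$ (every $\beta \in \cG$ satisfies this), I get the existence of a coordinate $i^* \in [d]$ with $\bar u''_{*,i^*}/\bar u^{\beta,I}_{t+1,i^*} > 1/(8\beta)$; because $\alpha^{\beta,I}_t \geq \beta$, this violation is brand new at round $t+1$. The main obstacle is that $t+1$ lies outside $I \cap [t]$, so this inequality cannot be inserted directly into \eqref{eq:planbound}: I have to transfer it one round back.

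I will carry out the transfer via a stability argument for the damped Newton iterates. Lemma~\ref{lem:close} gives $\lambda(\w^{\beta,I}_t, F^{\beta,I}_{t+1}) = O(\sqrt{\eta})$, and Line~\ref{line:interiter} of Algorithm~\ref{alg:DONSbase} then yields $\|\w^{\beta,I}_{t+1} - \w^{\beta,I}_t\|_{\nabla^2 \Phi^{\beta,I}_t(\w^{\beta,I}_t)} \leq \lambda(\w^{\beta,I}_t, F^{\beta,I}_{t+1}) = O(\sqrt{\eta})$. Combining the Hessian comparison $\nabla^2 \Phi^{\beta,I}_t \succeq \nabla^2 \vartheta_{i^*}/\eta_{t,i^*}$ (with $\eta_{t,i^*} \leq e\eta$) and the closed form $\|\x-\x'\|^2_{\nabla^2 \vartheta_i(\x)} = (\bar x_i-\bar x'_i)^2/\bar x_i^2$, I obtain the multiplicative control $|\bar w^{\beta,I}_{t+1,i^*}/\bar w^{\beta,I}_{t,i^*} - 1| = O(\eta)$, and pushing this through $\bar u^{\beta,I} = (1-1/T)\bar w^{\beta,I} + 1/(dT)$ preserves it as $\bar u^{\beta,I}_{t,i^*} \leq (1+O(\eta)) \bar u^{\beta,I}_{t+1,i^*}$. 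Hence $\max_{s \in I \cap [t]} \bar u''_{*,i^*}/\bar u^{\beta,I}_{s,i^*} \geq (1-O(\eta))/(8\beta) \geq 1/(9\beta)$ for our (tiny) $\eta$. Substituting into \eqref{eq:planbound}, the negative term is at most $-286^2 d \ln^2 T/(288\beta) \leq -284 d \ln^2 T/\beta$, and absorbing the residual $+34 d \ln T/\beta$ into it via $\ln T \geq 1$ delivers the claimed $O(d^2 \ln^4 T) - 204 d \ln^2 T/\beta$. The restriction $t > \min I$ is only a mild edge-case exclusion: at the initialization round $\w^{\beta,I}_{\min I} = \mathbf{1}/d$ the ratios $\bar u''_{*,i}/\bar u^{\beta,I}_{\min I,i}$ are bounded by $d$, which is incompatible with a ``$>1/(8\beta)$'' violation one step later whenever $\beta \in \cG$, so no interesting Case~1 is lost.
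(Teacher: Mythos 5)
Your proof is correct and follows essentially the same route as the paper's: verify the premise of Theorem~\ref{lem:thekey0} via Lemma~\ref{lem:handy}, apply the theorem to the prefix $I\cap[t]$, and in Case~1 use the threshold violation at round $t+1$ together with one-step stability of the iterates to pull a $-\Omega(1/(\eta\beta\ln T))$ term back into the prefix. The only cosmetic difference is that the paper invokes the crude ratio bound $\bar u_{t,i}/\bar u_{t+1,i}\le 4/3$ from Corollary~\ref{cor:thecor} (applied to the summed quantity $a_t$) whereas you re-derive a sharper $1+O(\eta)$ factor for a single coordinate $i^*$ directly from Lemma~\ref{lem:close}; both yield the stated constants.
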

For the proofs of Lemma \ref{lem:epochregret} and Theorem \ref{lem:thekey0}, we need the following Corollary to Lemma \ref{lem:close}:
\begin{corollary}
	\label{cor:thecor}
	In the setting of Lemma \ref{lem:close}, the iterate of $(\w_t)$ in Alg.~\ref{alg:DONSbase} and $(\p_t)$ in \eqref{eq:mirrordescent} satisfy:
	\begin{align}
		\forall i\in[d],\forall t\in[T],\quad 1 - {64 (e \eta)^2}\leq 	\frac{\bar p_{t,i}}{\bar w_{t,i}}\leq 1 + {64 (e \eta)^2}, \quad   \frac{3}{4} \leq 	\frac{\bar w_{t+1,i}}{\bar w_{t,i}}\leq \frac{5}{4}, \quad \text{and}\quad 	\frac{	\bar u_{t,i}}{\bar u_{t+1,i}}\leq \frac{4}{3}. \nn
	\end{align}	
\end{corollary}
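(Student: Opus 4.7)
The plan is to reduce all three bounds to coordinate-wise control obtained from the local norm $\|\cdot\|_{\nabla^2 \Phi_{t-1}(\w_t)}$, exploiting the diagonal/rank-one structure of the Hessians of the components $\vartheta_i$ of the barrier $\Psi_t$.

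The starting observation is that for each $i\in[d]$, a direct computation gives
\begin{align}
\|\v\|^2_{\nabla^2 \vartheta_i(\w_t)} = (\e_i^\top J^\top \v)^2 / \bar w_{t,i}^2,
\end{align}
which equals $(\bar v_i)^2/\bar w_{t,i}^2$ when $\v$ is replaced by a difference (so $\bar{(\cdot)}$ acts as $J^\top$ up to the fixed shift $\e_d$, which cancels). Since $\nabla^2 \Psi_{t-1}(\w_t) = \sum_j \nabla^2 \vartheta_j(\w_t)/\eta_{t-1,j}$ is a sum of PSD rank-one matrices with $\eta_{t-1,j}\leq e\eta$, we obtain the key sandwich $\nabla^2 \vartheta_i(\w_t) \preceq e\eta\,\nabla^2 \Psi_{t-1}(\w_t) \preceq e\eta\,\nabla^2 \Phi_{t-1}(\w_t) = e\eta\,\nabla^2 F_t(\w_t)$. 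Hence, for any $\v$,
\begin{align}
|\bar v_i|/\bar w_{t,i} \leq \sqrt{e\eta}\,\|\v\|_{\nabla^2 F_t(\w_t)}.
\end{align}

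For the first inequality, apply this with $\v = \p_t - \w_t$ and invoke Lemma \ref{lem:close}, which yields $\|\w_t - \p_t\|_{\nabla^2 F_t(\w_t)} \leq 2^4\sqrt{e\eta}\,C\eta$. Plugging in, $|\bar p_{t,i}-\bar w_{t,i}|/\bar w_{t,i} \leq 16 C e\eta^2 \leq 64(e\eta)^2$ for the range of $\eta,T$ assumed in Lemma~\ref{lem:close}, where the constant is absorbed using $C = 4e/((1-1/T)^2 \vee 1/4)$.

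For the second inequality, use the explicit form of the damped Newton step on Line \ref{line:interiter} of Alg.~\ref{alg:DONSbase}: with $a\coloneqq \|\bnabla_t\|_{\nabla^{-2}\Phi_t(\w_t)}$,
\begin{align}
\|\w_{t+1}-\w_t\|_{\nabla^2 \Phi_t(\w_t)} \;=\; \frac{a}{1+4\sqrt{e\eta}\,a} \;<\; \frac{1}{4\sqrt{e\eta}},
\end{align}
by monotonicity in $a$. Applying the same coordinate bound (now anchored at $\nabla^2 \Phi_t(\w_t)$, for which the analogous sandwich $\nabla^2 \vartheta_i(\w_t) \preceq e\eta\,\nabla^2 \Phi_t(\w_t)$ holds) gives $|\bar w_{t+1,i}-\bar w_{t,i}|/\bar w_{t,i} < 1/4$, which is the second inequality.

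For the third inequality, recall from \S\ref{sec:prelim} that $\bar u_{t,i}=(1-1/T)\bar w_{t,i} + 1/(dT)$ for every $i\in[d]$. Writing $A=(1-1/T)\bar w_{t+1,i}$ and $B=1/(dT)$, the second inequality gives $\bar w_{t,i} \leq (4/3)\bar w_{t+1,i}$, so
\begin{align}
\frac{\bar u_{t,i}}{\bar u_{t+1,i}} \;\leq\; \frac{(4/3)A + B}{A + B} \;=\; 1 + \frac{A/3}{A+B} \;\leq\; \frac{4}{3}.
\end{align}

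The main obstacle is bookkeeping around which regularizer $\Phi_s$ appears in the local Hessian norm at each step (and hence which upper bound on $\eta_{s,i}$ applies), but this is controlled uniformly by $\eta_{s,i}\in[\eta,e\eta]$, and Lemma \ref{lem:close} supplies the precise numeric bound on $\|\w_t-\p_t\|_{\nabla^2 F_t(\w_t)}$ needed to close the first estimate.
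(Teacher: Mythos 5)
Your proof is correct and follows essentially the same route as the paper's: both reduce each ratio to a coordinate-wise estimate via the rank-one Hessians $\nabla^2\vartheta_i(\w_t)\preceq e\eta\,\nabla^2\Phi_{s}(\w_t)$ (using $\eta_{s,i}\le e\eta$), then invoke Lemma~\ref{lem:close} for $\|\w_t-\p_t\|_{\nabla^2 F_t(\w_t)}$, the explicit form of the damped step for $\|\w_{t+1}-\w_t\|_{\nabla^2\Phi_t(\w_t)}<1/(4\sqrt{e\eta})$, and the mixing identity $\bar u_{t,i}=(1-1/T)\bar w_{t,i}+1/(dT)$ for the last bound. The only (shared) looseness is in tracking the factor $c_T^{-2}$ inside the constant for the first inequality, which the paper's own proof also absorbs.
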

The proofs of Lemma \ref{lem:epochregret} and Corollary \ref{cor:thecor} are in Appendix \ref{sec:epochregret}.

%	\begin{proof}{\bf of Theorem \ref{thm:mainthm}.}
	\begin{proof}[{\bf Proof of Theorem \ref{thm:mainthm}}]
		 Recall that $\cG \coloneqq \{\frac{1}{d 2^{i+3}}\colon i\in [\ceil{\log_2 T}]\}$. We note that the outputs of $(\u^{\beta,I}_t)$ of the base algorithms are all in the set 
		\begin{align}
			\bar \cC_{d-1} \coloneqq \{\u \in \cC_{d-1}\colon \bar u_{i}\geq 1/(dT), \forall i\in[d] \}.
		\end{align}
This is because of the mixing with the uniform distribution on Line \ref{line:mix} of Algorithm \ref{alg:DONSbase}.	By \eqref{eq:mixture}, the outputs $(\u_t)$ of Algorithm \ref{alg:DONSmeta} are convex combinations of $(\u^{\beta,I}_t)$, and so $(\u_t)\subset \bar \cC_{d-1}$. This fact will be useful below.

		 For any $\beta \in \cG$, we define the set $$\cJ^\beta \coloneqq \{t\in [T]\colon \alpha^{\beta,I_0}_t<\beta , I_0 \coloneqq [T]\}.$$ Note that $I_0=[T]\in \cI|_{T}$, by definition, where $\cI|_{T}$ is the set of intervals indexing the base algorithms of Alg.~\ref{alg:DONSmeta}. Furthermore, for  $\beta' =\min \cG$, we have $\cJ^{\beta'}=\emptyset$ since $\beta'\leq \frac{1}{8 dT}\stackrel{(*)}{\leq} \alpha^{\beta',I}_t$ for any $I\in \cI|_{T}$ and $t\geq 1$, where $(*)$ follows by the fact that all the outputs $(\u^{\beta',I}_t)$ are in the set $\bar \cC_{d-1}$. If $\cJ_{\beta}=\emptyset$, for all $\beta\in \cG$, then by Proposition \ref{prop:mainprop} and the second inequality in Lemma \ref{lem:epochregret}, instantiated with $\beta_0 \coloneqq\max \cG = \frac{1}{2^4d}$ and $I_0 = [T]$, we get 
\begin{align}
	\sum_{t=1}^T ( \ell_t(\u) -\ell_t(\u_*)) \leq 	\sum_{t=1}^T ( \ell_t(\u^{\beta_0,I_0}_t) -\ell_t(\u_*)) + O(\ln T) \leq  O(\beta^{-1}_0 d\ln T) = O(d^2\ln T),
\end{align}
which implies the desired regret bound. Now, suppose that there exists $\beta \in \cG$ such that $\cJ^\beta\neq\emptyset$ and let $\beta_* \coloneqq \min \{\beta \in \cG\colon \cJ^\beta \neq \emptyset\}$ and $\tau \coloneqq \min \cJ^{\beta_*}$. By the definition of the base Algorithm \ref{alg:DONSbase}, we have $\u_{1}^{\beta_*, I_0}=\mathbf{1}/d$, and thus, 
\begin{align}
	2 \beta_0 =\frac{1}{2^3d}\leq \alpha_{1}^{\beta_*, I_0} \quad \text{and} 	\quad  \beta_*   \leq  \beta_0 \leq \alpha_{2}^{\beta_*, I_0},
\end{align}
where the last inequality follows by the fact that ${2 \beta_0}\leq \alpha_{1}^{\beta_*, I_0}$ and the bound on $\bar u_{1,i}/\bar u_{2,i}$ for $i\in[d]$ from Corollary \ref{cor:thecor}. Therefore, by definition of $\tau$ we must have that $\tau> 2$ and $\alpha_{\tau-1}^{\beta_*, I_0}\geq \beta_* > \alpha_{\tau}^{\beta_*,I_0}$. Thus, by invoking Proposition \ref{prop:mainprop} and Lemma~\ref{lem:epochregret} (in particular \eqref{eq:secondfirst}), we get
	\begin{align}
	\sum_{t=1}^{\tau-1}(\ell_t( \u_t) - \ell_t(\u_*)) &\leq 	\sum_{t=1}^{\tau-1}(\ell_t(\u^{\beta_*, I_0}_t) - \ell_t(\u_*)) + O(\ln T),\nn \\
	& \leq O \left(d^2 \ln^4 T \right) - \frac{204 d \ln^2 T}{\beta_*}. \label{eq:grenz}
\end{align} 
Now, let $\beta'\coloneqq \beta_*/2$ and $\cS = \mathcal{S}([\tau+1,T])\subset \cI$ denote any disjoint partition of $[\tau+1,T]$ that contains at most $2\log_2 T$ elements in $\cI$; this is guaranteed to exists by \cite[Lemma~5]{daniely2015strongly}.  The fact that $\mathcal{J}^{\beta'} = \emptyset$ together with Proposition \ref{prop:mainprop} and the second inequality in Lemma~\ref{lem:epochregret} implies
\begin{align}
	\sum_{t=\tau+1}^{T} (\ell_t( \u_t) - \ell_t(\u_*)) & \leq \sum_{I \in \cS}\sum_{t\in I}  (\ell_t( \u_t^{\beta', I}) - \ell_t(\u_*)) + O(|\cS|\ln T),\nn \\& \leq  O \left( |\cS| d^2 \ln^4 T \right) + \frac{34|\cS| d \ln T}{\beta'},	\nn \\
	& \leq O \left(d^2 \ln^5 T \right) + \frac{204 d \ln^2 T}{\beta_*},
\end{align}
where the last inequality, we used the fact that $\beta'=\beta_*/2$ and $|\cS|\leq 2 \ln_2 T\leq 3 \ln T$. Combining this with \eqref{eq:grenz} and using the fact that $\ell_\tau(\u_\tau)-\ell_\tau(\u_*)\leq \ln (dT)$ (since $\u_\tau\in \bar \cC_{d-1}$) completes the proof.
	\end{proof}

		\subsection{Proofs of Lemma \ref{lem:epochregret} and Corollary \ref{cor:thecor}}
	\label{sec:epochregret}

%	\begin{proof}{\bf of Corollary \ref{cor:thecor}.}
			\begin{proof}[{\bf Proof of Corollary \ref{cor:thecor}}]
		We start by the first inequality. By Lemma \ref{lem:close}, we have 
		\begin{align} 
			\|\w_t - \p_t\|_{\nabla^2 F_t(\w_t)}\leq 64 (e\eta)^{3/2} .\label{eq:field}
		\end{align} 
		Note also that for any $i \in[d-1]$, we have $\nabla^2 F_t(\w_t) \succeq \nabla^2 \Psi_t(\w_t) \succeq \e_i \e_i^\top/(\eta e w_{t,i}^{2})$, since $\eta_{t,i} \in [\eta ,\eta e]$. Therefore, \eqref{eq:field} implies that $ (w_{t,i}- p_{t,i})^2/(\eta e w_{t,i}^2) \leq 64^2 (e\eta)^3$. Rearranging this implies the bounds on ${\bar p_{t,i}}/{\bar w_{t,i}}$ for the case where $i \in[d-1]$. Also, we have that $\nabla^2 F_t(\w_t)\succeq  \nabla^2 \Psi_t(\w_t)  \succeq   \mathbf{1}\mathbf{1}^\top/(\eta e \bar w_{t,d}^{2})$. Therefore, \eqref{eq:field} implies that
		\begin{align}
			64^2 (e\eta)^3  \geq  (\inner{\w_{t}}{\mathbf{1}}- \inner{\p_t}{\mathbf{1}})^2/( \eta e \bar w_{t,d}^2) = (\bar w_{t,d} - \bar p_{t,d})^2/(\eta e \bar w_{t,d}^2). \nn
		\end{align}
		Rearranging this inequality yields the desired bounds on ${\bar p_{t,i}}/{\bar w_{t,i}}$ for $i=d$. We now bound $\bar w_{t+1,i}/\bar w_{t,i}$ for $i\in[d]$. By definition of $\w_{t+1}$, we have 
		\begin{align}
			\|\w_{t+1} - \w_t\|_{\nabla^2 \Psi_t(\w_t)} &\leq 	\|\w_{t+1} - \w_t\|_{(\nabla^2 \Psi_t(\w_t) + V_t)},\nn \\ & = \left\|\frac{(\nabla^2\Psi_t(\w_t)+V_t)^{-1}\bm{\nabla}_t}{1+4\sqrt{e\eta} \|\bm{\nabla}_t\|_{(\nabla^2\Psi_t( \w_t)+V_t)^{-1}}}  \right\|_{(\nabla^2 \Psi_t(\w_t) + V_t)}\nn \\ & = \frac{\|\bm{\nabla}_t\|_{(\nabla^2\Psi_t( \w_t)+V_t)^{-1}}}{1+4\sqrt{e\eta} \|\bm{\nabla}_t\|_{(\nabla^2\Psi_t( \w_t)+V_t)^{-1}}} \leq\frac{1}{4 \sqrt{e\eta}}.
		\end{align}
		Thus, since $\nabla^2 \Psi_t(\w_t)\succeq \e_i \e_i^\top/(\eta e \bar w_{t,i}^{2})$, for all $i \in [d]$, we get that (as we did above)
		\begin{align}
			\frac{(\bar w_{t+1,i} - \bar w_{t,i})^2}{\eta e\bar w^2_{t,i}} \leq \frac{1}{16 \eta e}, \quad \forall i \in[d].
		\end{align}
		Thus, after rearranging, we obtain the desired bounds on $\bar w_{t+1,i}/\bar w_{t,i}$. Finally, we have 
		\begin{align}
			\frac{	\bar u_{t,i}}{\bar u_{t+1,i}} = \frac{(1-1/T)	\bar w_{t,i} + 1/(dT) }{(1-1/T)\bar w_{t+1,i}  +1/(dT)} \leq 1 \vee \frac{\bar w_{t,i}}{\bar w_{t+1,i}} \leq \frac{4}{3}, \label{eq:charm}
		\end{align}
		where the last inequality follows by the fact that ${3}/{4} \leq 	{\bar w_{t+1,i}}/{\bar w_{t,i}}\leq {5}/{4}$.
	\end{proof}
	
%	\begin{proof}{\bf of Lemma \ref{lem:epochregret}.}
			\begin{proof}[{\bf Proof of Lemma \ref{lem:epochregret}}]
		Let $\u_*$ be as in \eqref{eq:minimizer}. We first show \eqref{eq:secondfirst}. Since $\beta\leq \alpha_{t}^{\beta, I}$, Lemma \ref{lem:handy} implies that $$\beta \leq 8^{-1} \wedge |8\inner{\g_s^{\beta, I}}{\u''_* -\u_{s}^{\beta, I}}|^{-1},$$ for all $s\in I \cap [t]$, where $\g_{s}^{\beta,I} \coloneqq \nabla \ell_s(\u_s^{\beta,I})$. Therefore, by the assumption that $t> \min I$ and Theorem \ref{lem:thekey0}, we have
		\begin{align}
			\sum_{s\in I \cap [t]} (\ell_s(\u^{\beta, I}_s) -  \ell_s(\u_{*}) \leq	O\left(\frac{d \ln T}{\eta} \right)   + \frac{34 d \ln T}{\beta} - \frac{a_t}{32 \eta \ln T}, \label{eq:hold for t-1}
		\end{align}
		where $a_t \coloneqq \max_{s\in I \cap [t]} \sum_{i=1}^d  {\bar u''_{*,i}}/{\bar u_{s,i}}$. Now the fact that $\beta >\alpha_{t+1}^{\beta, I}$ implies
		\begin{align}
			\frac{1}{\beta}
			&\leq  8 \vee \max_{s\in I \cap [t+1],i\in[d]}\frac{8\bar u''_{*,i}}{\bar u_{s,i}} \leq  8 a_{t+1} + 8 \leq 32 a_{t}/3 + 8, \label{eq:a_t-1_is_large}
		\end{align}
		where the last step follows by the bound on $\bar u_{t,i}/\bar u_{t+1,i}$ from Corollary \ref{cor:thecor}. Further, combining this with \eqref{eq:hold for t-1}, we get that 
		\begin{align*}
			\sum_{s\in I \cap [t]}^t (\ell_s(\u^{\beta, I}_s) - \ell_s(\u_*))
			&\leq \mathcal{O}\left(\frac{ d\ln T}{\eta}\right) + \frac{34 d\ln T}{\beta}-\frac{1}{32\eta \ln T}\left(\frac{3}{32\beta}-\frac{3}{4}\right), 
			\tag{by~\eqref{eq:hold for t-1} and~\eqref{eq:a_t-1_is_large}}\\
			&\leq \mathcal{O}\left( d^2\ln^4 T\right) - \frac{204 d\ln^2 T}{\beta}. 
		\end{align*}	
		% When comparing with a point $u\in \Delta_ d$, we use Lemma~\ref{lemma:uu'}. 
		where we used that $\eta  = 1/(286^2 d\ln^3 T)$. When, $\beta \leq \alpha_t^{\beta,I}$, for all $t\in I$, we can simply invoke \eqref{eq:hold for t-1} and discard the negative term to obtain the second claimed bound of the lemma.
	\end{proof}

	\section{Why the \adabar{} Restarts Work}
\label{sec:sketch}
Before discussing the \adabar{} restarts, we first give some more details on the regret bound of the base algorithm. In particular, we highlight that the RHS of \eqref{eq:cancelling} (which is a bound on the sum of divergences in the regret bound of $\mathsf{BARRONS}$) can cancel the bound $O(\beta^{-1} d\ln T)$ on the stability term as long as $\beta \geq  \alpha_T(\u)/2$ and $\eta \leq O(1/(d\ln T))$. This is because $\alpha_T(\u) \geq \frac{1}{8} \wedge \min_{i\in[d],t\in[T]}  \frac{p_{t,i}}{8u_{t,i}}$. In fact, by choosing $\eta$ small enough and as long as $\alpha_T(\u)/2\leq \beta \leq \alpha_T(\u)$ on can ensure that what remains in the regret bound of $\mathsf{BARRONS}$ is 
\begin{align}
	R_T(\u)\leq \wtilde O(d^2) - \frac{C d  \ln^2 T }{\beta},\label{eq:remains}
\end{align}
for some $C>0$. Before discussing how one might ensure that $\alpha_T(\u)/2\leq \beta \leq \alpha_T(\u)$, we reiterate that what enables the cancellation of the term $O(\beta^{-1} d\ln T)$ is the sum $\sum_{t=1}^T (D_{\Phi_t}(\u,\p_t)- D_{\Phi_{t}}(\u, \p_{t+1}))$ in \eqref{eq:decomp2} that comes out of the mirror descent analysis. This idea of canceling terms thanks to negative Bregman divergence terms originated from the work by \cite{agarwal2017corralling} on combining Bandit algorithms. \cite{foster2020adapting} recently showed that on can use FTRL with slightly modified gradients instead of mirror descent to achieve the same goal in a Bandit setting. The approach of the latter fails to lead to an efficient algorithm in our setting since it is not possible to show that the Newton decrement of the damped Newton step is small on rounds where the iterates $(u_{t,i})$ reach new minima. 

As we already discussed in \S\ref{sec:barrons}, since the sequence of returns $(\r_t)$ is not known up-front, it is not possible for any algorithm to pick $\beta$ so that the condition $\alpha_T(\u)/2\leq \beta \leq \alpha_T(\u)$ is always satisfied. Aggregating multiple instances of $\mathsf{BARRONS}$ with different $\beta$'s also fails since $\alpha_T(\u)$ depends on the outputs of the algorithm; and so changing $\beta$ changes the target $\alpha_T(\u)$ for the base algorithm (see also discussion in \cite{luo2018efficient}). Instead of aggregating base algorithms, the approach taken by \cite{luo2018efficient} consists of restarting the base algorithm on round $t$ if the current estimate for $\beta$ satisfies $\beta > \alpha_t(\u_t)$, where $\u_t$ is the regularized leader:
\begin{align}
	\u_t \in \argmin_{\u \in \bar \Delta_d}  \sum_{i=1}^d \frac{-\ln u_{i}}{\eta_{t,i}}  + \sum_{s=\tau}^t \ell_s(\u). \label{eq:FTRL0}
\end{align}
and $\tau$ is the round where the current instance of the base algorithm was initialized. By stability of the iterates $(\u_t)$ and $(\p_t)$ it is possible to show that $\alpha_{t-1}(\u_{t-1}) \geq  \beta \geq  \alpha_{t-1}(\u_{t-1})/2$, and so invoking the regret bound of the base algorithm for the current epoch yields $\sum_{s=\tau}^{t-1} (\ell_t(\p_t)-\ell_t(\u_t)) \leq O(d^2 \ln^2 T) - C \beta^{-1} d\ln T$. When there are no restarts, the regret bound of the algorithm is simply $\sum_{s=\tau}^{t-1} (\ell_t(\p_t)-\ell_t(\u_t)) \leq O(d^2 \ln T) + C \beta^{-1} d\ln^2 T$. Starting with $\beta=\beta_0$ and halving $\beta$ every time there is a restart and letting $\tau_i$ [resp.~$\beta_i$] be the start round [resp.~the $\beta$] of epoch $i$ and $M$ be the total number of epochs, the regret of the meta-algorithm is bound by 
\begin{align}
	R_T(\u) & \leq \sum_{i=1}^M|\ell_{\tau_i}(\p_{\tau_{i}}) -\ell_{\tau_i}(\u)|+\sum_{i=1}^M\sum_{t=\tau_i+1}^{\tau_{i+1}-1}(\ell_{t}(\p_t) - \ell_t(\u)),\nn \\ \displaybreak[1]
	& \leq \sum_{i =1}^M|\ell_{\tau_i}(\p_{\tau_{i}}) -\ell_{\tau_i}(\u)|+\sum_{i=1}^M\sum_{t=\tau_i+1}^{\tau_{i+1}-1}(\ell_t(\p_t) - \ell_t(\u_t)),\nn \\\displaybreak[1]
	& \label{eq:newoneone} \leq \sum_{i=1}^{M-1} \wtilde O(1) + \sum_{i=1}^{M-1} \left(\wtilde O(d^2) - \frac{Cd \ln^2 T}{\beta_i}\right) + \wtilde O(d^2)  +  \frac{Cd \ln^2 T}{\beta_M}, \\\displaybreak[1]
	& =  \wtilde O(d^2)  - \sum_{i=1}^{M-1}\frac{C   d \ln^2 T}{2^{1-i} \beta_0} +\frac{ C d  \ln^2 T}{2^{1-N}\beta_0} =  \wtilde O(d^2) + \frac{C d \ln^2 T}{\beta_0}, \label{eq:desired}
\end{align}
where \eqref{eq:newoneone} follows from the regret bound of the base algorithm and the fact that $\p_t,\u \in \bar \Delta_d$, for all $t$ (the latter ensures that $\ell_t(\p_t)- \ell_t(\u) \leq \wtilde O(1)$), and the last inequality follows by the fact that $M\leq O(\ln T)$ since $\inf_{\u\in \bar \Delta_d}\alpha_T(\u)\geq 1/(dT)$ (also because $(\p_t)\subset \bar \Delta_d$). By choosing $\beta_0 = \Omega(1)$, \eqref{eq:desired} leads to the desired regret bound.

\end{document}